\begin{document}

\begin{frontmatter}

\title{A Full DAG Score-Based Algorithm for Learning Causal Bayesian Networks with
  Latent Confounders}

\author{\fnms{Christophe}~\snm{Gonzales}}
\author{\fnms{Amir-Hosein}~\snm{Valizadeh}}
\address{Aix Marseille Univ, CNRS, LIS, Marseille, France\\
christophe.gonzales@lis-lab.fr, amir.valizadeh@lis-lab.fr}

\begin{abstract}
  Causal Bayesian networks (CBN) are popular graphical probabilistic models that
  encode causal relations among variables. Learning their graphical
  structure from observational data has received a lot of attention in the
  literature. When there exists no latent (unobserved) confounder, i.e., no
  unobserved direct common cause of some observed variables, learning
  algorithms can be divided essentially into two classes: constraint-based
  and score-based approaches. The latter are often thought to be more
  robust than the former and to produce better results. However, to the best
  of our knowledge, when variables are discrete, no score-based
  algorithm is capable of dealing with latent confounders. This paper
  introduces the first fully score-based structure learning algorithm
  searching the space of DAGs (directed acyclic graphs) that is capable 
  of identifying the presence of some latent confounders. It is justified
  mathematically and  experiments highlight its effectiveness.
\end{abstract}

\end{frontmatter}

\section{Introduction}\label{intro_sec}

Causal networks, a.k.a.\ causal Bayesian networks (CBN) \cite{pear09}, are
graphical probabilistic models that encode cause-and-effect
relationships. Like Bayesian networks (BN), they
are constituted by i)~a directed acyclic graph (DAG) whose nodes represent random
variables and whose edges encode their relationships; and
ii)~a set of conditional probability distributions of the nodes/random
variables given their parents in the graph. However, unlike BNs, the
semantics of the edges is not merely correlation but rather a causal relationship,
that is, an arc from $A$ to $B$ states that $A$ is a direct cause of $B$.
CBNs are important for Artificial Intelligence because they
enable to perform the same kind of reasoning as humans do, in particular
counterfactual reasoning (if I had done this, what would have
happened?).

Although it is well-known that learning the structure of CBNs
from only (observational) data is theoretically not always possible
\cite{pear09}, many algorithms have been proposed in the literature for
this purpose. When there exists no unmeasured {\em confounder}, i.e., no
unobserved direct common cause of some measured variables, they can be
essentially divided into two classes: 
constraint-based and score-based approaches. The former
\cite{spir-glym91,rams-spir-zhan06,colo-maat14,vern-sell-affe-sing-isam17}
rely on statistical conditional independence tests to uncover the
independence properties underlying the probability distribution that generated 
the data, thereby learning the graphical structure of the causal
model. These methods are often not able to uncover the whole
DAG of the CBN, so they provide weaker information in the form of a
Completed Partially Directed Acyclic Graph (CPDAG). In such a graph, only 
the directed edges represent ``true'' causal relations, the undirected ones
representing correlations, their causal direction remaining unknown.
On the other hand, score-based approaches
\cite{chic02,coop-hers92,beek-hoff15} identify 
the DAG of the CBN as the one maximizing some fitness criterion on the
data. They rely on either approximate or exact optimization techniques
to uncover the searched DAG. However the orientations of its arcs
may not always have a causal meaning. So this DAG is mapped into the CPDAG
of its Markov equivalence class, which is the best that can be extracted in
terms of causality from the data. Score-based approaches are usually
considered more robust than constraint-based approaches, notably because, in
the latter, errors in statistical tests can chain and decrease significantly
the quality of the resulting CPDAGs.

However, in most practical situations, some variables play
an important role in the causal mechanism and, yet, for different reasons,
they are not
or cannot be observed in the data. For instance, their measuring may be too
expensive or it would require unethical processes. Constraint-based methods have been
successfully extended to cope with such latent (unobserved) variables
\cite{colo-maat-kali-rich12,spir-glym-schei00,zhan08}. For
score-based algorithms, it is somewhat different: it is commonly admitted that
they are unable to cope with latent variables
because they rely on searching for DAGs and DAGs are inadequate in
the presence of latent variables. An extension of DAGs called
Maximal Ancestral Graphs (MAG) \cite{rich-spir02} has been introduced
precisely to fix this issue and
score-based approaches have been adapted to learn MAGs
\cite{rich-spir98,ogar-spir-rams16,trian-tsam16}. Unfortunately, currently,
they can only cope with scoring MAGs over continuous variables. Yet, this
is restrictive because there exist situations in which variables are
discrete by nature and cannot be meaningfully extended as continuous ones, e.g.,
non-ordinal variables such as colors, locations, types of
devices, {\em etc.}

In this paper, we address problems in which all the random variables are
discrete. In \cite{tian-pear02},it was shown that causal models with
arbitrary latent variables can always be converted into semi-Markovian
causal models (SMCM), i.e., models in which latent variables have no parent and
only two children, while preserving the same independence relations between
the observed variables. So, to deal with latent confounders, we
focus on learning SMCMs. More precisely, we show that, without any prior
knowledge about the latent confounders or their number, DAGs learnt from
observational data by latent confounders-unaware
score-based approaches encode sufficient information to recover
many latent confounders and their locations. Exploiting this property,
we provide and justify a structure learning algorithm that i)~only relies on scores;
ii)~uses only DAGs; and iii)~is capable of identifying some latent
confounders and their locations. 

The rest of the paper is organized as follows. Section~\ref{related_sec}
presents formally causal models and some algorithms for learning BN and/or
CBN structures from observational data that can cope with latent
confounders. Then, in Section~\ref{algo_sec}, we 
introduce our new algorithm and justify why it is capable of identifying
latent confounders. Its effectiveness is highlighted through experiments in
Section~\ref{expe_sec}. Finally a conclusion and some future
works are provided in Section~\ref{conclu_sec}.


\section{Causal Models and Structure Learning}\label{related_sec}

In the paper, bold letters represent sets. $\XXX$ denotes a set
{\em discrete} random variables. For a directed graph $\GGG$, 
$\Ch_{\GGG}(X)$ and $\Pa_{\GGG}(X)$ denote the set of children and parents of node
$X$ respectively, i.e., the set of nodes $Y$ such that there exists an arc
from $X$ to $Y$ and from $Y$ to $X$ respectively.  A causal model over $\XXX$ is
defined as follows \cite{pear09}: 

\begin{definition}\label{CBN_def}
  A causal model is a pair $(G,\mathbf{\Theta})$ where $G=(\XXX,\EEE)$ is a
  DAG\footnote{By abuse of notation, we use
    interchangeably $X \in \XXX$ to denote a node in the model and its
    corresponding random variable.} and $\EEE$ is a set of arcs. To each
  $X_i \in \XXX$ is assigned a random disturbance $\xi_i$.
  $\mathbf{\Theta} = \{f_i(\Pa_{\GGG}(X_i),\xi_i)\}_{X_i \in \XXX} \cup
  \{P(\xi_i) \}_{X_i \in \XXX}$, where $f_i$'s are
  functions assigned to $X_i$'s and $P$ are
  probability distributions over disturbances $\xi_i$.
\end{definition}

It is easy to see\footnote{See the supplementary material, Appendix~C, at
  the end of the paper.}
that a causal model can be represented equivalently by a Bayesian network -- BN
\cite{pear88}:

\begin{definition}\label{BN_def}
  A BN is a pair $(G,\mathbf{\Theta})$ where $G=(\XXX,\EEE)$ is a
  directed acyclic graph (DAG), $\XXX$ represents a set of random
  variables, $\EEE$ is a set of arcs, and $\mathbf{\Theta} =
  \{P(X|\Pa_{\GGG}(X))\}_{X \in \mathbf{\XXX}}$ 
  is the set of the conditional probability distributions (CPD) of the
  nodes / random variables $X$ in $\GGG$ given their parents $\Pa_{\GGG}(X)$ in
  $\GGG$. The BN encodes the joint probability over $\XXX$ as
  $P(\XXX) = \prod_{X \in \XXX} P(X|\Pa_{\GGG}(X))$.
\end{definition}

Causal models impose that the arcs are oriented in
the direction of causality, that is, an arc $X \rightarrow Y$ means that
$X$ is a direct cause of $Y$. In this case, the BN is called a CBN. In
general, BNs do not impose this restriction since
they only model probabilistic dependences. Hence a BN containing only Arc $X
\rightarrow Y$ is equivalent to one containing Arc $Y \rightarrow X$. More
precisely, the independence model of a BN is specified by the
$d$-separation criterion
\cite{pear88}:

\begin{definition}[Trails and $d$-separation]\label{dsep_def}
  Let $\GGG$ be a DAG. A trail $\CCC$ between nodes $X$
  and $Y$ is a sequence of nodes $\langle X_1=X,\ldots,X_k=Y \rangle$ such 
  that, for every $i \in \{1,\ldots,k-1\}$, $\GGG$ contains either Arc
  $X_i \rightarrow X_{i+1}$ or Arc $X_i \leftarrow X_{i+1}$.

  Let $\mathbf{Z}$ be a set of nodes disjoint from $\{X,Y\}$. $X$ and $Y$
  are said to be $d$-separated by $\mathbf{Z}$, which is denoted by
  $\condindepd{X}{Y}{\mathbf{Z}}{\GGG}$, if, for every trail $\CCC$
  between $X$ and $Y$, there exists a node $X_i\in \CCC$, $i \not\in \{1,k\}$,
  such that one of the 
  following two conditions holds: 
  \begin{enumerate}
  \item $\langle X_{i-1}, X_i, X_{i+1} \rangle$ is a collider,
    i.e., $\GGG$ contains Arcs $X_{i-1} \rightarrow X_{i}$ and $X_i
    \leftarrow X_{i+1}$. In addition, neither $X_i$ nor its descendants in $\GGG$
    belong to $\mathbf{Z}$. The descendants of a node are defined recursively
    as the union of its children and the descendants of these children.
  \item $\langle X_{i-1}, X_i, X_{i+1} \rangle$ is not a
    collider and $X_i$ belongs to $\mathbf{Z}$. 
  \end{enumerate}
  Such trails are called {\em blocked}, else they are {\em active}.
  Let $\mathbf{U}, \mathbf{V}, \mathbf{Z}$ be disjoint sets of nodes. Then
  $\mathbf{U}$ and $\mathbf{V}$ are $d$-separated by
  $\mathbf{Z}$ if and only if $X$ and $Y$ are $d$-separated by $\mathbf{Z}$
  for all $X \in \mathbf{U}$ and all $Y \in \mathbf{V}$.
\end{definition}

Let $\mathbf{U}, \mathbf{V}, \mathbf{Z} \subseteq \XXX$ be disjoint sets
and let $P$ be a probability distribution over $\XXX$. 
We denote by $\mathbf{U} \indep_P \mathbf{V} | \mathbf{Z}$ the
probabilistic conditional
independence of $\mathbf{U}$ and $\mathbf{V}$ given $\mathbf{Z}$. The
independence model of BNs is the following:
\begin{equation}\label{indep_eq}
  \condindepd{\mathbf{U}}{\mathbf{V}}{\mathbf{Z}}{\GGG} \Longrightarrow
  \mathbf{U} \indep_P \mathbf{V} | \mathbf{Z}.
\end{equation}
Two BNs with the same set of $d$-separation
properties therefore represent the same independence model.
Any graphical model satisfying Eq.~(\ref{indep_eq}) is called an I-map
({\em independence map}). The $d$-separation criterion implies
that two BNs represent the same
independence model if and only if they have the same {\em skeleton} and the
same set of {\em v-structures} \cite{verm-pear90}: the skeleton of a
directed graph $\GGG$ is the undirected graph obtained by removing all
the orientations from the arcs of $\GGG$; v-structures are 
colliders $\langle X_{i-1}, X_i, X_{i+1} \rangle$ such that $\GGG$ does not
contain any arc between $X_{i-1}$ and $X_{i+1}$. The directions of
the arcs in v-structures are therefore identical for all BNs that represent
the same sets of independences. When Eq.~(\ref{indep_eq}) is substituted
by:
\begin{equation}\label{pmap_eq}
  \condindepd{\mathbf{U}}{\mathbf{V}}{\mathbf{Z}}{\GGG} \Longleftrightarrow
  \mathbf{U} \indep_P \mathbf{V} | \mathbf{Z},
\end{equation}
then the graphical model is called a P-map ({\em perfect map}).

To learn the structure of a BN from observational data, it is usually
assumed that the distribution $P$ that generated the data has a P-map
(although there exist some papers that relax this assumption
\cite{rams-spir-zhan06,leme-mega-cart-liu12,mabr-gonz-jabe-choj14}).
Then, it is sufficient to learn the independence model of the data (the set
of conditional independences): each independence $X \indep_P Y |
\mathbf{Z}$ necessarily implies the lack of an arc between $X$ and
$Y$ in the BN. The (undirected) edges of the skeleton therefore correspond
to all the pairs of nodes $(X,Y)$ for which no conditional independence was
found. The set of v-structures $\langle X, Z, Y \rangle$ is the set of triples
$(X,Z,Y)$ such that i)~edges $X-Z$ and $Z-Y$ belong to the skeleton and
ii)~there exist sets $\mathbf{Z} \not\supseteq \{Z\}$ such that  
$X \indep_P Y | \mathbf{Z}$. The edges of the skeleton corresponding to
v-structures can be oriented accordingly. Now, to avoid creating
additional spurious v-structures or directed cycles (which are forbidden in
DAGs), some edges need necessarily be oriented in a given direction. These
are identified using Meek's rules \cite{meek95} and can be computed in
polynomial time \cite{chic95}. The graph resulting from
all these orientations is called a CPDAG ({\em Completed Partially Directed
  Acyclic Graph}). To complete the learning of the BN's structure, there
just remains to orient the remaining undirected edges. To do so, it is sufficient to
sequentially apply the following two operations until there
remains no undirected edge: i)~orient in any direction one (arbitrary) edge; and
ii)~apply Meek's rules. This is precisely what {\em constraint-based}
algorithms like PC \cite{spir-glym91}, PC-stable
\cite{colo-maat14}, CPC \cite{rams-spir-zhan06}, IC \cite{verm-pear90} or FCI
\cite{spir-glym-schei00} do, relying on statistical independence
tests. MIIC \cite{vern-sell-affe-sing-isam17} essentially performs the same
operations but exploiting multivariate information instead.

There exist many other algorithms, based notably on learning
Markov blankets \cite{tsam-alif-stat03} or on scoring DAGs
\cite{coop-hers92,heck-geig-chic95,chic02,beek-hoff15}. The key idea of
score-based approaches is to assign to each DAG a score representing its
fitness on the data. Under some assumptions, the one maximizing this
criterion is the one that generated the data. In the rest of the paper, we
exploit the BIC score \cite{schw78}:
\begin{displaymath}
  S(X|\mathbf{Z}) = \sum_{x \in \dom{X}} \sum_{\mathbf{z} \in \dom{\mathbf{Z}}}
  N_{x\mathbf{z}} \log\left(\frac{N_{x\mathbf{z}}}{N_{\mathbf{z}}}\right) -
  \frac{1}{2} \log(|\DDD|) dim(X|\mathbf{Z}),
\end{displaymath}
where $S(X|\mathbf{Z})$ denotes the score of node $X$ given parent set
$\mathbf{Z}$; $\dom{X}$ and $\dom{\mathbf{Z}}$ represent the domains of
$X$ and $\mathbf{Z}$ repectively; $N_{x\mathbf{z}}$ is the number of
records in Database $\DDD$ such that $X=x$ and $\mathbf{Z} = \mathbf{z}$;
$N_{\mathbf{z}} = \sum_{x \in \dom{X}} N_{x\mathbf{z}}$; and, finally,
$dim(X|\mathbf{Z})$ is the number of free parameters in the conditional
probability distribution $P(X|\mathbf{Z})$, i.e., 
$dim(X|\mathbf{Z}) = (|\dom{X}|-1) \times |\dom{\mathbf{Z}}|$. The term
$\frac{1}{2} \log(|\DDD|) dim(X|\mathbf{Z})$ is called the {\em penalty} of the score.

In the literature, v-structures are considered to represent causal
relations, that is, the directions of their arcs have a causal
meaning. This implies that, when there exist no latent confounder, CPDAGs
are precisely the best that can be extracted in terms of causality from the
data. So, all the aforementioned algorithms do actually learn CBNs when
they return CPDAGs instead of full BNs. 

In practice, for different reasons (ethics, price, immeasurability, {\em
  etc.}), it is often the case that some confounders cannot be observed.
Constraint-based approaches, notably FCI, have been extended to deal with such
situations. Instead of returning a CPDAG, they provide more informative 
graphs like Partial Ancestral Graphs (PAG) \cite{rich-spir98,spir-glym-schei00}.
PAGs are constituted by usual arcs ($\rightarrow$) but also labeled edges
(\mbox{$\circ\!\!-\!\!\circ$}, \mbox{$\circ\!\!\rightarrow$}) and bidirected arcs
($\leftrightarrow$). The latter indicate the presence of a confounder, that
is, $A \leftrightarrow B$ in a PAG means that there exists $A \leftarrow L
\rightarrow B$ in the generating DAG, with $L$ the confounder. The $\circ$
labels indicate that the learning algorithm is uncertain whether there
should be an arrow head or not, i.e., $\circ\!-$ is equivalent to either
$\leftarrow$ or $-$. For
continuous random variables, score-based approaches have been extended to
cope with latent confounders \cite{rich-spir98}. But, to our knowledge,
when variables are discrete, there exists no score-based approach
capable of dealing with latent confounders. One reason is that it is
believed that detecting confounders is 
impossible when searching with scores the space of DAGs. We show in the next
section that it is not the case.


\section{A New Full Score-based Causal Learning Algorithm}\label{algo_sec}

In the rest of the paper, $\XXX$ is
divided into $\XXX_O$ and $\XXX_H$, which represent sets of observed and
hidden (latent) variables respectively. We assume that there exists
an underlying probability distribution $P^*$ over $\XXX = \XXX_O \cup \XXX_H$ 
that generated a dataset $\DDD^*$. But, as the variables in
$\XXX_H$ are unobserved (they are the latent confounders), only the projection
$\DDD$ of $\DDD^*$ over $\XXX_O$, i.e., the dataset resulting from the
removal from $\DDD^*$ of all the values of the variables of $\XXX_H$, is
available for learning. In addition, we assume that $P^*$ is decomposable
according to some DAG $\GGG^*$. The goal is to recover $\GGG^*$.

The key idea of our algorithm is summarized on Figure~\ref{triangle_fig}:
the left side displays part of Graph $\GGG^*$, which contains a latent confounder
$L \in \XXX_H$ and represents the structure of the causal network that generated the data;
Graph $\GGG$ on the right should be the one learnt by a score-based algorithm.
Indeed, in $\GGG^*$, there exists no set $\mathbf{Z} \subseteq \XXX_O$ that
$d$-separates $A$ and $B$ (because Trail $\langle A,L,B \rangle$ is active).
Hence, provided $\GGG^*$ is a P-map, $A$ and $B$  should be dependent and a structure 
with an arc between $A$ and $B$ should have a higher score than one without.
Assume that this arc is $A \rightarrow B$. For the same reason, a structure
with an arc between $A$ and $C$ (resp. $B$ and $D$) should have a higher
score that one without. Now, given any $\mathbf{Z} \supseteq \{A\}$, node
$B$ is not $d$-separated from 
$C$ in $\GGG^*$ because Trail $\langle C,A,L,B \rangle$ is active.
But it would be on Fig.~\ref{triangle_fig}.b if there
were no arc between $B$ and $C$. This is the reason why score-based
algorithms tend to produce structures with such an arc and analyzing such
triangles $(A,B,C)$ in the learnt graph 
should provide some insight on the location of the latent confounders. This
intuition is confirmed by the next proposition.

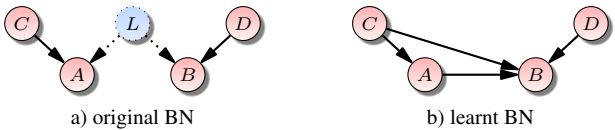
\begin{figure}[htb]
  \centerline{
    \begin{tikzpicture}[
      mynode/.style={circle, draw, font=\scriptsize,
        bottom color = red!5, top color = red!30,
        black, circular drop shadow , text = black, 
        inner sep=0.4mm, minimum size=4.5mm,
        node distance=3.5mm and 4mm},      
      mynode2/.style={circle, dotted, draw, font=\scriptsize,
        bottom color = BlockBodyColor!50, top color = BlockBodyColor,
        black, circular drop shadow , text = black, 
        inner sep=0.4mm, minimum size=4.5mm,
        node distance=3.5mm and 4mm},      
      mynode3/.style={circle, draw, font=\scriptsize,
        bottom color = white, top color = white,
        white, text = white, 
        inner sep=0.4mm, minimum size=4.5mm,
        node distance=3.5mm and 4mm},
      myedge/.style={thick, ->,
        >={Stealth[inset=0pt,length=8pt,angle'=28,round]}},
      myedge2/.style={myedge, dotted}
      ]
      \node[mynode] (C1) {$C$};
      \node[mynode, below right= of C1] (A1) {$A$};
      \node[mynode2, above right= of A1] (L1) {$L$};
      \node[mynode, below right= of L1] (B1) {$B$};
      \node[mynode, above right= of B1] (D1) {$D$};
      \draw[myedge] (C1) -- (A1);
      \draw[myedge] (D1) -- (B1);
      \draw[myedge2] (L1) -- (A1);
      \draw[myedge2] (L1) -- (B1);
      \node[below=8mm of L1] {\fontsize {8}{8pt}\selectfont a) original BN};
      
      \node[mynode, right=12mm of D1] (C2) {$C$};
      \node[mynode, below right= of C2] (A2) {$A$};
      \node[mynode3, above right= of A2] (L2) {$L$};
      \node[mynode, below right= of L2] (B2) {$B$};
      \node[mynode, above right= of B2] (D2) {$D$};
      \draw[myedge] (C2) -- (A2);
      \draw[myedge] (C2) -- (B2);
      \draw[myedge] (A2) -- (B2);
      \draw[myedge] (D2) -- (B2);
      \node[below=8mm of L2] {\fontsize {8}{8pt}\selectfont b) learnt BN};
    \end{tikzpicture} 
  }
  \caption{\label{triangle_fig}Triangles induced by latent confounders.}
\end{figure}

\begin{proposition}\label{prop_triangle}
  Assume that there exists a perfect map $\GGG^* = (\XXX,\EEE)$ for
  Distribution $P^*$ and that $\XXX_H$ is the set of latent confounders,
  i.e., all the nodes of $\XXX_H$ have no parent and exactly two children
  in $\GGG^*$, which both belong to $\XXX_O$.

  Let $L \in \XXX_H$ be any variable such that both of its children
  $A,B$ in $\GGG^*$ have at least one parent in $\XXX_O$. Then, if $\GGG$
  is a DAG maximizing the BIC score over $\DDD$, as $|\DDD| \rightarrow
  \infty$, $\GGG$ contains an arc between $A$ and $B$. Without loss of
  generality, assume this is Arc $A \rightarrow B$. Then,
  for every $C \in \Pa_{\GGG^*}(A) \cap \XXX_O$, $(A,B,C)$ is a clique in
  $\GGG$.
\end{proposition}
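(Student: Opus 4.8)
The plan is to derive everything from two facts and the combinatorics of $d$-separation. The first fact is asymptotic: by the consistency of the BIC score \cite{schw78,chic02}, any $\GGG$ maximizing BIC over $\DDD$ as $|\DDD|\to\infty$ is an I-map (Eq.~\ref{indep_eq}) of the marginal of $P^*$ over $\XXX_O$, since a DAG that is not an I-map would impose an independence violated by that marginal, could not represent it exactly, and would thus be dominated in likelihood and in BIC. I would use this only through its contrapositive form: if two nodes are non-adjacent in the I-map $\GGG$, they are $d$-separated in $\GGG$ by some $\mathbf{Z}\subseteq\XXX_O$ (e.g.\ the parent set in $\GGG$ of whichever endpoint comes later in a topological order), hence independent given that $\mathbf{Z}$; equivalently, any pair that is dependent given \emph{every} $\mathbf{Z}\subseteq\XXX_O$ must be adjacent in $\GGG$. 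The second fact is that $\GGG^*$ is a perfect map (Eq.~\ref{pmap_eq}), so any conditional (in)dependence among observed variables matches the corresponding $d$-separation statement in $\GGG^*$.

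Next I would dispatch the two easy adjacencies. For $A$ and $B$: in $\GGG^*$ the trail $\langle A,L,B\rangle$ is $A\leftarrow L\to B$, so $L$ is a non-collider on it; as $L\in\XXX_H$ it lies in no $\mathbf{Z}\subseteq\XXX_O$, the trail is therefore active for every such $\mathbf{Z}$, and $A,B$ are never $d$-separated in $\GGG^*$. By the perfect-map property $A\not\indep_{P^*}B\mid\mathbf{Z}$ for all $\mathbf{Z}\subseteq\XXX_O\setminus\{A,B\}$, whence $A,B$ are adjacent in $\GGG$; I then fix the orientation $A\to B$ as in the statement. The same argument, applied to the length-one trail $C\to A$ (which no conditioning set can block), shows that $A$ and any $C\in\Pa_{\GGG^*}(A)\cap\XXX_O$ are never $d$-separated in $\GGG^*$, hence dependent given every $\mathbf{Z}$, hence adjacent in $\GGG$.

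The crux is the $B$--$C$ edge, which I would prove by contradiction. If $B,C$ were non-adjacent in $\GGG$, the I-map property would give a set $\mathbf{Z}\subseteq\XXX_O\setminus\{B,C\}$ with $\condindepd{B}{C}{\mathbf{Z}}{\GGG}$. But $\GGG$ contains the edges $C$--$A$ and $A\to B$, so $\langle C,A,B\rangle$ is a trail of $\GGG$ on which $A$ is a non-collider (regardless of the orientation of the $C$--$A$ edge, since the $A$--$B$ edge leaves $A$); blocking it forces $A\in\mathbf{Z}$. Reading this same $\mathbf{Z}$ in $\GGG^*$ along the trail $\langle C,A,L,B\rangle=C\to A\leftarrow L\to B$, the collider $A$ is open because $A\in\mathbf{Z}$, while the non-collider $L$ is open because $L\notin\mathbf{Z}$; the trail is active, so $B,C$ are $d$-connected given $\mathbf{Z}$ in $\GGG^*$. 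The perfect-map property then yields $B\not\indep_{P^*}C\mid\mathbf{Z}$, contradicting $\condindepd{B}{C}{\mathbf{Z}}{\GGG}$ via the I-map property. Hence $B,C$ are adjacent and $(A,B,C)$ is a clique (the degenerate case $C=B$ being trivial).

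The main obstacle is precisely this last step, and its interest lies in a deliberate clash of roles for $A$: conditioning on $A$ is \emph{required} to block the observed trail $\langle C,A,B\rangle$ in $\GGG$, yet that very conditioning \emph{opens} the collider $C\to A\leftarrow L$ in $\GGG^*$ and so reactivates the hidden confounding path through $L$. Making this rigorous demands care on two points: invoking the correct asymptotic statement --- BIC-optimal DAGs are I-maps, and \emph{only} I-maps, of the marginal, so every adjacency claim must be phrased as ``dependent given every $\mathbf{Z}$'' rather than appealing to faithfulness of the observed marginal (which fails exactly because of $L$); and verifying that any $d$-separator of $B,C$ in $\GGG$ must contain $A$, which is where the chosen orientation $A\to B$ is essential.
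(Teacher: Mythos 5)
Your proof is correct, and it rests on the same central insight as the paper's: any set blocking the trail $C$--$A$--$B$ in the learnt graph $\GGG$ must contain $A$ (the arc $A \rightarrow B$ makes $A$ a non-collider on that trail, whatever the orientation of $C$--$A$), and conditioning on $A$ reopens the collider $C \rightarrow A \leftarrow L$ in $\GGG^*$, keeping the confounding trail $\langle C,A,L,B\rangle$ active --- whence the contradiction between the independence read off $\GGG$ and the dependence forced by the perfect-map property of $\GGG^*$. The routes differ in machinery. The paper proves a local lemma (Lemma~\ref{lemma_parent}) directly from the asymptotic expansion of the BIC score in terms of mutual information: if $\GGG$ has no arc between two nodes and no directed path from the second to the first, they are independent given the second node's parents in $\GGG$. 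Because that lemma only produces separators of the form $\Pa_{\GGG}(\cdot)$, the paper's $B$--$C$ step needs a two-case analysis on directed paths (no path $B$ to $C$, or no path $C$ to $B$), the second case requiring an acyclicity argument to force the orientation $A \rightarrow C$, so that the exhibited separator ($\Pa_{\GGG}(B)$ or $\Pa_{\GGG}(C)$) provably contains $A$. You instead invoke the global asymptotic consistency of BIC --- every BIC-optimal DAG is an I-map of the observed marginal, which is exactly the paper's Proposition~\ref{prop_imap} --- together with the standard fact that non-adjacent nodes of a DAG are $d$-separated by some set, and then argue that \emph{every} $d$-separator of $B$ and $C$ in $\GGG$ must contain $A$; this eliminates the case analysis and never requires knowing the orientation of the $A$--$C$ edge. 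What your version loses is self-containedness: the I-map property is cited with the usual likelihood-domination sketch rather than derived, whereas the paper grounds it in the same Lemma~\ref{lemma_parent} it uses throughout, so a fully rigorous write-up of your argument would still need that lemma or an equivalent consistency result. Your handling of the two side subtleties --- keeping all conditioning sets inside $\XXX_O$ so that independences under the marginal and under $P^*$ coincide, and refusing to assume faithfulness of the observed marginal (which indeed fails because of $L$) --- matches the paper's treatment.
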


\begin{xproof}
All the proofs are provided in the supplementary material, Appendix~A, at
the end of the paper.
\end{xproof}

Note that, in Proposition~\ref{prop_triangle}, both $A$ and $B$ have other parents
than $L$. This is important because, as shown in Proposition~\ref{prop_indistinguishable1}
below, whenever node $L$ is the only parent of $A$ (resp.\ $B$), no learning algorithm can
distinguish between a graph $\GGG^*$ with a latent confounder $L$ whose
children are $A$ and $B$ (Fig.~\ref{indistinguish_fig}.a \&
~\ref{indistinguish_fig}.b), and a graph $\GGG^*$ without latent confounder $L$
but with an arc $A \rightarrow B$ (resp.\ $B \rightarrow A$) (Fig.~\ref{indistinguish_fig}.c).
Similarly,  as shown in Proposition~\ref{prop_indistinguishable2}, it is
impossible to distinguish between a graph $\GGG^*$ containing both latent confounder
$L$ and Arc $A \rightarrow B$ (Fig.~\ref{indistinguish_fig}.d), and a graph
$\GGG^*$ without latent confounder $L$ but with both arcs $A \rightarrow B$ and
$C \rightarrow B$ (Fig.~\ref{indistinguish_fig}.e). 

\begin{figure}[htb]
  \centerline{
    \begin{tikzpicture}[
      mynode/.style={circle, draw, font=\scriptsize,
        bottom color = red!5, top color = red!30,
        black, circular drop shadow , text = black, 
        inner sep=0.4mm, minimum size=4.5mm,
        node distance=4mm and 3mm},      
      mynode2/.style={circle, dotted, draw, font=\scriptsize,
        bottom color = BlockBodyColor!50, top color = BlockBodyColor,
        black, circular drop shadow , text = black, 
        inner sep=0.4mm, minimum size=4.5mm,
        node distance=3.5mm and 4mm},      
      mynode3/.style={circle, draw, font=\scriptsize,
        bottom color = white, top color = white,
        white, text = white, 
        inner sep=0.4mm, minimum size=4.5mm,
        node distance=3.5mm and 4mm},
      myedge/.style={thick, ->,
        >={Stealth[inset=0pt,length=8pt,angle'=28,round]}},
      myedge2/.style={myedge, dotted}
      ]
      \node[mynode2] (L1) {$L$};
      \node[mynode, below left=  of L1] (A1) {$A$};
      \node[mynode, below right= of L1] (B1) {$B$};
      \node[mynode, above right= of B1] (D1) {$D$};
      \draw[myedge] (D1) -- (B1);
      \draw[myedge2] (L1) -- (A1);
      \draw[myedge2] (L1) -- (B1);
      \node[below=8mm of L1] {a) structure 1};

      \node[mynode2,right= 22mm of L1] (L2) {$L$};
      \node[mynode, below left=  of L2] (A2) {$A$};
      \node[mynode, below right= of L2] (B2) {$B$};
      \node[mynode, above right= of B2] (D2) {$D$};
      \draw[myedge] (D2) -- (B2);
      \draw[myedge2] (L2) -- (A2);
      \draw[myedge2] (L2) -- (B2);
       \draw[myedge] (A2) -- (B2);
      
      \node[below=8mm of L2] {b) structure 2};

      \node[mynode3,right= 22mm of L2] (L3) {$L$};
      \node[mynode, below left=  of L3] (A3) {$A$};
      \node[mynode, below right= of L3] (B3) {$B$};
      \node[mynode, above right= of B3] (D3) {$D$};
      \draw[myedge] (A3) -- (B3);
      \draw[myedge] (D3) -- (B3);
      \node[below=8mm of L3] {c) structure 3};
    \end{tikzpicture}}

  \centerline{
    \begin{tikzpicture}[
      mynode/.style={circle, draw, font=\scriptsize,
        bottom color = red!5, top color = red!30,
        black, circular drop shadow , text = black, 
        inner sep=0.4mm, minimum size=4.5mm,
        node distance=4mm and 3mm},      
      mynode2/.style={circle, dotted, draw, font=\scriptsize,
        bottom color = BlockBodyColor!50, top color = BlockBodyColor,
        black, circular drop shadow , text = black, 
        inner sep=0.4mm, minimum size=4.5mm,
        node distance=3.5mm and 4mm},      
      mynode3/.style={circle, draw, font=\scriptsize,
        bottom color = white, top color = white,
        white, text = white, 
        inner sep=0.4mm, minimum size=4.5mm,
        node distance=3.5mm and 4mm},
      myedge/.style={thick, ->,
        >={Stealth[inset=0pt,length=8pt,angle'=28,round]}},
      myedge2/.style={myedge, dotted}
      ]
      \node[mynode2] (L1) {$L$};
      \node[mynode, below left=  of L1] (A1) {$A$};
      \node[mynode, above left=  of A1] (C1) {$C$};
      \node[mynode, below right= of L1] (B1) {$B$};
      \node[mynode, above right= of B1] (D1) {$D$};
      \draw[myedge] (A1) -- (B1);
      \draw[myedge] (C1) -- (A1);
      \draw[myedge] (D1) -- (B1);
      \draw[myedge2] (L1) -- (A1);
      \draw[myedge2] (L1) -- (B1);
      \node[below=8mm of L1] {d) structure 4};

      \node[mynode3,right= 29mm of L1] (L2) {$L$};
      \node[mynode, below left=  of L2] (A2) {$A$};
      \node[mynode, above left=  of A2] (C2) {$C$};
      \node[mynode, below right= of L2] (B2) {$B$};
      \node[mynode, above right= of B2] (D2) {$D$};
      \draw[myedge] (A2) -- (B2);
      \draw[myedge] (C2) -- (A2);
      \draw[myedge] (C2) -- (B2);
      \draw[myedge] (D2) -- (B2);
      \node[below=8mm of L2] {e) structure 5};
    \end{tikzpicture}
  }
  \caption{\label{indistinguish_fig}Some indistingushable structures.}
\end{figure}
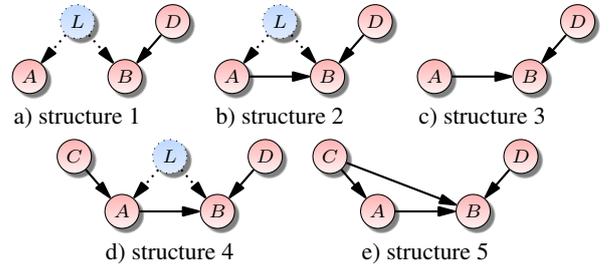

\begin{proposition}\label{prop_indistinguishable1}
  Let $\BBB^* = (\GGG^*,\Btheta)$ be a
  Bayesian network, with $\GGG^* = (\XXX,\EEE)$. Let $L \in \XXX_H$ be such that
  $\Ch_{\GGG^*}(L) = \{A, B\} \subseteq \XXX_O$ and $\Pa_{\GGG^*}(L) = \emptyset$. In
  addition, assume that $\Pa_{\GGG^*}(A) = \{L\}$. Finally, let $\GGG$ be the graph
  resulting from the removal of $L$ (and its outgoing arcs) from $\GGG^*$ and
  the addition of arc $A \rightarrow B$ (if $\GGG^*$ does not already contain
  it). Then, for any triple of disjoint subsets of variables
  $\mathbf{U},\mathbf{V},\mathbf{W}$ of $\XXX_O$, we have that:
  \begin{equation}
    \condindepd{\mathbf{U}}{\mathbf{V}}{\mathbf{W}}{\GGG}
    \Longleftrightarrow
    \condindepd{\mathbf{U}}{\mathbf{V}}{\mathbf{W}}{\GGG^*}.
  \end{equation}
\end{proposition}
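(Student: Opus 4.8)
The plan is to prove the equivalence first for individual pairs of nodes and then lift it to arbitrary sets. By Definition~\ref{dsep_def}, $\condindepd{\mathbf{U}}{\mathbf{V}}{\mathbf{W}}{\GGG}$ holds iff $\condindepd{X}{Y}{\mathbf{W}}{\GGG}$ holds for every $X \in \mathbf{U}$ and $Y \in \mathbf{V}$, and likewise for $\GGG^*$; so it suffices to fix $X, Y \in \XXX_O$ and $\mathbf{W} \subseteq \XXX_O$ with $X, Y \notin \mathbf{W}$, and to show that there is an active trail between $X$ and $Y$ given $\mathbf{W}$ in $\GGG$ iff there is one in $\GGG^*$. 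I would argue with this contrapositive (existence of active trails) throughout, since that is what trail surgery manipulates directly, and I would restrict to simple trails, which is without loss of generality for $d$-separation.

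First I would record the structural facts that drive everything. Because $\Pa_{\GGG^*}(L) = \emptyset$ and $\Ch_{\GGG^*}(L) = \{A, B\}$, node $L$ is adjacent only to $A$ and $B$, so any simple trail meeting $L$ does so through the segment $\langle A, L, B\rangle$, on which $L$ is a fork $A \leftarrow L \rightarrow B$ (a non-collider); moreover $L \notin \mathbf{W}$ since $\mathbf{W} \subseteq \XXX_O$, so $L$ never blocks. Because $\Pa_{\GGG^*}(A) = \{L\}$, deleting $L$ leaves $A$ with no parent in $\GGG$, so $A$ is a fork (never a collider) on every trail of $\GGG$, and in $\GGG^*$ as well $A$ cannot be a collider since it has a single parent. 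Finally I would check that the surgery (removing the arcs out of $L$ and adding $A \rightarrow B$) leaves the descendant set of every node other than $A$ unchanged, as $A$ is the only node whose sole former ancestor was the deleted $L$. These facts combine into the crucial observation that the single node whose descendant set changes is $A$, and $A$ is never a collider; hence the collider-blocking conditions of Definition~\ref{dsep_def} transfer verbatim between the two graphs.

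The core of the argument is a trail-surgery correspondence run in both directions. Given an active trail in $\GGG^*$ that avoids $L$: it is already a trail of $\GGG$, and I would verify node by node that every collider keeps its status and (by the descendant fact) its unblocking witness, while $A$ stays a non-collider outside $\mathbf{W}$. If instead the trail passes through $L$ via $\langle A, L, B\rangle$, I would splice out $L$ and replace the segment by the arc $A \rightarrow B$, checking that $A$ turns from a chain node into a fork (both non-colliders, requiring only $A \notin \mathbf{W}$, which activity already supplies) while the collider status and descendants of $B$, and of every other node, are preserved. Conversely, an active trail in $\GGG$ that avoids $A \rightarrow B$ transfers directly, and one that uses $A \rightarrow B$ is expanded back into $\langle A, L, B\rangle$, again with node-by-node preservation of activity. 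Both directions together give existence of an active trail in one graph iff in the other, hence the claimed $d$-separation equivalence.

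The step I expect to be delicate is the bookkeeping at the spliced endpoints $A$ and $B$, including the degenerate cases in which $A$ or $B$ is itself an endpoint of the trail, together with the verification that no collider's unblocking witness is lost or created. This is exactly where the hypothesis $\Pa_{\GGG^*}(A) = \{L\}$ is indispensable: it forces the replacement edge to point $A \rightarrow B$, so that $A$ can never become a collider, and it guarantees that $A$ is the unique node whose descendant set is altered, which is what makes the collider conditions invariant. Were $A$ to possess another parent, the marginalization of $L$ would require a bidirected edge rather than $A \rightarrow B$, and this simple DAG surgery would fail.
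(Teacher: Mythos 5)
Your proposal is correct and takes essentially the same route as the paper's own proof: reduce $d$-separation to pairs of nodes, establish that $A$ can never be a collider (in $\GGG^*$ because $\Pa_{\GGG^*}(A)=\{L\}$, in $\GGG$ because $A$ has no parent) and that the surgery changes the descendant set of no node other than $A$, and then map active trails between the two graphs by splicing the fork $A \leftarrow L \rightarrow B$ in and out of the arc $A \rightarrow B$. The paper performs exactly this trail surgery, with the same node-by-node bookkeeping at $A$, $B$ and $L$ that you identify as the delicate part.
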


\begin{proposition}\label{prop_indistinguishable2}
  Let $\BBB^* = (\GGG^*,\Btheta)$ be a Bayesian network such that $\GGG^* =
  (\XXX,\EEE)$ and Arc $A \rightarrow B$ belongs to $\EEE$. Let $L \in
  \XXX_H$ be such that $\Ch_{\GGG^*}(L) = \{A, B\} \subseteq \XXX_O$. 
  Finally, let $\GGG$ be the graph resulting from the removal of $L$ (and
  its outgoing arcs) from $\GGG^*$ and the addition the set of arcs $\{X
  \rightarrow B : X \in \Pa_{\GGG^*}(A) \backslash \{L\}\}$ (if $\GGG^*$
  does not already contain them). Then, for any triple of disjoint subsets
  of variables $\mathbf{U},\mathbf{V},\mathbf{W}$ of $\XXX_O$, we have that:
  \begin{equation}
    \condindepd{\mathbf{U}}{\mathbf{V}}{\mathbf{W}}{\GGG}
    \Longleftrightarrow
    \condindepd{\mathbf{U}}{\mathbf{V}}{\mathbf{W}}{\GGG^*}.
  \end{equation}
\end{proposition}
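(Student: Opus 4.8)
The plan is to show that $\GGG$ and $\GGG^*$ induce exactly the same $d$-separations over $\XXX_O$ by exhibiting a correspondence between their active trails. By Definition~\ref{dsep_def}, $\condindepd{\mathbf{U}}{\mathbf{V}}{\mathbf{W}}{\GGG}$ holds iff $\condindepd{S}{T}{\mathbf{W}}{\GGG}$ holds for every $S \in \mathbf{U}$ and $T \in \mathbf{V}$, and likewise for $\GGG^*$; since $\mathbf{U},\mathbf{V},\mathbf{W} \subseteq \XXX_O$ are the same sets on both sides, it suffices to prove the equivalence for single nodes $S,T \in \XXX_O$ and an arbitrary $\mathbf{W} \subseteq \XXX_O \backslash \{S,T\}$. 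Equivalently, negating, I will show that an \emph{active} trail between $S$ and $T$ given $\mathbf{W}$ exists in $\GGG^*$ if and only if one exists in $\GGG$. Write $\mathbf{P}_A = \Pa_{\GGG^*}(A) \backslash \{L\}$ and recall that $L$ is a parentless latent source with $\Ch_{\GGG^*}(L) = \{A,B\}$, that Arc $A \rightarrow B$ belongs to $\EEE$, and that $\GGG$ is obtained from $\GGG^*$ by deleting $L$ and adding the arcs $\{X \rightarrow B : X \in \mathbf{P}_A\}$.

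First I would record the structural lemma that the two graphs have identical descendant sets among observed nodes: for every $Z \in \XXX_O$, the observed descendants of $Z$ coincide in $\GGG$ and $\GGG^*$. Indeed, each added arc $X \rightarrow B$ (with $X \in \mathbf{P}_A$) is a shortcut along the already-present directed path $X \rightarrow A \rightarrow B$, so it creates no new ancestor/descendant relation; and $L$, being parentless, lies on no directed path between observed nodes, so its deletion removes none either. This invariance is the linchpin, because it guarantees that the collider blocking condition of Definition~\ref{dsep_def} (``$X_i$ or one of its descendants belongs to $\mathbf{W}$'') is evaluated identically in both graphs at every node that the two trails share.

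Next I would treat the two directions by local rerouting, the choice of reroute depending only on whether $A \in \mathbf{W}$. For the $\GGG^* \Rightarrow \GGG$ direction, take an active trail; if it avoids $L$ it uses only preserved arcs and stays active in $\GGG$ by the invariance above. If it passes through $L$, then (as $L$ has degree two) it does so along the segment $A \leftarrow L \rightarrow B$, at which $L$ is an always-open non-collider. When $A \notin \mathbf{W}$ I replace this segment by the arc $A \rightarrow B$, turning $A$ into an open non-collider while leaving the orientation seen at $B$ unchanged; when $A \in \mathbf{W}$, activity forces $A$ to be a collider, so its $A$-side trail-neighbour $P$ satisfies $P \rightarrow A$, hence $P \in \mathbf{P}_A$ and the added arc $P \rightarrow B$ lets me reroute $\cdots P \rightarrow B \cdots$, dropping $A$ and $L$. (Endpoints are unproblematic: if $A$ or $B$ is $S$ or $T$ then it is not tested as a collider and lies outside $\mathbf{W}$, so it falls in the $A \notin \mathbf{W}$ branch.) Symmetrically, for $\GGG \Rightarrow \GGG^*$, an active trail avoiding the new arcs transfers verbatim; a trail using a new arc $X_0 \rightarrow B$ (with $X_0 \in \mathbf{P}_A$) is expanded using $X_0 \rightarrow A \rightarrow B$ of $\GGG^*$ when $A \notin \mathbf{W}$ (a chain, open since $A \notin \mathbf{W}$) and $X_0 \rightarrow A \leftarrow L \rightarrow B$ when $A \in \mathbf{W}$ (a collider at $A$, opened by $A \in \mathbf{W}$, followed by the open non-collider $L$). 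In every case the orientations at the endpoints of the replaced piece are preserved, so the blocking status of the untouched nodes is unchanged and, by the descendant invariance, their collider conditions agree across the two graphs.

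The main obstacle I anticipate is not the case analysis itself but keeping the rerouted objects legitimate active trails. In the forward direction each reroute only deletes nodes, so simplicity is preserved; but in the backward direction the expansions insert $A$ (and possibly $L$), which may already lie on the trail and thus create repeated vertices. I would resolve this by first producing an active \emph{walk} in $\GGG^*$ via the local substitutions above and then invoking the standard fact that the existence of an active walk between two nodes given $\mathbf{W}$ implies the existence of an active \emph{simple} trail between them given $\mathbf{W}$ — a reduction itself justified by the same descendant bookkeeping. Combining the two directions yields the claimed equivalence of $d$-separations over $\XXX_O$, and hence the statement.
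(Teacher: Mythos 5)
Your proposal is correct and takes essentially the same route as the paper's proof: reduction to singleton endpoints, the key lemma that descendant sets are unchanged for every node other than $L$, and local surgery on trails — replacing $A \leftarrow L \rightarrow B$ by $A \rightarrow B$ or by a detour through a parent of $A$ in the forward direction, and expanding each added arc $X \rightarrow B$ into $X \rightarrow A \rightarrow B$ or $X \rightarrow A \leftarrow L \rightarrow B$ in the backward direction. The remaining differences are organizational — you case-split on $A \in \mathbf{W}$ where the paper splits on the orientation of the edges adjacent to $A$ (resp.\ on how many neighbours of $B$ lie in $\Pa_{\GGG}(A)$), and you invoke the active-walk-to-simple-trail reduction explicitly where the paper's walk-based definition of trails makes it implicit — though note that your descendant-invariance lemma should be stated for all $Z \in \XXX \backslash \{L\}$ rather than only for $Z \in \XXX_O$, since trails may traverse latent variables other than $L$ whose collider conditions must also be evaluated; your shortcut argument proves this stronger form verbatim.
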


In the two situations mentioned above, algorithms like FCI deal with
indistinguishability by not choosing a single structure to return but
rather by labelling arcs with $\circ$ to highlight the uncertainty about
the locations of the arrow heads and by asking the user to personally select
which structure seems the most appropriate. In a sense, this corresponds to
completing the structure learning with some user's expert knowledge.
Other algorithms like MIIC or our algorithm prefer to choose
which structure seems the best, hence relieving the user of such a burden.
The rule followed by our algorithm is to discard latent variables when
they are not absolutely necessary (i.e., in Fig.\ref{indistinguish_fig},
it selects only Graphs \ref{indistinguish_fig}.c \&
\ref{indistinguish_fig}.e). This criterion can be viewed as a simple 
Occam razor. So, without loss of generality, in the rest of the paper, we assume that
i)~there exists no arc between $A$ and $B$ in $\GGG^*$; and ii)~both $A$
and $B$ have other parents than $L$ in $\GGG^*$.

The rationale of our algorithm relies on first learning a
structure using any score-based algorithm and, second, examining the
triangles to discover the latent confounders. The learnt DAG is then updated
to take into account these confounders, mapping Fig.~\ref{triangle_fig}.b
into Fig.~\ref{triangle_fig}.a. Unfortunately,
the original network $\GGG^*$ may itself contain some triangles,
which we will call {\em genuine} triangles. So, we need to discriminate
them from the ones induced by the latent confounders, which we call
{\em latent} triangles. For this purpose, remark that, for every pair
$(X,Y)$ of variables of the genuine triangles, there exists no set
$\mathbf{Z} \subseteq \XXX_O \backslash \{X,Y\}$ such that 
$\condindepd{X}{Y}{\mathbf{Z}}{\GGG^*}$ because, $\GGG^*$ containing an arc
between $X$ and $Y$, Trail $\langle X,Y \rangle$ is active.
So, if $\GGG^*$ is a perfect map and Dataset $\DDD$ is sufficiently large, we should
not find any $\mathbf{Z} \subseteq \XXX_O \backslash \{X,Y\}$ such that
$X \indep_{P^*} Y | \mathbf{Z}$. Here, note that Dataset $\DDD$ being the projection of
$\DDD^*$ on $\XXX_O$, it is generated by a distribution $P$ over $\XXX_O$ defined
as the marginal of $P^*$ over $\XXX_O$, i.e., $P = \sum_{\XXX_H} P^*$.
But since $\XXX_O$ contains $\{X\}$, $\{Y\}$ and $\mathbf{Z}$, joint probability
$P(X,Y,\mathbf{Z}) = P^*(X,Y,\mathbf{Z})$, so that
$X \indep_{P} Y | \mathbf{Z}$ is equivalent to $X \indep_{P^*} Y | \mathbf{Z}$.
Overall, in Dataset $\DDD$, for genuine triangles, it should not be
possible to find any pair of variables $(X,Y)$ in the triangle such that there exists a set
$\mathbf{Z} \subseteq \XXX_O \backslash \{X,Y\}$ such that $X \indep_{P} Y | \mathbf{Z}$.

In latent triangles, for the same reason, this property holds for
pair $(A,C)$. For Pair $(A,B)$, it also holds because Trail $\langle A,L,B \rangle$
is active for all $\mathbf{Z} \subseteq \XXX_O \backslash \{A,B\}$
(see Fig.~\ref{triangle_fig}.a). 
On the contrary, for pair $(B,C)$ of
Fig.~\ref{triangle_fig}.b, there exist $d$-separating sets $\mathbf{Z}
\subseteq \XXX_O \backslash \{B,C\}$ in $\GGG^*$ because $\GGG^*$ does not
contain any arc between $B$ and $C$. Note that $A \not\in \mathbf{Z}$
otherwise Trail $\langle B,L,A,C \rangle$ of Fig.~\ref{triangle_fig}.a would be active and, so,
$\ncondindepd{B}{C}{\mathbf{Z}}{\GGG^*}$. So, in terms of
independences observable in Dataset $\DDD$, there exist $\mathbf{Z}
\subseteq \XXX_O \backslash \{A,B,C\}$ such $B \indep_{P} C | \mathbf{Z}$
and, moreover, $B \notindep_{P} C | \mathbf{Z} \cup \{A\}$. This suggests
the following property to discriminate between latent and genuine
triangles:

\begin{myrule}\label{rule1}
  Only latent triangles contain {\em exactly one pair} of nodes that
  are independent given some set $\mathbf{Z} \subseteq \XXX_O$ but are dependent given $\mathbf{Z}$
  union the third node of the triangle.
\end{myrule}

For the case of Fig.~\ref{triangle_fig}.a, any of the latent
triangles of Fig.~\ref{triangle2_fig} can be learnt by score-based
approaches because they encode exactly the same $d$-separation properties.
In Fig.~\ref{triangle2_fig}, the independent pair of nodes mentioned in
Rule~\ref{rule1} is $(B,C)$ for the three types. 
Triangles of types 2 and 3 share the fact that $\GGG^*$'s arc $A \rightarrow
C$ has been reversed. If $C$ had parents in $\GGG^*$, then these have
become its children in $\GGG$ to avoid creating new v-structures. So, when
triangles of types 2 or 3 are learnt, $C$ should not have many parents
other than $A$, which may not be the case for $B$. So, to minimize the
BIC score's penalty induced by the arc between $B$ and $C$, the latter will
most probably be oriented as $B \rightarrow C$. This intuition was
confirmed in experiments we conducted: Type 2 triangles only allowed us to
determine less than 0.5\% of the latent confounders\footnote{Note that,
  from an observational point of view, it is possible to discriminate
  between Type~2 and Type~3 triangles: in the former, $C$ is a parent of
  $B$ (as in the original graph $\GGG^*$). So, for parents $D$ of $B$ in
  $\GGG^*$, $\langle C, B, D \rangle$ is a v-structure in $\GGG$, which
  fits the independences observable in the dataset. On the contrary, in
  Type 3 triangles, the connection $\langle C, B, D \rangle$ is not a
  v-structure, which contradicts the observable independences. So, as shown
  in Figure~\ref{triangle3_fig}, the learning algorithm should add an arc
  between $D$ and $C$.}. 
So, to speed-up
our algorithm without decreasing significantly its effectiveness, 
we chose to exploit only Types 1 and 3 latent triangles. Note that this
makes our algorithm only approximate but, as shown in the experiments, it
still remains very effective. Rule~\ref{rule2}
summarizes the features of the above triangles:

\vspace{-0.5mm}

\begin{myrule}\label{rule2}
  Triangles of Type~1 and Type~3 are such that
  \raisebox{-1.1mm}{%
    \begin{tikzpicture}
      \node(C) {$C$};
      \node[right=3mm of C] (A) {$A$};
      \node[right=3mm of A] (B) {$B$};
      \draw (C) edge[->] (A);
      \draw (A) edge[->] (B);
      \draw (C) edge[->, bend left=20] (B);
      \useasboundingbox ($(C.south west) + (0,0.5mm)$) rectangle (B.north east);
    \end{tikzpicture}}
  and
  \raisebox{-1.1mm}{%
    \begin{tikzpicture}
      \node(C) {$A$};
      \node[right=3mm of C] (A) {$B$};
      \node[right=3mm of A] (B) {$C$};
      \draw (C) edge[->] (A);
      \draw (A) edge[->] (B);
      \draw (C) edge[->, bend left=20] (B);
      \useasboundingbox ($(C.south west) + (0,0.5mm)$) rectangle (B.north east);
    \end{tikzpicture}}
  respectively.

  For both types, there exists some set
  $\mathbf{Z} \subseteq \XXX_O \backslash \{A,B,C\}$ such that
  $B \indep_{P} C | \mathbf{Z}$ and
  $B \notindep_{P} C | \mathbf{Z} \cup \{A\}$; and there exists no 
  $\mathbf{Z} \subseteq \XXX_O \backslash \{A,B,C\}$ such that
  $A \indep_{P} B | \mathbf{Z}$ or $A \indep_{P} C | \mathbf{Z}$.
\end{myrule}

\begin{figure}[t]
  \centerline{
    \begin{tikzpicture}[
      mynode/.style={circle, draw, font=\scriptsize,
        bottom color = red!5, top color = red!30,
        black, circular drop shadow , text = black, 
        inner sep=0.4mm, minimum size=4.5mm,
        node distance=3.5mm and 2mm},      
      mynode2/.style={circle, dotted, draw, font=\scriptsize,
        bottom color = BlockBodyColor!50, top color = BlockBodyColor,
        black, circular drop shadow , text = black, 
        inner sep=0.4mm, minimum size=4.5mm,
        node distance=3.5mm and 2mm},      
      mynode3/.style={circle, draw, font=\scriptsize,
        bottom color = white, top color = white,
        white, text = white, 
        inner sep=0.4mm, minimum size=4.5mm,
        node distance=3.5mm and 2mm},
      myedge/.style={thick, ->,
        >={Stealth[inset=0pt,length=8pt,angle'=28,round]}},
      myedge2/.style={myedge, dotted}
      ]
      \node[mynode] (C1) {$C$};
      \node[mynode, below=3.5mm of C1, xshift=5mm] (A1) {$A$};
      \node[mynode3, above right= of A1, xshift=1mm] (L1) {$L$};
      \node[mynode, below right= of L1, xshift=2mm] (B1) {$B$};
      \draw[myedge] (C1) -- (A1);
      \draw[myedge] (C1) -- (B1);
      \draw[myedge] (A1) -- (B1);
      \node[below=8mm of L1] {a) Type 1};
   
      \node[mynode, right=21.5mm of C1] (C2) {$C$};
      \node[mynode, below=3.5mm of C2, xshift=5mm] (A2) {$A$};
      \node[mynode3, above right= of A2, xshift=1mm] (L2) {$L$};
      \node[mynode, below right= of L2, xshift=2mm] (B2) {$B$};
      \draw[myedge] (A2) -- (C2);
      \draw[myedge] (C2) -- (B2);
      \draw[myedge] (A2) -- (B2);
      \node[below=8mm of L2] {b) Type 2};

      \node[mynode, right=21.5mm of C2] (C3) {$C$};
      \node[mynode, below=3.5mm of C3, xshift=5mm] (A3) {$A$};
      \node[mynode3, above right= of A3, xshift=1mm] (L3) {$L$};
      \node[mynode, below right= of L3, xshift=2mm] (B3) {$B$};
      \draw[myedge] (A3) -- (C3);
      \draw[myedge] (B3) -- (C3);
      \draw[myedge] (A3) -- (B3);
      \node[below=8mm of L3] {c) Type 3};
    \end{tikzpicture}
  }
  \caption{\label{triangle2_fig}Possibly learnt latent triangles. For the
    three types, the independent pair of nodes mentioned in
    Rule~\ref{rule1} is $(B,C)$.}
\end{figure}
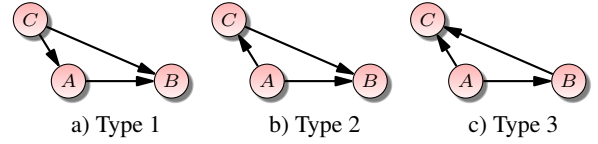

If score-based algorithms never made any mistake in their learning, we
could simply use Rule~\ref{rule1} to identify latent triangles and the
location of the latent confounders. However, in practice, similarly to
constraint-based methods, they do make
mistakes, which may induce our algorithm to incorrectly identify spurious
latent triangles. Fortunately, it is possible to increase the robustness of
the latent triangles identification by not only considering the dependences
between nodes $A$, $B$ and $C$ but also by taking into account those involving node
$D$, the parent of $B$ in $\GGG^*$ (See Fig.~\ref{triangle_fig}.a). 

The dotted arcs in Fig.~\ref{triangle3_fig} are those involving $D$ and
nodes $A,B,C$ that should be learnt by the score-based algorithm. In Type~1
triangles, there should therefore exist an arc $D \rightarrow B$, which can
be translated as ``node $B$ must have at least 3 parents in
$\GGG$''. Triangles of Type~1 are therefore considered as latent
only if they satisfy both this property and Rule~\ref{rule2}.
For Type~3 triangles, the situation is more complex because, in addition to Arc
$D \rightarrow B$, there should also exist an arc between $D$ and $C$
to account for $\xncondindepd{D}{C}{\{A,B\}}{\GGG^*}$. Note that Triangle
$(D,B,C)$ cannot be misinterpreted as latent because both pairs
$(B,C)$ and $(C,D)$ can be made independent given some $\mathbf{Z} \subseteq
\XXX_O$, hence ruling out Rule~\ref{rule1}. Indeed, according to
Fig.~\ref{triangle_fig}.a, $\condindepd{C}{D}{\mathbf{Z}}{\GGG^*}$ for any
$\mathbf{Z} \subseteq \XXX_O \backslash \{C,D\}$ equal to $\emptyset$, $\{A\}$ or $\{B\}$. 
Unfortunately, for the same reason that
Types~1, 2 and 3 triangles encode the same $d$-separation properties, the
orientations of the arcs of Triangle $(D,B,C)$ can be
reversed (provided they do not induce directed cycles). As a consequence,
in the learnt graph $\GGG$, node $D$ may be a child of $B$ rather than its
parent. At first sight, it is difficult to discriminate between the true children of $B$
in $\GGG^*$ and its true parents in which the arcs of Triangle $(D,B,C)$ have been
reversed. However, note that if $D$ were a true child of $B$ in
$\GGG^*$, $\condindepd{C}{D}{\mathbf{Z}}{\GGG^*}$ would hold for
$\mathbf{Z}$ only equal to $\emptyset$ or $\{B\}$ (see
Fig.~\ref{triangle_fig}.a). As a consequence, 
only the true parents $D$ of $B$ in $\GGG$ are such that 
$\condindepd{C}{D}{\mathbf{Z}}{\GGG^*}$ given
some set $\mathbf{Z} \subseteq \XXX_O \backslash \{C,D\}$ containing node $A$.
So Type~3 triangles are considered as latent only if they satisfy both
this property and Rule~\ref{rule2}. Overall, this 
results in Algorithm~\ref{algo1}.

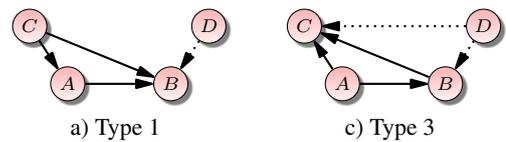
\begin{figure}[b]
  \centerline{
    \begin{tikzpicture}[
      mynode/.style={circle, draw, font=\scriptsize,
        bottom color = red!5, top color = red!30,
        black, circular drop shadow , text = black, 
        inner sep=0.4mm, minimum size=4.5mm,
        node distance=5mm and 2mm},      
      mynode2/.style={circle, dotted, draw, font=\scriptsize,
        bottom color = BlockBodyColor!50, top color = BlockBodyColor,
        black, circular drop shadow , text = black, 
        inner sep=0.4mm, minimum size=4.5mm,
        node distance=5mm and 2mm},      
      mynode3/.style={circle, draw, font=\scriptsize,
        bottom color = white, top color = white,
        white, text = white, 
        inner sep=0.4mm, minimum size=4.5mm,
        node distance=5mm and 2mm},
      myedge/.style={thick, ->,
        >={Stealth[inset=0pt,length=8pt,angle'=28,round]}},
      myedge2/.style={myedge, dotted}
      ]
      \node[mynode] (C1) {$C$};
      \node[mynode, below=3mm of C1, xshift=5mm] (A1) {$A$};
      \node[mynode3, above right= 3mm and 2mm of A1, xshift=1mm] (L1) {$L$};
      \node[mynode, below right= 3mm and 2mm of L1, xshift=2mm] (B1) {$B$};
      \node[mynode, above=3mm of B1, xshift=5mm] (D1) {$D$};
      \draw[myedge] (C1) -- (A1);
      \draw[myedge] (C1) -- (B1);
      \draw[myedge] (A1) -- (B1);
      \draw[myedge2] (D1) -- (B1);
      \node[below=7.5mm of L1] {a) Type 1};
   
      \node[mynode, right=31.5mm of C1] (C3) {$C$};
      \node[mynode, below=3mm of C3, xshift=5mm] (A3) {$A$};
      \node[mynode3, above right= 3mm and 2mm of A3, xshift=1mm] (L3) {$L$};
      \node[mynode, below right= 3mm and 2mm of L3, xshift=2mm] (B3) {$B$};
      \node[mynode, above=3mm of B3, xshift=5mm] (D3) {$D$};
      \draw[myedge] (A3) -- (C3);
      \draw[myedge] (B3) -- (C3);
      \draw[myedge] (A3) -- (B3);
      \draw[myedge2] (D3) -- (B3);
      \draw[myedge2] (D3) -- (C3);
      \node[below=7.5mm of L3] {c) Type 3};
    \end{tikzpicture} 
  }
  \caption{\label{triangle3_fig}Arcs learnt in the vicinity of latent triangles.}
\end{figure}

\begin{algorithm}[htb]
  \caption{\label{algo1}Learning with confounders.}
  \KwIn{Dataset $\DDD$}
  \KwOut{The CPDAG of the learnt CBN}
  $\GGG \leftarrow$ DAG learnt from $\DDD$ by a score-based algorithm\\
  $\mathbf{T} \leftarrow$ the triangles of $\GGG$ satisfying Rule~~\ref{rule2} \\
  \tcp{Get the latent triangles\rule{0pt}{3.5mm}}
  $\mathbf{T}_1 \leftarrow \emptyset$ \tcp{{\small latent triangles of type 1}}
  $\mathbf{T}_3 \leftarrow \emptyset$ \tcp{{\small latent triangles of type 3}} 
  \ForEach{triangle $T = (A,B,C)$ in $\mathbf{T}$}{
    \If{$T$ is of Type 1 and $|\Pa_{\GGG}(B)| \geq 3$}{
      $\mathbf{T}_1 \leftarrow \mathbf{T}_1 \cup T$
    }
    \ElseIf{$T$ is of Type 3 and there exists $D \in
      (\Pa_{\GGG}(B) \backslash \{A\}) \cup (\Ch_{\GGG}(B) \backslash \{C\})$
      such that there exists $\mathbf{Z} \subseteq \XXX_O
      \backslash \{C,D\}$ such that  
      $A \in \mathbf{Z}$ and $D \indep_{P} C | \mathbf{Z}$}{
      $\mathbf{T}_3 \leftarrow \mathbf{T}_3 \cup T$
    }
  }
  \tcp{Recreate the latent variables\rule{0pt}{3mm}}
  \ForEach{triangle $T = (A,B,C)$ in $\mathbf{T}_1 \cup \mathbf{T}_3$}{
    Add a new node $L$ (confounder) to $\GGG$ \\
    Remove from $\GGG$ Arc $A \rightarrow B$ \\
    Remove from $\GGG$ the arc between $B$ and $C$ \\
    Add arcs $L \rightarrow A$ and $L \rightarrow B$ to $\GGG$\\
  }
  $\MMM \leftarrow$ the CPDAG of $\GGG$\\
  
  \KwRet{CPDAG $\MMM$}
\end{algorithm}

Remark that, as mentioned previously, on Line~12, Algorithm~\ref{algo1}
is allowed to remove Arc $A \rightarrow B$ because, when this arc truly exists in
$\GGG^*$ (see Fig.~\ref{indistinguish_fig}.d), based on Dataset $\DDD$, it
is impossible for any learning algorithm to distinguish between $\GGG^*$ and
Fig.~\ref{indistinguish_fig}.e. In such a case, our algorithm deliberately
chooses to return the structure of Fig.~\ref{indistinguish_fig}.e in order
to enforce some Occam razor and to avoid requiring some expert
knowledge to select the right structure.

Rule~\ref{rule2} and Algorithm~\ref{algo1} require the determination of
some set $\mathbf{Z} \subseteq \XXX_O$ such that some pairs of nodes are
conditionally independent given $\mathbf{Z}$. Below, we suggest two
algorithms for this purpose.
The first algorithm to determine Set $\mathbf{Z}$
consists of exploiting the fact that, in I-maps, $d$-separation implies
conditional independences. Hence, applying a $d$-separation analysis on
$\GGG$ using, e.g., \citet{zand-lisk20}'s algorithm, it is possible to get
some $d$-separating set $\mathbf{Z}$ and, consequently, a set inducing a
conditional independence. The following proposition justifies that this
approach can be used\footnote{The asymptotic
  consistency of the BIC score was already known in other cases
  \cite{bouc93,koll-fried09}.}:

\begin{proposition}\label{prop_imap}
  Let $\DDD^*$ be a dataset generated by some distribution $P^*$ over
  $\XXX$ for which there exists a perfect map $\GGG^* = (\XXX,\EEE)$, and
  let $\DDD$ be the projection of $\DDD^*$ over $\XXX_O$. Then, as $|\DDD|
  \rightarrow \infty$, every DAG $\GGG$  maximizing the BIC score over
  $\DDD$ is a  minimal I-map.
\end{proposition}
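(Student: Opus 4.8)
The plan is to recognise this statement as the standard asymptotic consistency of the BIC score, now applied to the marginal distribution $P=\sum_{\XXX_H}P^*$ over $\XXX_O$ that generates $\DDD$. Note that the perfect-map hypothesis on $\GGG^*$ is only needed to guarantee that $P$ is a well-defined distribution; the argument itself does not require $P$ to admit a perfect map among DAGs over $\XXX_O$. I would split the proof into two claims: every BIC-maximiser $\GGG$ is an I-map of $P$, and every such I-map is moreover \emph{minimal}, i.e.\ no arc can be deleted while preserving the I-map property.

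For the first claim I would study the limit of the normalised score. Writing $S(\GGG)=\sum_{X\in\XXX_O}S(X\,|\,\Pa_\GGG(X))$ and dividing by $n=|\DDD|$, the penalty contributes only $O((\log n)/n)\to 0$, while by the law of large numbers $\frac1n\sum_{x,\mathbf z}N_{x\mathbf z}\log(N_{x\mathbf z}/N_{\mathbf z})\to\sum_{x,\mathbf z}P(x,\mathbf z)\log P(x\,|\,\mathbf z)=-H(X\,|\,\Pa_\GGG(X))$, the conditional Shannon entropy under $P$. Hence $\frac1n S(\GGG)\to-\sum_{X\in\XXX_O}H(X\,|\,\Pa_\GGG(X))$. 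The information-theoretic inequality $\sum_X H(X\,|\,\Pa_\GGG(X))\ge H(P)$, with $H(P)$ the joint entropy of $P$, holds for every DAG, with equality exactly when $P$ factorises along $\GGG$, that is, exactly when $\GGG$ is an I-map of $P$. A non-I-map therefore trails every I-map by a likelihood gap of order $\Theta(n)$, which the $O(\log n)$ penalty can never offset; so for $n$ large enough every maximiser is an I-map of $P$.

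For the second claim I would argue by contradiction. Suppose a maximiser $\GGG$ is an I-map but not minimal, so some arc $X\to Y$ can be deleted to obtain an I-map $\GGG'$; by definition of I-map and the local Markov property this forces $X\indep_P Y\,|\,\mathbf W$ with $\mathbf W=\Pa_\GGG(Y)\setminus\{X\}$. Only the term of node $Y$ changes, and the difference splits as $S(\GGG)-S(\GGG')=n\,\hat I(X;Y\,|\,\mathbf W)-\tfrac12\log(n)\,\bigl[dim(Y\,|\,\Pa_\GGG(Y))-dim(Y\,|\,\mathbf W)\bigr]$, where $\hat I$ is the empirical conditional mutual information and the bracketed dimension gap is a strictly positive integer. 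The first term is the likelihood advantage of the denser graph and the second is the penalty advantage of the sparser one, of order $\Theta(\log n)$.

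The crux, and the step I expect to be the main obstacle, is to bound the rate at which $\hat I(X;Y\,|\,\mathbf W)$ vanishes, so that it is dominated by the penalty. Since the true conditional independence gives $I(X;Y\,|\,\mathbf W)=0$ and mutual information is nonnegative, the population value sits at a minimum, so its first-order variation in the cell frequencies cancels; the empirical value is thus second order in the $O_p(n^{-1/2})$ fluctuations of the counts, giving $\hat I=O_p(n^{-1})$, hence $n\,\hat I=O_p(1)$ (equivalently, Wilks' theorem yields $2n\,\hat I\to\chi^2$). Consequently the likelihood advantage of $\GGG$ stays bounded in probability while the penalty advantage of $\GGG'$ grows like $\log n$, so $S(\GGG')>S(\GGG)$ eventually, contradicting the optimality of $\GGG$. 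No removable arc can therefore exist, and every BIC-maximiser is a minimal I-map of $P$, as claimed.
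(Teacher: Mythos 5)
Your proof is correct, and it reaches the conclusion by a route that differs meaningfully from the paper's in its first half. For the I-map part, the paper does not use your global entropy/factorization comparison; instead it invokes its Lemma~\ref{lemma_parent} (an arc-wise consequence of BIC maximality: any two non-adjacent nodes with no directed path between them are conditionally independent given the parent set), reads this as a local-Markov-type property, and then appeals to Pearl's Corollary~4 (p.~120) to conclude minimal I-mapness. Your KL/factorization argument --- non-I-maps pay a $\Theta(n)$ likelihood deficit relative to any I-map, which the $O(\log n)$ penalty spread cannot repair --- buys something real here: it yields global Markov (I-mapness) directly from factorization, whereas the paper's route passes through \emph{pairwise} independences of each node from its non-descendants, and promoting pairwise to joint independence (as the boundary-DAG/Pearl corollary formally requires) needs a composition-type property that the paper leaves implicit; the paper's route, in exchange, reuses a lemma that is the workhorse of its other propositions. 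For the minimality part the two arguments share the same core idea --- a removable arc forces a conditional independence, after which the likelihood advantage of the denser graph cannot offset the $\Theta(\log n)$ penalty saving --- but you are more careful than the paper on the one delicate point: the paper compares population mutual informations and treats the empirical likelihood difference as if it converged to the population value, whereas you explicitly bound the fluctuation term, noting that under the null the empirical conditional mutual information satisfies $n\,\hat I = O_p(1)$ (Wilks), which is exactly what is needed for the penalty to dominate. Your observation that the perfect-map hypothesis on $\GGG^*$ is not really used is also accurate: the paper's own proof (via Lemma~\ref{lemma_parent}) never invokes it either, only that $\DDD$ is generated by the marginal $P$ of $P^*$ over $\XXX_O$.
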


However, when the database is not ``too'' large and Graph $\GGG$ is not
highly trustworthy, another option is to exploit Algorithm~\ref{algo2} which
adds iteratively to $\mathbf{Z}$ the variable $X$ that allows to reduce the
most the dependence between the pair of nodes $U,V$ until an independence
between $U$ and $V$ is inferred or no independence can be proven. In essence, this is the
approach followed by MIIC \cite{vern-sell-affe-sing-isam17}, except that
MIIC estimates dependences through an information theoretic criterion whereas
we exploit the BIC score:

\begin{algorithm}[t]
  \caption{\label{algo2}Finding a set $\mathbf{Z}$
    s.t.\ $U \indep V | \mathbf{Z}$}
  \KwIn{nodes $U,V$, Dataset $\DDD$, max size $h$ of
    $\mathbf{Z}$, risk level $\alpha$,
    sets $\mathbf{C}$ and $\mathbf{F}$ of compulsory and forbidden
    variables, i.e.,  $\mathbf{Z}$ must satisfy $\mathbf{Z} \supseteq
    \mathbf{C}$ and $\mathbf{Z} \cap \mathbf{F} = \emptyset$}
  \KwOut{A set $\mathbf{Z}$ s.t.\ $U \indep V | \mathbf{Z}$ if such set is found, else False}
  $\mathbf{Z} \leftarrow \mathbf{C}$;
  $\mathbf{F} \leftarrow \mathbf{F} \cup \{U,V\}$\\
  $\delta \leftarrow (\dom{U}|-1) \times (|\dom{V}|-1) \prod_{X \in \mathbf{C}} |\dom{X}|$\\
  \lIf{$|\mathbf{Z}| > h$}{\KwRet{False}}
  \lIf{$f_{BIC}(U,V|\mathbf{Z}) < \chi^2_{\delta}(\alpha)$}{\KwRet{$\mathbf{Z}$}}
  \While{$|\mathbf{Z}| < h$}{
    \mbox{$X = \argmin \{f_{BIC}(U,V|\mathbf{Z} \cup \{Y\}) : Y\! \in \XXX_O \backslash
      (\mathbf{Z} \cup \mathbf{F})\}$}\\
    $\mathbf{Z} \leftarrow \mathbf{Z} \cup \{X\}$\\
    $\delta \leftarrow \delta \times |\dom{X}|$ \\
    \lIf{$f_{BIC}(U,V|\mathbf{Z}) < \chi^2_{\delta}(\alpha)$}{\KwRet{$\mathbf{Z}$}}
  }
  \KwRet{False}
\end{algorithm}

\begin{definition}\label{BIC_G2_def}
  For every pair of variables $U,V$ and every set of variables $\mathbf{Z}$,
  let $f_{BIC}(U,V|\mathbf{Z})$ be defined as:

  \vspace{-4mm}
  
  \begin{equation}\label{eq_g2}
    \!\!f_{BIC}(U,V|\mathbf{Z}) = 2 \times \left(S(U|V,\mathbf{Z}) - S(U|\mathbf{Z}) + \frac{1}{2} \log(|\DDD|)
      \delta\right)
  \end{equation}

  \vspace{-1mm}
  
  \noindent where $\delta = (|\dom{U}|-1) \times (|\dom{V}|-1) \times |\dom{\mathbf{Z}}|$ and
  $\dom{U}$ (resp.\ $\dom{V}$, $\dom{\mathbf{Z}}$) is the domain of
  variable $U$ (resp.\ $V$, $\mathbf{Z}$), and $S(\cdot|\cdot)$ is the BIC score.
\end{definition}

The following proposition justifies Algorithm~\ref{algo2}:

\begin{proposition}\label{BIC_chi2_prop}
  If $U$ and $V$ are independent given a set $\mathbf{Z}$, the formula
  of Eq.~(\ref{eq_g2}) follows a $\chi^2$ distribution of $\delta$ degrees of
  freedom. So, given a risk level $\alpha$,
  $U$ and $V$ are judged independent if the value of Eq.~(\ref{eq_g2}) 
  is lower than the critical value $\chi^2_{\delta}(\alpha)$ of the $\chi^2$ distribution.
\end{proposition}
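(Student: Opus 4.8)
The plan is to show that the BIC-based quantity $f_{BIC}(U,V|\mathbf{Z})$ is, after an exact cancellation of its penalty terms, identical to the classical likelihood-ratio (a.k.a.\ $G^2$) statistic for conditional independence, and then to invoke the standard asymptotic theory of that statistic.

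First I would expand the two BIC scores appearing in Eq.~(\ref{eq_g2}) using the definition of $S(\cdot|\cdot)$. The penalty of the first is $\frac{1}{2}\log(|\DDD|)\,dim(U|V,\mathbf{Z})$ and that of the second is $\frac{1}{2}\log(|\DDD|)\,dim(U|\mathbf{Z})$. Since $dim(U|V,\mathbf{Z}) = (|\dom{U}|-1)\,|\dom{V}|\,|\dom{\mathbf{Z}}|$ and $dim(U|\mathbf{Z}) = (|\dom{U}|-1)\,|\dom{\mathbf{Z}}|$, their difference equals $(|\dom{U}|-1)(|\dom{V}|-1)\,|\dom{\mathbf{Z}}| = \delta$. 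Hence the $+\frac{1}{2}\log(|\DDD|)\,\delta$ term inserted in Eq.~(\ref{eq_g2}) exactly cancels the leftover penalty, and $f_{BIC}$ reduces to twice the difference of the two empirical log-likelihood terms alone.

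Next I would rewrite that remaining difference in symmetric form. Using $N_{u\mathbf{z}} = \sum_{v} N_{uv\mathbf{z}}$ to replace $\sum_{u,\mathbf{z}} N_{u\mathbf{z}}\log(N_{u\mathbf{z}}/N_{\mathbf{z}})$ by $\sum_{u,v,\mathbf{z}} N_{uv\mathbf{z}}\log(N_{u\mathbf{z}}/N_{\mathbf{z}})$, the two sums combine into the single expression $f_{BIC}(U,V|\mathbf{Z}) = 2\sum_{u,v,\mathbf{z}} N_{uv\mathbf{z}}\log\!\bigl(N_{uv\mathbf{z}}N_{\mathbf{z}}/(N_{u\mathbf{z}}N_{v\mathbf{z}})\bigr)$. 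This is precisely the $G^2$ statistic, i.e.\ twice the log-likelihood ratio comparing, within each stratum $\mathbf{z}$, the saturated model against the model in which $U$ and $V$ are independent.

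Finally I would conclude by the classical asymptotic theory of likelihood-ratio tests: under the null hypothesis $U \indep_P V | \mathbf{Z}$, the unconstrained and the independence-constrained models differ in their number of free parameters by exactly $\delta$, so as $|\DDD| \rightarrow \infty$ the statistic converges in distribution to a $\chi^2$ with $\delta$ degrees of freedom. Equivalently, a stratum-by-stratum count yields $(|\dom{U}|-1)(|\dom{V}|-1)$ degrees of freedom for each value of $\mathbf{Z}$, hence $\delta$ in total. The decision rule of declaring independence whenever $f_{BIC}(U,V|\mathbf{Z}) < \chi^2_{\delta}(\alpha)$ is then just the usual level-$\alpha$ test. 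The only delicate point is the degrees-of-freedom bookkeeping—chiefly verifying that the penalty difference equals $\delta$ exactly—together with the standard regularity conditions (strictly positive cell probabilities, under which $G^2$ shares the $\chi^2$ limit of Pearson's statistic); the limiting distribution itself is a textbook consequence and is not the crux.
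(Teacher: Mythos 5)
Your proposal is correct and follows essentially the same route as the paper's proof: both cancel the penalty terms exactly (the paper's $\Delta_{dim}$ computation is your penalty-difference step), rewrite the remaining log-likelihood difference in symmetric form to obtain $2\sum_{u,v,\mathbf{z}} N_{uv\mathbf{z}}\log\bigl(N_{uv\mathbf{z}}N_{\mathbf{z}}/(N_{u\mathbf{z}}N_{v\mathbf{z}})\bigr)$, identify it as the classical $G^2$ statistic, and then invoke the standard result that $G^2$ is asymptotically $\chi^2$ with $\delta$ degrees of freedom under the null. Your additional remarks on degrees-of-freedom bookkeeping and regularity conditions only make explicit what the paper leaves as "well-known."
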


To conclude this section, we provide below the time complexity of
Algorithm~\ref{algo1}, assuming (as we did in our experiments) that the algorithm
used for determining the conditioning sets $\mathbf{Z}$ required in
Rule~\ref{rule2} and Algorithm~\ref{algo1} is Algorithm~\ref{algo2}:

\begin{proposition}\label{complex_prop}
  Assume that the existence of $d$-separating sets $\mathbf{Z}$ is checked
  with Algorithm~\ref{algo2} with $h$ the maximal size allowed for
  $\mathbf{Z}$ and $f_{BIC}$ defined as Eq.~(\ref{eq_g2}). Let $n$ and $m$
  denote the number of nodes 
  and arcs of $\GGG$ as defined on Line~1 respectively. Let $k$ be the maximum number of parents
  and children of the nodes in $\GGG$. Then the
  time complexity of Algorithm~\ref{algo1} over dataset $\DDD$ is $O(n^2k^3h|\DDD|+m\log n)$.
\end{proposition}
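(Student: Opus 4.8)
The plan is to treat the score-based learner on Line~1 as a black box that produces the input DAG $\GGG$ (its running time depends on the chosen learner and is not part of the stated bound) and to account only for the work done from Line~2 onward. The first step is to cost a single call to Algorithm~\ref{algo2}. Its while-loop runs at most $h$ times, since $|\mathbf{Z}|$ grows by one per iteration and the guard is $|\mathbf{Z}| < h$. Each iteration evaluates the minimizer of $f_{BIC}(U,V \mid \mathbf{Z}\cup\{Y\})$ over the $O(n)$ candidates $Y \in \XXX_O$. Evaluating one $f_{BIC}$ amounts, by Definition~\ref{BIC_G2_def}, to scoring two conditional distributions, for which a single scan of $\DDD$ suffices to accumulate the required contingency counts; treating the variable domain sizes as bounded constants, the occupied cells number at most $|\DDD|$, so one evaluation is $O(|\DDD|)$. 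Hence a single call to Algorithm~\ref{algo2}, and likewise any single conditional-independence test, costs $O(hn|\DDD|)$.

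Next I would bound the number of triangles of $\GGG$ and the cost of processing one. With $k$ bounding the degree (parents plus children) of every node, each node is the apex of at most $\binom{k}{2} = O(k^2)$ pairs of adjacent neighbours, so $\GGG$ has $O(nk^2)$ triangles, which can be enumerated with a sorted-adjacency structure in $O(nk^2)$ time. Determining whether a triangle $(A,B,C)$ satisfies Rule~\ref{rule2} (Line~2) uses a constant number of Algorithm~\ref{algo2} calls and tests -- searching a separator of $(B,C)$, confirming none exists for $(A,B)$ and $(A,C)$, and one dependence check for $\mathbf{Z}\cup\{A\}$ -- hence $O(hn|\DDD|)$ per triangle. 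The extra work is the Type~3 branch of Algorithm~\ref{algo1}: it ranges over the $O(k)$ candidates $D \in (\Pa_{\GGG}(B)\setminus\{A\}) \cup (\Ch_{\GGG}(B)\setminus\{C\})$ and runs Algorithm~\ref{algo2} (with $A$ forced into $\mathbf{Z}$) for each, costing $O(khn|\DDD|)$. Thus processing one triangle is $O(khn|\DDD|)$, and summing over the $O(nk^2)$ triangles yields the dominant term $O(n^2k^3h|\DDD|)$.

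It then remains to account for the work that does not touch $\DDD$: enumerating the triangles, rewriting arcs when a confounder is reinserted (constant work per latent triangle), and the concluding conversion of the DAG to its CPDAG. By \cite{chic95} the latter is polynomial, and with standard sorted-adjacency data structures all of these purely graph-theoretic operations fit within $O(m\log n)$; being independent of $|\DDD|$ they appear as a separate additive term (and the $O(nk^2)$ enumeration cost is absorbed into the dominant term). Combining the two contributions gives the claimed $O(n^2k^3h|\DDD| + m\log n)$.

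I expect the delicate points to lie entirely in the bookkeeping rather than in any deep argument. One must correctly collect the three powers of $k$: two from the $O(k^2)$ bound on adjacent-neighbour pairs per apex (hence on the triangle count) and the crucial third from the Type~3 loop over the $O(k)$ candidates $D$. One must also justify that a single $f_{BIC}$ evaluation is $O(|\DDD|)$ rather than $O(h|\DDD|)$, which rests on the convention that a data scan filling a table with at most $|\DDD|$ occupied cells is linear in $|\DDD|$. The least canonical part is attributing the additive $O(m\log n)$ to a concrete CPDAG-conversion routine and data structure; conceptually, however, it is merely a low-order, dataset-independent term, so any standard implementation suffices.
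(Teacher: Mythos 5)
Your proposal is correct and follows essentially the same route as the paper's proof: a single call to Algorithm~2 is bounded by $O(nh|\DDD|)$ (at most $h$ iterations, each scanning $O(n)$ candidates with an $O(|\DDD|)$ evaluation of $f_{BIC}$), the $O(nk^2)$ triangles each incur a constant number of such calls for Rule~2 and $O(k)$ further calls in the Type~3 branch, yielding the dominant $O(n^2k^3h|\DDD|)$ term, with the final CPDAG conversion contributing the additive $O(m\log n)$ via \cite{chic95}. The only cosmetic difference is that the paper makes the caching of independence results from Line~2 explicit, whereas you fold those checks into the per-triangle cost, which amounts to the same bound.
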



\captionsetup{aboveskip=2mm}

\section{Experiments}\label{expe_sec}

In this section, some experiments on classical benchmark CBNs (child ($|\XXX_O|=20$),
water ($|\XXX_O|=32$), insurance ($|\XXX_O|=27$), alarm ($|\XXX_O|=37$),
barley ($|\XXX_O|=48$)) from the BNLearn Bayes net 
repository\footnote{\url{https://www.bnlearn.com/bnrepository}} 
are performed to highlight the effectiveness of our algorithm to
find latent confounders and to recover CBN structures.

\begin{table*}[t]
  \centering
  \scalebox{.82}{
    \begin{tabular}{@{}c@{\ }*{17}{c}@{}}
      \hline
      \toprule
      & & \multicolumn{6}{c}{\textbf{Algorithm 1}} & \multicolumn{5}{c}{\textbf{MIIC}} & \multicolumn{5}{c}{\textbf{FCI}}\\
      \cmidrule(l){3-8} \cmidrule(l){9-13} \cmidrule(l){14-18}
      CBN
      &$|\DDD|$ &ok  &$\neg$ok  &prec.    &recall &F1         &time   &ok &$\neg$ok &prec.      &recall     &F1         &ok   &$\neg$ok &prec. &recall  &F1\\
      \midrule
      \multirow{5}{*}{child}
      &5000	&0.50	&{\bf0.06}  &{\bf0.89} &0.25  &0.39	  &0.014  &1.40       &0.74       &0.65	      &0.70	  &{\bf0.68}  &{\bf1.76}  & 3.96  &0.31	&{\bf0.88}  &0.46\\
      &10000	&0.76	&{\bf0.18}  &{\bf0.81} &0.38  &0.52	  &0.028  &1.58       &0.62       &0.72	      &0.79	  &{\bf0.75}  &{\bf1.86}  & 2.92  &0.39	&{\bf0.93}  &0.55\\
      &20000	&1.10	&{\bf0.12}  &{\bf0.90} &0.55  &{\bf0.68}  &0.050  &1.34       &0.80       &0.63	      &0.67	  &0.65	      &{\bf1.74}  & 2.12  &0.45	&{\bf0.87}  &0.59\\
      &50000	&1.42	&{\bf0.14}  &{\bf0.91} &0.71  &{\bf0.80}  &0.103  &1.40       &0.68       &0.67	      &0.70	  &0.69	      &{\bf1.74}  & 1.08  &0.62	&{\bf0.87}  &0.72\\
      &100000	&1.44	&{\bf0.14}  &{\bf0.91} &0.72  &{\bf0.80}  &0.190  &1.46       &0.56       &0.72	      &0.73	  &0.73       &{\bf1.72}  & 0.68  &0.72	&{\bf0.86}  &0.78\\
      \midrule                                          
      \multirow{5}{*}{water}                            
      &5000	&0.16	&{\bf1.56}  &0.09      &0.08  &0.09	  &0.017  &{\bf0.34}  &2.76       &{\bf0.11}  &{\bf0.17}  &{\bf0.13}  &{\bf0.34}  & 8.26  &0.04	&{\bf0.17}  &0.06\\
      &10000	&0.44	&{\bf0.04}  &{\bf0.92} &0.22  &0.35	  &0.043  &1.24       &1.68       &0.42	      &0.62	  &{\bf0.50}  &{\bf1.54}  & 5.40  &0.22	&{\bf0.77}  &0.34\\
      &20000	&0.80	&{\bf0.22}  &{\bf0.78} &0.40  &{\bf0.53}  &0.104  &1.40       &1.88       &0.43	      &0.70	  &{\bf0.53}  &{\bf1.60}  & 6.04  &0.21	&{\bf0.80}  &0.33\\
      &50000	&1.16	&{\bf0.60}  &{\bf0.66} &0.58  &{\bf0.62}  &0.132  &1.40       &1.92       &0.42       &0.70	  &0.53	      &{\bf1.56}  & 7.86  &0.17	&{\bf0.78}  &0.27\\
      &100000	&1.22	&{\bf1.04}  &{\bf0.54} &0.61  &{\bf0.57}  &0.248  &1.44       &2.78       &0.34	      &0.72	  &0.46       &{\bf1.52}  & 8.32  &0.15	&{\bf0.76}  &0.26\\
      \midrule                                          
      \multirow{5}{*}{\begin{tabular}{@{}c@{}}insu-\\rance\end{tabular}}
      &5000	&0.20	&{\bf0.20}  &{\bf0.50} &0.10  &0.17	  &0.042  &1.62       &4.04       &0.29	      &0.81	  &{\bf0.42}  &{\bf1.88}  & 8.86  &0.18	&{\bf0.94}  &0.30\\
      &10000	&0.30	&{\bf0.28}  &{\bf0.52} &0.15  &0.23	  &0.079  &1.48       &3.32       &0.31	      &0.74	  &{\bf0.44}  &{\bf1.96}  & 8.78  &0.18	&{\bf0.98}  &0.31\\
      &20000	&0.62	&{\bf0.30}  &{\bf0.67} &0.31  &0.42	  &0.172  &1.66       &3.88       &0.30	      &0.83	  &{\bf0.44}  &{\bf1.94}  & 8.44  &0.19	&{\bf0.97}  &0.31\\
      &50000	&1.16	&{\bf0.54}  &{\bf0.68} &0.58  &{\bf0.63}  &0.373  &1.78       &3.48       &0.34	      &0.89	  &0.49	      &{\bf1.90}  & 8.66  &0.18	&{\bf0.95}  &0.30\\
      &100000	&1.32	&{\bf0.72}  &{\bf0.65} &0.66  &{\bf0.65}  &0.791  &{\bf1.76}  &3.00       &0.37	      &{\bf0.88}  &0.52	      &{\bf1.76}  & 8.20  &0.18	&{\bf0.88}  &0.29\\
      \midrule                                          
      \multirow{5}{*}{alarm}                            
      &5000	&0.36	&{\bf0.40}  &{\bf0.47} &0.18  &0.26	  &0.058  &1.24       &1.48       &0.46	      &0.62	  &{\bf0.53}  &{\bf1.48}  & 4.74  &0.24	&{\bf0.74}  &0.36\\
      &10000	&0.86	&{\bf0.58}  &{\bf0.60} &0.43  &0.50	  &0.119  &1.50       &0.98       &{\bf0.60}  &0.75	  &{\bf0.67}  &{\bf1.56}  & 4.14  &0.27	&{\bf0.78}  &0.41\\
      &20000	&1.12	&0.66	    &0.63      &0.56  &0.59	  &0.180  &1.50       &{\bf0.52}  &{\bf0.74}  &0.75	  &{\bf0.75}  &{\bf1.72}  & 3.42  &0.33	&{\bf0.86}  &0.48\\
      &50000	&1.36	&1.16	    &0.54      &0.68  &0.60	  &0.385  &1.64       &{\bf1.04}  &{\bf0.61}  &0.82	  &{\bf0.70}  &{\bf1.68}  & 2.92  &0.37	&{\bf0.84}  &0.51\\
      &100000	&1.44	&1.18	    &0.55      &0.72  &0.62	  &0.744  &1.62       &{\bf0.76}  &{\bf0.68}  &0.81	  &{\bf0.74}  &{\bf1.70}  & 2.42  &0.41	&{\bf0.85}  &0.56\\
      \midrule                                          
      \multirow{5}{*}{barley}                           
      &5000	&0.02	&{\bf0.00}  &{\bf1.00} &0.01  &0.02	  &0.013  &0.60       &3.14       &0.16	      &0.30	  &{\bf0.21}  &{\bf1.58}  &20.70  &0.07	&{\bf0.79}  &0.13\\
      &10000	&0.08	&{\bf0.00}  &{\bf1.00} &0.04  &0.08	  &0.032  &0.74       &2.30       &0.24	      &0.37	  &{\bf0.29}  &{\bf1.78}  &19.72  &0.08	&{\bf0.89}  &0.15\\
      &20000	&0.36	&{\bf0.00}  &{\bf1.00} &0.18  &0.31	  &0.076  &1.02       &1.84       &0.36	      &0.51	  &{\bf0.42}  &{\bf1.86}  &16.56  &0.10	&{\bf0.93}  &0.18\\
      &50000	&0.74	&{\bf0.00}  &{\bf1.00} &0.37  &{\bf0.54}  &0.229  &1.34       &3.72       &0.26	      &0.67	  &0.38	      &{\bf1.86}  &14.86  &0.11	&{\bf0.93}  &0.20\\
      &100000	&0.96	&{\bf0.08}  &{\bf0.92} &0.48  &{\bf0.63}  &0.511  &1.36       &2.96       &0.31	      &0.68	  &0.43	      &{\bf1.84}  &13.70  &0.12	&{\bf0.92}  &0.21\\
 \bottomrule
  \end{tabular}}
  \caption{\label{v2l2_tab}Confounders found for different dataset sizes
    and CBNs with 2 Boolean confounders.}
\end{table*}

For each experiment,
a CBN is selected, and some new (latent) variables $L_i$ are added to it. To
make them confounders of a semi-Markovian causal model, for each $L_i$, two
nodes of $\XXX_O$ are randomly chosen to become $L_i$'s
children. To fit the propositions of the paper, Line~1 of
Algorithm~\ref{algo1} is performed using the CPBayes 
exact score-based learning algorithm \cite{beek-hoff15}. For this purpose,
Datasets $\DDD$ need to be converted into so-called {\em instances} that
are passed as input to CPBayes. These contain all the possible nodes'
sets that CPBayes will consider as potential parent sets. So, to control
the combinatorial explosion it has to face, CPBayes requires limiting the
number of possible 
parents of the nodes. 
In the experiments, we set this limit to 4 because i)~no node of $\XXX_O$ had more than
4 parents in the original CBN (the one without confounders); and ii)~this 
enabled to control the amount of computations performed by CPBayes (see \cite{beek-hoff15} for
more details). As a consequence, 
since the score-based algorithm may add 2 additional parents ($A$ and $C$)
to some $L_i$'s children ($B$), as shown in Fig.~\ref{triangle_fig}.b, all
the $L_i$'s children are selected randomly but with the constraint that
they have 1 or 2 parents in $\XXX_O$. This upper bound constraint is only due to our
use of CPBayes, not to Algorithm~\ref{algo1}. But the lower bound is due to
Proposition~\ref{prop_indistinguishable1}.

To discover that some nodes $A$ and $B$ are children of some confounder
$L_i$, all the learning algorithms rely in some way or another on the fact
that $A$ and $B$ are conditionnally dependent given any set $\mathbf{Z}$.
This imposes some restrictions on the way the conditional probability
distributions (CPD) of $L_i$ and its children should be generated. Actually,
assume that they are uniformly randomly generated. Then the cells of the joint
distribution $P$ of $A$, $B$ and their parents is a sample generated from a
uniform distribution. Testing the conditional (in)dependence of $A$ and $B$ 
given some sets $\mathbf{Z}$ strictly included in $A$ and $B$'s parents
amounts to marginalize out some variables from $P$ or, equivalently, to sum
some values of $P$. The sum of 2 independent variables uniformly
distributed follows a triangular distribution and, by the central limit
theorem, the sum of more than 2 variables tends to a variable normally
distributed. As such, the values of the marginals of $P$ have much more
chances to be located on the mode of the distribution than on the tails. In
other words, the cells of the marginals of $P$ tend to have more or less
the same values, which makes $A$ and $B$ appear to be independent, and no learning
algorithm can determine that they are the children of a confounder. This is
the reason why the CPDs of $A$, $B$ and $L_i$ need to be generated
differently.

In our experiments, the CPD of $L_i$ is set to a uniform distribution in
order to maximize the chances of $A$ and $B$ to be dependent. For $A$
(resp.\ $B$), for each value of its parents, the CPD of $A$ (resp.\ $B$) is set to 
a mixture of a Dirichlet distribution whose hyperparameters $\alpha_i$ are
all set to 4 and a Dirac distribution. The weight of the latter is selected
randomly between 2/3 and 5/6. Such mixtures tend experimentally to limit the
effect of the central limit theorem but, of course, do not discard it
completely. So, to check whether $A$ and $B$ have some chance to be
identified as dependent, we test whether the values of their mutual
information and their conditional mutual information given their
parents are higher than some thresholds (these are some extreme cases for the
$\mathbf{Z}$ sets mentioned in the preceding paragraph). If this is
the case, the CPDs of $A$ and $B$ are judged admissible for the
experiments. To make them 
realistic, the thresholds are defined as the averages of the mutual
information and condition mutual information respectively of all the pairs of nodes
with a common parent in the original CBN. The use of these
information-theoretic criteria has been made to favor algorithms like MIIC over our algorithm.

For each CBN with confounders created as defined above, a dataset $\DDD^*$
is randomly generated using the pyAgrum 1.13.0 library \cite{duca-gonz-wuil20} and
Dataset $\DDD$ which is given as input of the learning algorithms is the
projection of $\DDD^*$ over $\XXX_O$. Algorithm~\ref{algo2} is exploited
for independence testing and its risk level $\alpha$ and Size $h$
are set to 0.05 and 7 respectively\footnote{Sets $\mathbf{Z}$ may include other nodes
  than just the sets of parents of $U$ or $V$. Hence, for robustness, we
  allowed to almost double their maximal size $h$.}.
In the experiments, we compare 3
learning algorithms: Algorithm~\ref{algo1}, MIIC
\cite{vern-sell-affe-sing-isam17} and FCI \cite{spir-glym-schei00}. For
MIIC, we use the pyAgrum's implementation with the
NML correction to be more accurate. For FCI, we use the python causal-learn
0.1.3.8 package \cite{causallearn}. All the tables below display averages of the
results over 50 CBN/datasets. All the experiments are executed on an
Intel Xeon Gold 5218 CPU with 128GB of RAM.

\begin{table*}[t]
  \centering
  \scalebox{.82}{
    \begin{tabular}{@{}c@{\quad}cc@{\quad}*{4}{r@{\quad}}c@{\quad}*{4}{r@{\quad}}c@{\quad}*{4}{r@{\quad}}@{}}
      \hline
      \toprule
      & & \multicolumn{5}{c}{\textbf{Algorithm 1}} & \multicolumn{5}{c}{\textbf{MIIC}} & \multicolumn{5}{c}{\textbf{FCI}}\\
      \cmidrule(l){3-7} \cmidrule(l){8-12} \cmidrule(l){13-17}
      CBN&$|\DDD|$&ok&miss&rev.~&type&xs~~~&ok&miss&rev.~&type&xs~~~&ok&miss&rev.~&type&xs~~~\\
      \midrule
      \multirow{5}{*}{child}
      &5000	&{\bf22.46}  & 3.08	  &1.02	  &{\bf2.44}  &{\bf2.54}  &19.46       &{\bf 2.00}  &3.18  & 4.36	& 3.42	&15.00	     & 4.96	&{\bf0.22}  & 8.82      & 8.00\\
      &10000	&{\bf23.10}  & 2.52	  &1.52	  &{\bf1.86}  &{\bf3.12}  &20.72       &{\bf 1.18}  &3.28  & 3.82	& 3.42	&15.64	     & 3.40	&{\bf0.06}  & 9.90      & 6.00\\
      &20000	&{\bf22.88}  & 1.84	  &2.22	  &{\bf2.06}  &{\bf3.62}  &19.94       &{\bf 1.44}  &2.84  & 4.78	& 3.82	&16.70	     & 2.60	&{\bf0.00}  & 9.70      & 4.44\\
      &50000	&{\bf22.98}  & 1.22	  &2.84	  &{\bf1.96}  &4.34	  &20.36       &{\bf 1.30}  &2.98  & 4.36	& 3.74	&20.80	     & 1.54	&{\bf0.00}  & 6.66      &{\bf 2.54}\\
      &100000	&{\bf23.20}  &{\bf 1.14}  &2.88	  &{\bf1.78}  &4.50	  &21.28       &{\bf 1.14}  &3.04  & 3.54	& 3.72	&22.38	     & 1.22	&{\bf0.00}  & 5.40      &{\bf 1.88}\\
      \midrule
      \multirow{5}{*}{water}                                                                                                                                                               
      &5000	&{\bf16.72}  &{\bf40.18}  &4.72	  &8.38       &{\bf4.60}  &15.48       &40.20	    &7.26  &{\bf 7.06}	& 9.38	&13.46	     &45.54	&{\bf1.38}  & 9.62      &11.50\\
      &10000	&{\bf19.00}  &{\bf36.98}  &7.58	  &6.44	      &{\bf5.96}  &17.82       &37.78	    &8.04  &{\bf 6.36}  & 9.40	&17.90	     &43.02	&{\bf1.02}  & 8.06      &12.82\\
      &20000	&21.12	     &{\bf36.64}  &6.82	  &5.42	      &{\bf8.72}  &{\bf21.84}  &38.10	    &4.78  &{\bf 5.28}	&11.36	&17.76	     &43.76	&{\bf0.52}  & 7.96      &16.88\\
      &50000	&{\bf26.22}  &{\bf32.48}  &7.20	  &4.10	      &{\bf7.02}  &23.76       &34.70	    &6.12  &{\bf 5.42}	&10.58	&25.12	     &38.66	&{\bf0.76}  & 5.46      &16.42\\
      &100000	&{\bf29.94}  &{\bf29.42}  &7.10	  &{\bf3.54}  &{\bf7.06}  &27.44       &32.32	    &5.34  & 4.90	&12.54	&26.80	     &37.68	&{\bf0.52}  & 5.00      &17.26\\
      \midrule
      \multirow{5}{*}{insurance}                                                                                                                                                           
      &5000	&{\bf32.38}  &14.44	  &2.46	  &{\bf6.72}  &{\bf3.82}  &29.20       &{\bf12.70}  &3.60  &10.50	&12.44	&23.90	     &21.60	&{\bf0.46}  &10.04      &18.06\\
      &10000	&{\bf34.68}  &12.22	  &2.38	  &{\bf6.72}  &{\bf4.06}  &29.86       &{\bf11.70}  &3.52  &10.92	&11.46	&25.76	     &18.64	&{\bf0.30}  &11.30      &17.72\\
      &20000	&{\bf38.76}  &{\bf 9.20}  &3.34	  &{\bf4.70}  &{\bf5.04}  &30.42       &10.00	    &3.74  &11.84	&13.12	&27.98 	     &16.18	&{\bf0.14}  &11.70	&17.08\\
      &50000	&{\bf41.08}  &{\bf 7.58}  &4.42	  &{\bf2.92}  &{\bf6.42}  &31.32       & 8.34	    &4.00  &12.34	&12.94	&29.42	     &13.92	&{\bf0.10}  &12.56	&17.70\\
      &100000	&{\bf40.34}  &{\bf 7.08}  &4.94	  &{\bf3.64}  &{\bf7.48}  &31.46       & 7.88	    &3.88  &12.78	&13.40	&31.36	     &12.64	&{\bf0.12}  &11.88	&16.96\\
      \midrule
      \multirow{5}{*}{alarm}                                                                                                                                                          
      &5000	&38.10	     & 6.42	  &3.90	  &1.58	      &{\bf5.08}  &{\bf40.04}  &{\bf 5.54}  &3.10  &{\bf 1.32}	& 7.72	&37.34	     &10.36	&{\bf0.58}  & 1.72	&10.40\\
      &10000	&39.08	     & 4.52	  &4.96	  &1.44	      &{\bf5.88}  &{\bf40.92}  &{\bf 4.46}  &3.30  & 1.32	& 6.72	&39.48	     & 8.88	&{\bf0.46}  &{\bf 1.18}	& 9.22\\
      &20000	&40.64	     &{\bf 3.08}  &5.02	  &1.26	      &{\bf5.82}  &42.48       & 3.54	    &3.32  &{\bf 0.66}	& 5.92	&{\bf42.90}  & 6.06	&{\bf0.18}  & 0.86	& 7.60\\
      &50000	&40.28	     &{\bf 2.62}  &5.86	  &1.24	      &7.46	  &42.46       & 3.02	    &3.60  & 0.92	& 8.16	&{\bf43.44}  & 5.48	&{\bf0.18}  &{\bf 0.90}	&{\bf 6.58}\\
      &100000	&40.72	     &{\bf 1.84}  &6.02	  &1.42	      &7.72	  &43.04       & 2.44	    &3.66  &{\bf 0.86}	& 7.76	&{\bf44.66}  & 4.18	&{\bf0.10}  & 1.06	&{\bf 5.60}\\
      \midrule
      \multirow{5}{*}{barley}                                                                                                                                                          
      &5000	&41.46	     &34.66 	  &3.02	  &8.86	      &{\bf6.32}  &{\bf51.12}  &{\bf25.14}  &4.32  & 7.42	&13.74	&42.48	     &38.30	&{\bf2.08}  &{\bf 5.14}	&42.46\\
      &10000	&46.30	     &29.10	  &2.82	  &9.78	      &{\bf5.40}  &{\bf56.10}  &{\bf21.52}  &4.44  & 5.94	&13.34	&47.48	     &33.70	&{\bf1.38}  &{\bf 5.44} &40.72\\
      &20000	&52.08	     &22.70	  &3.74	  &9.48	      &{\bf4.64}  &{\bf60.26}  &{\bf17.90}  &4.80  &{\bf 5.04}	&13.44	&53.72	     &27.32	&{\bf1.68}  & 5.28	&34.70\\
      &50000	&57.74	     &17.96	  &4.28	  &8.02	      &{\bf4.54}  &{\bf67.24}  &{\bf12.96}  &4.42  &{\bf 3.38}	&20.72	&59.74	     &21.26	&{\bf1.34}  & 5.66	&31.02\\
      &100000	&62.52	     &14.80	  &5.00	  &5.68	      &{\bf4.48}  &{\bf69.72}  &{\bf12.16}  &3.96  &{\bf 2.16}	&21.76	&62.14	     &19.16	&{\bf1.02}  & 5.68	&29.30\\
      \bottomrule
    \end{tabular}}
  \caption{\label{dag_tab}Comparisons of the learnt CPDAGs with those of the CBNs (with confounders) that generated the datasets.}
\end{table*}

In Table~\ref{v2l2_tab}, to every original CBN, 2 confounders have been
added whose domain sizes are equal to 2. The table compares the performance of
Algorithm~\ref{algo1}, MIIC and FCI for different CBNs with different
dataset sizes. Columns ok and $\neg$ok contain the number of
correctly and wrongly identified confounders respectively. Column
``prec.'' displays the precision metrics, i.e., it is equal to ok / (ok + $\neg$ok). Column
``recall'' is the usual recall metrics, i.e., it is equal to ok / the number of
confounders $L_i$ added to the original CBN. The F1 score is defined as
$2 \times (\mbox{precision} \times \mbox{recall}) / (\mbox{precision} + \mbox{recall})$.
Column ``time'' reports the average computation times in seconds of Lines~2
to 16 of Algorithm~\ref{algo1} (finding the confounders). These lines increase only marginally the
structure learning computation times.

FCI finds most of the latent confounders. This is
the reason why, in Table~\ref{v2l2_tab}, it is the best in terms of {\em recall}. Unfortunately, it also 
misidentifies numerous variables as confounder's children. This is the
reason why its precision and F1 score are, by far, never the best. Here, we should
emphasize that we identify confounders with $\leftrightarrow$ connections, that is,
we do not take into account $\circ$ labels since those express an
uncertainty about the existence of arrow heads. Converting every $-\!\circ$
into $\rightarrow$ results in adding numerous false positive, which would
decrease significantly the quality of the solutions found by FCI.

MIIC identifies
fewer confounders but it also makes much fewer mistakes, which makes it
better in terms of F1 score. Algo.~\ref{algo1} makes even fewer
mistakes and this is the reason why its precision is almost always the best of all the
algorithms. Yet, it is somewhat cautious and tends to miss some
confounders, which explains why its {\em recall} is not the best. However,
the larger the size of the dataset, the higher the number of confounders
correctly identified by Algorithm~\ref{algo1}. This can also be
observed in terms of F1 score: for small datasets, MIIC outperforms
Algorithm~\ref{algo1} but this is the converse when $|\DDD|$
increases. This phenomenon illustrates empirically 
Proposition~\ref{prop_triangle}.

Table~\ref{dag_tab} reports how the CPDAGs learnt by Algorithm~\ref{algo1},
MIIC and FCI compare with those of the CBNs (with their confounders) that
were used to generate the datasets. Columns ``ok'' (resp.\ ``miss'')
indicate the number of arcs and edges that were learnt correctly (resp.\
that existed in the generating CBN but for which no arc nor edge was learnt).
As can be observed, Algorithm~\ref{algo1} and MIIC are the best for these
metrics and their results are quite comparable. 
Of course, for these metrics, the larger the dataset, the
better the quality of the learnt CPDAGs. Column ``rev.'' displays the number
of arcs in the CPDAG of the generating CBN that were learnt in the opposite
direction, i.e., the direction of the causality is incorrectly learnt.
For this metrics, FCI always outperforms the other algorithms. Note, however,
that the number of reversed arcs is always very small for all the CBNs and
all dataset sizes. Column ``type'' refers to the number of arcs (resp.\
edges) that were learnt as edges (resp.\ arcs), i.e., their types (directed,
undirected) are incorrect. For this metrics, Algorithm~\ref{algo1} and MIIC
usually outperform FCI and, in general, when Algorithm~\ref{algo1}
outperforms MIIC, the difference is bigger than when this is the
converse. Finally, Column ``xs'' reports the number of arcs or edges that
belong to the learnt CPDAG but whose extremal nodes are linked neither by
an arc nor by an edge in the generating CBN. Here, Algorithm~\ref{algo1}
significantly outperforms both MIIC and FCI. This means that
Algorithm~\ref{algo1} is less prone to learn spurious direct
causes. Overall, empirically, Algorithm~\ref{algo1} is very competitive and often
produces CPDAGs closer to those of the original CBNs 
than the other two methods


\section{Conclusion and Perspectives}\label{conclu_sec}

In this paper, we have introduced the first score-based CBN structure learning
algorithm for discrete variables that searches only the space of DAGs,
exploits only observational data and, yet, is capable of identifying
some latent confounders. It has been justified mathematically, notably through
Proposition~\ref{prop_triangle}. In addition, theoretically, it is asymptotically
guaranteed to produce an I-map.
Experiments highlighted it effectiveness, especially for large
datasets. Notably, both in terms of the CPDAGs and the latent confounders found,
the results of this algorithm may be judged as very
competitive compared to those of its constraint-based competitors like MIIC or
FCI.

For future works, to be more scalable, we plan to substitute the use of the
CPBayes on Line~1 of Algorithm~\ref{algo1} by a faster approximate
algorithm like greedy hill climbing (GHC). Usually, GHC-like methods
make more mistakes in the directions of the causal arcs learned than
CPBayes. So, to compensate for this issue, the rules used in Lines~6
and 8 may certainly have to be improved. Notably, in
Algorithm~\ref{algo1}, we did not take into account Type~2 triangles (see
Figure~\ref{triangle2_fig}.b) because, empirically, they were very seldom
encountered in the experiments. However, with GHC-like algorithms, this may
not be the case anymore and they should be taken into account.

Perhaps a more immediate improvement could be made by observing that,
whenever Type~3 triangles are found, their $A \rightarrow C$ connection
is in the wrong direction (see Figure~\ref{triangle2_fig}.b). Therefore, to
produce better CPDAGs, Algorithm~\ref{algo1} should reverse this arc and relearn the
neighborhood of node $C$. In addition, as shown in Propositions~\ref{prop_indistinguishable1}
and \ref{prop_indistinguishable2}, some structures are indistinguishable
from the observational data point of view and, among all the structures of
Fig.~\ref{indistinguish_fig},  Algorithm~\ref{algo1} makes
the decision to select only those of Fig.~\ref{indistinguish_fig}.c and
\ref{indistinguish_fig}.e. So, in the same spirit as PAGs, the output of
Algorithm~\ref{algo1} could be improved to express the uncertainty about
which of these structures should be selected. Note that, in this case, PAG's
$\circ$ labels are not sufficient since the uncertainty not only concerns
the location of arrow heads but also the very existence of some arcs (like
$C \rightarrow B$ of Fig.~\ref{indistinguish_fig}.e) for which no
independence test can detect that they can be dispensed with.

\newpage


\bibliography{article}

\begin{thebibliography}{32}
\providecommand{\natexlab}[1]{#1}
\providecommand{\url}[1]{\texttt{#1}}
\expandafter\ifx\csname urlstyle\endcsname\relax
  \providecommand{\doi}[1]{doi: #1}\else
  \providecommand{\doi}{doi: \begingroup \urlstyle{rm}\Url}\fi

\bibitem[Bouckaert(1993)]{bouc93}
R.~R. Bouckaert.
\newblock Probabilistic network construction using the minimum description
  length principle.
\newblock In \emph{Proc. of the European conference on symbolic and
  quantitative approaches to reasoning and uncertainty (ECSQARU'93)}, pages
  41--48, 1993.

\bibitem[Chickering(1995)]{chic95}
D.~Chickering.
\newblock A transformational characterization of equivalent {B}ayesian network
  structures.
\newblock In \emph{Proc. of UAI}, pages 87--98, 1995.

\bibitem[Chickering(2002)]{chic02}
M.~Chickering.
\newblock Optimal structure identification with greedy search.
\newblock \emph{Journal of Machine Learning Research}, 3:\penalty0 507--554,
  2002.

\bibitem[Colombo and Maathuis(2014)]{colo-maat14}
D.~Colombo and M.~Maathuis.
\newblock Order-independent constraint-based causal structure learning.
\newblock \emph{Journal of Machine Learning Research}, 15:\penalty0 3921--3962,
  2014.

\bibitem[Colombo et~al.(2012)Colombo, Maathuis, Kalisch, and
  Richardson]{colo-maat-kali-rich12}
D.~Colombo, M.~Maathuis, M.~Kalisch, and T.~Richardson.
\newblock Learning high-dimensional directed acyclic graphs with latent and
  selection variables.
\newblock \emph{The Annals of Statistics}, 40\penalty0 (1):\penalty0 294--321,
  2012.

\bibitem[Cooper and Herskovits(1992)]{coop-hers92}
G.~Cooper and E.~Herskovits.
\newblock A {B}ayesian method for the induction of probabilistic networks form
  data.
\newblock \emph{Machine Learning}, 9\penalty0 (4):\penalty0 309--347, 1992.

\bibitem[Ducamp et~al.(2020)Ducamp, Gonzales, and Wuillemin]{duca-gonz-wuil20}
G.~Ducamp, C.~Gonzales, and P.-H. Wuillemin.
\newblock {aGrUM/pyAgrum}: a toolbox to build models and algorithms for
  probabilistic graphical models in python.
\newblock In \emph{Proc. of the International Conference on Probabilistic
  Graphical Models (PGM'20)}, pages 609--612, 2020.

\bibitem[Heckerman et~al.(1995)Heckerman, Geiger, and
  Chickering]{heck-geig-chic95}
D.~Heckerman, D.~Geiger, and D.~Chickering.
\newblock Learning {B}ayesian networks: The combination of knowledge and
  statistical data.
\newblock \emph{Machine Learning}, 20:\penalty0 197--243, 1995.

\bibitem[Koller and Friedman(2009)]{koll-fried09}
D.~Koller and N.~Friedman.
\newblock \emph{Probabilistic Graphical Models: Principles and Techniques}.
\newblock MIT Press, 2009.

\bibitem[Lemeire et~al.(2012)Lemeire, Meganck, Cartella, and
  Liu]{leme-mega-cart-liu12}
J.~Lemeire, S.~Meganck, F.~Cartella, and T.~Liu.
\newblock Conservative independence-based causal structure learning in absence
  of adjacency faithfulness.
\newblock \emph{International Journal of Approximate Reasoning}, 53\penalty0
  (9):\penalty0 1305--1325, 2012.

\bibitem[Mabrouk et~al.(2014)Mabrouk, Gonzales, Jabet-Chevalier, and
  Chojnaki]{mabr-gonz-jabe-choj14}
A.~Mabrouk, C.~Gonzales, K.~Jabet-Chevalier, and E.~Chojnaki.
\newblock An efficient {B}ayesian network structure learning algorithm in the
  presence of deterministic relations.
\newblock In \emph{Proc. of the European Conference on Artificial Intelligence
  (ECAI'14)}, pages 567--572, 2014.

\bibitem[Meek(1995)]{meek95}
C.~Meek.
\newblock Causal inference and causal explanation with background knowledge.
\newblock In \emph{Proc. of the Conference on Uncertainty in Artiﬁcal
  Intelligence (UAI'95)}, pages 403--410, 1995.

\bibitem[Ogarrio et~al.(2016)Ogarrio, Spirtes, and Ramsey]{ogar-spir-rams16}
J.~Ogarrio, P.~Spirtes, and J.~Ramsey.
\newblock A hybrid causal search algorithm for latent variable models.
\newblock In \emph{Proc. of International Conference on Probabilistic Graphical
  Models (PGM'16)}, pages 368--379, 2016.

\bibitem[Pearl(1988)]{pear88}
J.~Pearl.
\newblock \emph{Probabilistic Reasoning in Intelligent Systems: Networks of
  Plausible Inference}.
\newblock Morgan Kaufman, 1988.

\bibitem[Pearl(2009)]{pear09}
J.~Pearl.
\newblock \emph{Causality}.
\newblock Cambridge University Press, 2nd edition, 2009.

\bibitem[Ramsey et~al.(2006)Ramsey, Spirtes, and Zhang]{rams-spir-zhan06}
J.~Ramsey, P.~Spirtes, and J.~Zhang.
\newblock Adjacency-faithfulness and conservative causal inference.
\newblock In \emph{Proc. of the Conference on Uncertainty in Artificial
  Intelligence (UAI'06)}, pages 401--408, 2006.

\bibitem[Richardson and Spirtes(1998)]{rich-spir98}
T.~Richardson and P.~Spirtes.
\newblock Scoring ancestral graph models.
\newblock Technical Report CMU-PHIL-98, Carnegie Mellon, 1998.

\bibitem[Richardson and Spirtes(2002)]{rich-spir02}
T.~Richardson and P.~Spirtes.
\newblock Ancestral graph markov models.
\newblock \emph{Annals of Statistics}, 30\penalty0 (4):\penalty0 962--1030,
  2002.

\bibitem[Schwarz(1978)]{schw78}
G.~Schwarz.
\newblock Estimating the dimension of a model.
\newblock \emph{Annals of Statistics}, 6:\penalty0 461--464, 1978.

\bibitem[Spirtes and Glymour(1991)]{spir-glym91}
P.~Spirtes and C.~Glymour.
\newblock An algorithm for fast recovery of sparse causal graphs.
\newblock \emph{Social Science Computer Review}, 9\penalty0 (1):\penalty0
  62--72, 1991.

\bibitem[Spirtes et~al.(2000)Spirtes, Glymour, and Scheines]{spir-glym-schei00}
P.~Spirtes, C.~Glymour, and R.~Scheines.
\newblock \emph{Causation, Prediction, and Search}.
\newblock MIT press, 2nd edition, 2000.

\bibitem[Tian and Pearl(2002)]{tian-pear02}
J.~Tian and J.~Pearl.
\newblock On the identification of causal effects.
\newblock Technical Report R-290-L, UCLA C.S. Lab, 2002.

\bibitem[Triantafillou and Tsamardinos(2016)]{trian-tsam16}
S.~Triantafillou and I.~Tsamardinos.
\newblock Score based vs constraint based causal learning in the presence of
  confounders.
\newblock In \emph{Proc. of the ``Causation: Foundation to Application''
  Workshop, Uncertainty in Artificial Intelligence}, 2016.

\bibitem[Tr\"osser et~al.(2022)Tr\"osser, {de}~Givry, and
  Katsirelos]{troes-givr-kats22}
F.~Tr\"osser, S.~{de}~Givry, and G.~Katsirelos.
\newblock "structured set variable domains in {B}ayesian network structure
  learning.
\newblock In \emph{Proc. of Principles and Practice of Constraint Programming
  (CP'22)}, pages 37:1--37:9, 2022.

\bibitem[Tsamardinos et~al.(2003)Tsamardinos, Aliferis, and
  Statnikov]{tsam-alif-stat03}
I.~Tsamardinos, C.~Aliferis, and S.~Statnikov.
\newblock Algorithms for large scale {M}arkov blanket discovery.
\newblock In \emph{Proc. of the international FLAIRS conference (FLAIRS'03)},
  pages 376--381, 2003.

\bibitem[{v}an Beek and Hoffmann(2015)]{beek-hoff15}
P.~{v}an Beek and H.-F. Hoffmann.
\newblock Machine learning of {B}ayesian networks using constraint programming.
\newblock In \emph{Proc. of Principles and Practice of Constraint Programming
  (CP'15)}, pages 429--445, 2015.

\bibitem[{v}an Beek and Lee(2017)]{beek-lee17}
P.~{v}an Beek and C.~Lee.
\newblock An experimental analysis of anytime algorithms for {B}ayesian network
  structure learning.
\newblock \emph{Proceedings of Machine Learning Research}, 73:\penalty0 69--80,
  2017.

\bibitem[{v}an~{d}er Zander and Li\'skiewicz(2020)]{zand-lisk20}
B.~{v}an~{d}er Zander and M.~Li\'skiewicz.
\newblock Finding minimal {d}-separators in linear time and applications.
\newblock In \emph{Proc. of the Conference on Uncertainty in Artificial
  Intelligence Conference (UAI'20)}, pages 637--647, 2020.

\bibitem[Verma and Pearl(1990)]{verm-pear90}
T.~Verma and J.~Pearl.
\newblock Equivalence and synthesis of causal models.
\newblock In \emph{Proc. of the Conference on Uncertainty in Artiﬁcal
  Intelligence (UAI'90)}, pages 220--227, 1990.

\bibitem[Verny et~al.(2017)Verny, Sella, Affeldt, Singh, and
  Isambert]{vern-sell-affe-sing-isam17}
L.~Verny, N.~Sella, S.~Affeldt, P.~Singh, and H.~Isambert.
\newblock Learning causal networks with latent variables from multivariate
  information in genomic data.
\newblock \emph{PLOS Computational Biology}, 2017.

\bibitem[Zhang(2008)]{zhan08}
J.~Zhang.
\newblock On the completeness of orientation rules for causal discovery in the
  presence of latent confounders and selection bias.
\newblock \emph{Artificial Intelligence}, 172\penalty0 (16):\penalty0
  1873--1896, 2008.

\bibitem[Zheng et~al.(2023)Zheng, Huang, Chen, Ramsey, Gong, Cai, Shimizu,
  Spirtes, and Zhang]{causallearn}
Y.~Zheng, B.~Huang, W.~Chen, J.~Ramsey, M.~Gong, R.~Cai, S.~Shimizu,
  P.~Spirtes, and K.~Zhang.
\newblock Causal-learn: causal discovery in python.
\newblock \emph{arXiv preprint arXiv:2307.16405}, 2023.

\end{thebibliography}

\newpage

\section*{Supplementary Material}

\appendix

\section{Proofs}

Below are the proofs of all the propositions of the paper.

\begin{lemma}\label{lemma_parent}
  Let $\XXX_O$ and $\XXX_H$ be two disjoint sets of random variables and
  let $\XXX = \XXX_O \cup \XXX_H$.
  Let $\DDD^*$ be a dataset generated by some distribution $P^*$ over
  $\XXX$ for which there exists a perfect map $\GGG^* = (\XXX,\EEE)$, and
  let $\DDD$ be the projection of $\DDD^*$ over $\XXX_O$,
  i.e., the dataset resulting from the removal from $\DDD^*$ of all the
  values of the variables of $\XXX_H$. Let $\GGG$ be a
  DAG maximizing the BIC score over $\DDD$. Then, as $|\DDD| \rightarrow
  \infty$, if, in Graph $\GGG$, there is no arc between a pair of nodes $(A,B)$
  and no directed path\footnote{A directed path $\CCC = \langle
    X_1,\ldots,X_k \rangle$ is a trail $\CCC$ such that, for all $i \in
    \{1,\ldots,k-1\}$, the graph contains Arc $X_i \rightarrow X_{i+1}$.} from $B$ to $A$, 
  then $A \indep_{P^*} B | \Pa_{\GGG}(B)$.
\end{lemma}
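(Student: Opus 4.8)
The plan is to argue by contradiction, exploiting the decomposability of the BIC score together with the fact that its likelihood part grows linearly in $|\DDD|$ while its penalty grows only logarithmically. First I would reduce the claim to a statement about the distribution $P = \sum_{\XXX_H} P^*$ that actually generated $\DDD$: since $A$, $B$ and $\Pa_{\GGG}(B)$ all belong to $\XXX_O$, the marginal $P(A,B,\Pa_{\GGG}(B))$ coincides with $P^*(A,B,\Pa_{\GGG}(B))$, so $A \indep_{P^*} B | \Pa_{\GGG}(B)$ holds if and only if $A \indep_{P} B | \Pa_{\GGG}(B)$. Hence it suffices to establish the independence with respect to $P$.

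Assume, for contradiction, that $A \notindep_{P} B | \Pa_{\GGG}(B)$. Writing $\mathbf{Z} = \Pa_{\GGG}(B)$, the hypotheses guarantee $A \notin \mathbf{Z}$ (no arc between $A$ and $B$) and that adding the arc $A \rightarrow B$ to $\GGG$ creates no directed cycle (no directed path from $B$ to $A$). Let $\GGG'$ be the resulting DAG. Because the BIC score is a sum of per-node terms, the only term that differs between $\GGG$ and $\GGG'$ is the one attached to $B$, so the score gain equals $S(B|\mathbf{Z} \cup \{A\}) - S(B|\mathbf{Z})$, which I would decompose into its likelihood and penalty contributions.

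For the likelihood contribution, a direct rearrangement of the empirical counts $N_{b\mathbf{z}}$, $N_{ba\mathbf{z}}$, $N_{a\mathbf{z}}$, $N_{\mathbf{z}}$ shows that it equals $|\DDD|$ times the empirical conditional mutual information of $A$ and $B$ given $\mathbf{Z}$. The penalty contribution increases by $\frac{1}{2}\log(|\DDD|)(|\dom{B}|-1)(|\dom{A}|-1)|\dom{\mathbf{Z}}|$, which is $O(\log|\DDD|)$. By the law of large numbers the empirical frequencies converge to $P$, hence the empirical conditional mutual information converges to the true one $I_P(A;B|\mathbf{Z})$, and the assumed dependence makes this limit strictly positive. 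Consequently the likelihood gain grows like a positive constant times $|\DDD|$ whereas the penalty grows only logarithmically, so for $|\DDD|$ large enough the score of $\GGG'$ strictly exceeds that of $\GGG$. This contradicts the optimality of $\GGG$, proving $A \indep_{P} B | \Pa_{\GGG}(B)$, and therefore $A \indep_{P^*} B | \Pa_{\GGG}(B)$ by the reduction above.

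The step I expect to require the most care is the precise meaning of ``as $|\DDD| \rightarrow \infty$'', because the maximizer $\GGG$ itself depends on the sample, so the contradiction above is a priori run against a moving target. I would resolve this by noting that $\XXX_O$ is finite, hence there are only finitely many candidate DAGs and, within each, finitely many pairs $(A,B)$; the convergence of the relevant empirical conditional mutual informations then holds almost surely and simultaneously for all of them, so that, almost surely, for all sufficiently large $|\DDD|$, no BIC-maximizing DAG can contain a pair $(A,B)$ violating the stated property.
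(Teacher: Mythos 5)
Your proof is correct and follows essentially the same route as the paper's: both exploit that the absence of a directed path from $B$ to $A$ makes $A \rightarrow B$ addable without creating a cycle, reduce the score comparison to the single term $S(B|\Pa_{\GGG}(B) \cup \{A\}) - S(B|\Pa_{\GGG}(B))$, identify its likelihood part asymptotically with $|\DDD|$ times the conditional mutual information $\III(A;B|\Pa_{\GGG}(B))$, and conclude from the $O(\log|\DDD|)$ penalty that this mutual information must vanish, i.e., that conditional independence holds (the paper argues directly via monotonicity of mutual information, you argue by contradiction, which is logically the same step). Your closing remark about the sample-dependence of the maximizer $\GGG$ and the uniformity over the finitely many candidate DAGs is a point the paper leaves implicit, and it is a welcome touch of extra rigor rather than a different approach.
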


\begin{proof}{Lemma~\ref{lemma_parent}}
  As there is no directed path from $B$ to $A$ in $\GGG$, adding $A
  \rightarrow B$ cannot create a directed cycle, hence this would produce a
  new  DAG. If $\GGG$ does not contain 
  this arc, this therefore means that the BIC score $S(B | \Pa_{\GGG}(B) \cup \{A\})$
  is lower than or equal to $S(B | \Pa_{\GGG}(B))$. $\DDD$ is the
  projection of $\DDD^*$ over $\XXX_O$, so the distribution $P$ which
  generated $\DDD$ is the projection of $P^*$ over $\XXX_O$, hence, for any
  $\mathbf{Z},\mathbf{W} \subseteq \XXX_O$, $P(\mathbf{Z} | \mathbf{W}) =
  P^*(\mathbf{Z} | \mathbf{W})$. As a consequence, as $|\DDD| \rightarrow \infty$, the
  BIC score can be expressed in terms of mutual information  ($\III_{P^*}$)
  and entropy ($\HHH_{P^*}$) over $P^*$ \citep[p792]{koll-fried09}:
  \begin{displaymath}
    \begin{array}{l}
      S(B | \Pa_{\GGG}(B) \cup \{A\}) \rightarrow |\DDD| \times
      [\III_{P^*}(B;\Pa_{\GGG}(B) \cup \{A\}) \\
      \hspace*{2cm}-\  \displaystyle\HHH_{P^*}(B)] -
      \frac{\log |\DDD|}{2} dim(B|\Pa_{\GGG}(B) \cup \{A\}),
    \end{array}
  \end{displaymath}
  where, for all $Z$, $\mathbf{W}$:
  \begin{displaymath}
    \renewcommand{\extrarowheight}{3mm}
    \begin{array}{@{}l@{}}
      \displaystyle dim(B|\Pa_{\GGG}(B) \cup \{A\}) = (|\dom{B}|-1) \times |\dom{A}|
      \times \!\! \prod_{X \in \Pa_{\GGG}(B)} |\dom{X}|, \\
      \displaystyle \III_{P^*}(Z;\mathbf{W}) \! = \sum_{z \in \dom{Z}} \sum_{\mathbf{w} \in \dom{\mathbf{W}}}
      \!\! P^*(z,\mathbf{w}) \log
      \left(\frac{P^*(z,\mathbf{w})}{P^*(z)P^*(\mathbf{w})}\right), \\
     \HHH_{P^*}(Z) = \sum_{z \in \dom{Z}} P^*(z) \log (P^*(z)).
    \end{array}
  \end{displaymath}
  So, if $\alpha$ denotes the difference between the two scores,
  we have that:
  \begin{eqnarray*}
    \alpha
    & =
    & S(B | \Pa_{\GGG}(B) \cup \{A\}) - S(B | \Pa_{\GGG}(B)) \\
    & =
    & |\DDD| \times \left[\III_{P^*}(B;\Pa_{\GGG}(B) \cup \{A\})
      - \III_{P^*}(B;\Pa_{\GGG}(B)) \right] \\
    & & -\ \delta \log |\DDD| / 2,
  \end{eqnarray*}
  with $\delta = dim(B|\Pa_{\GGG}(B) \cup \{A\}) - dim(B|\Pa_{\GGG}(B))$. When
  $|\DDD| \rightarrow \infty$, $\log |\DDD|$ is infinitely smaller than $|\DDD|$. Hence
  $\alpha \leq 0$ if and only if $\III_{P^*}(B;\Pa_{\GGG}(B) \cup \{A\}) \leq
  \III_{P^*}(B;\Pa_{\GGG}(B))$. But, for every probability distribution $Q$ and every
  $X,Y,\mathbf{Z}$, it always holds that  $\III_{Q}(X;\mathbf{Z} \cup \{Y\}) \geq
  \III_{Q}(X;\mathbf{Z})$, with equality only if $X \indep_{Q} Y |
  \mathbf{Z}$. Hence $\alpha \leq 0$ if and only if
  $\III_{P^*}(B;\Pa_{\GGG}(B) \cup \{A\}) = \III_{P^*}(B;\Pa_{\GGG}(B))$.
  This implies that $A \indep_{P^*} B | \Pa_{\GGG}(B)$. 
\end{proof}


\begin{corollary}\label{coro_edge}
  Let $\XXX_O$ and $\XXX_H$ be two disjoint sets of random variables and
  let $\XXX = \XXX_O \cup \XXX_H$.
  Let $\DDD^*$ be a dataset generated by some distribution $P^*$ over
  $\XXX$ for which there exists a perfect map $\GGG^* = (\XXX,\EEE)$, and
  let $\DDD$ be the projection of $\DDD^*$ over $\XXX_O$. Let $\GGG$ be a
  DAG maximizing the BIC score over $\DDD$. Then, as $|\DDD| \rightarrow
  \infty$, for any $A,B \in \XXX_O$, if $\GGG^*$ contains Arc
  $A \rightarrow B$, then Graph $\GGG$ contains
  either Arc $A \rightarrow B$ or $B \rightarrow A$.
\end{corollary}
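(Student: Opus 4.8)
The plan is to prove the corollary by contradiction, relying directly on Lemma~\ref{lemma_parent} together with the fact that $\GGG^*$ is a perfect map. First I would assume, for the sake of contradiction, that the score-maximizing DAG $\GGG$ contains \emph{no} arc between $A$ and $B$ (neither $A \rightarrow B$ nor $B \rightarrow A$), and then show that this is incompatible with $A \rightarrow B$ being an arc of $\GGG^*$.

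The key structural observation is that $\GGG$ is a DAG, so it cannot contain both a directed path from $A$ to $B$ and a directed path from $B$ to $A$. Hence at least one of the two directions is free of a directed path. Without loss of generality, assume there is no directed path from $B$ to $A$ (the other case is symmetric, with the roles of $A$ and $B$ swapped). Under the contradiction hypothesis, the pair $(A,B)$ then satisfies exactly the two premises of Lemma~\ref{lemma_parent}: there is no arc between $A$ and $B$ in $\GGG$, and there is no directed path from $B$ to $A$. Applying the lemma yields $A \indep_{P^*} B | \Pa_{\GGG}(B)$ as $|\DDD| \rightarrow \infty$.

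The final step is to contradict this independence using the perfect-map assumption. Since $\GGG$ is a DAG over $\XXX_O$ that contains no arc between $A$ and $B$, the set $\Pa_{\GGG}(B)$ is a subset of $\XXX_O \backslash \{A,B\}$. But $\GGG^*$ contains the arc $A \rightarrow B$, so the single-edge trail $\langle A, B \rangle$ has no intermediate node and is therefore active under every conditioning set; consequently $A$ and $B$ are not $d$-separated in $\GGG^*$ by $\Pa_{\GGG}(B)$, i.e.\ $\ncondindepd{A}{B}{\Pa_{\GGG}(B)}{\GGG^*}$. Because $\GGG^*$ is a perfect map, $d$-connection is equivalent to dependence (Eq.~(\ref{pmap_eq})), so $A \notindep_{P^*} B | \Pa_{\GGG}(B)$, which contradicts the conclusion of the lemma. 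Hence $\GGG$ must contain an arc between $A$ and $B$, which can only be $A \rightarrow B$ or $B \rightarrow A$.

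I do not expect a serious obstacle here, since the corollary is essentially a repackaging of Lemma~\ref{lemma_parent} together with the elementary fact that adjacency in a perfect map forbids any conditional independence between the two endpoints. The only point requiring a little care is the without-loss-of-generality reduction to the case of no directed path from $B$ to $A$; I would check that the symmetric case is genuinely symmetric, which it is, because the only asymmetric hypothesis of the lemma is the direction of the forbidden path, and $\GGG^*$ containing $A \rightarrow B$ blocks $d$-separation of the unordered pair $\{A,B\}$ regardless of which conditioning set ($\Pa_{\GGG}(B)$ or $\Pa_{\GGG}(A)$) arises.
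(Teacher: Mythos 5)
Your proof is correct and takes essentially the same route as the paper's own proof: argue by contradiction, observe that a DAG cannot contain directed paths in both directions between $A$ and $B$, apply Lemma~\ref{lemma_parent} to the direction without a path, and use the perfect-map property of $\GGG^*$ to contradict the resulting independence. The only (harmless) difference is that you make explicit why the arc $A \rightarrow B$ in $\GGG^*$ rules out $d$-separation and that $\Pa_{\GGG}(B)$ avoids $\{A,B\}$, points the paper leaves implicit.
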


\begin{proof}{Corollary~\ref{coro_edge}}
  Assume that, for some pair $A,B \in \XXX_O$ such that there exists an arc
  between $A$ and $B$ in $\GGG^*$, Graph $\GGG$ contains neither Arc $A
  \rightarrow B$ nor Arc $B \rightarrow A$. It is impossible that $\GGG$
  contains both a directed path from $A$ to $B$ and another one from $B$ to
  $A$ because the concatenation of these paths would be a directed
  cycle. Without loss of generality, assume that there exists no directed
  path from $B$ to $A$ in $\GGG$, then, by Lemma~\ref{lemma_parent},
  $A \indep_{P^*} B | \Pa_{\GGG}(B)$. As $\GGG^*$ is a perfect map, this
  implies that $\condindepd{A}{B}{\Pa_{\GGG}(B)}{\GGG^*}$, a contradiction
  since $\GGG^*$ contains an arc between $A$ and $B$.
\end{proof}


\begin{proof}{Proposition~\ref{prop_triangle}}
  Let $L$ be a variable in $\XXX_H$ and let $A,B$ be
  its children. Let $\GGG$ be a graph maximizing the BIC score over $\DDD$.
  First, let us prove that there must exist an arc between $A$ and $B$ in
  $\GGG$. Assume the contrary. It is impossible that there exists both a
  directed path $\CCC_{AB}$ from $A$ to $B$ in $\GGG$ and a directed path
  $\CCC_{BA}$ from $B$ to $A$ in $\GGG$ because their concatenation would
  be a directed cycle, a contradiction since $\GGG$ is a DAG. Without loss
  of generality, assume that there exists no directed path from $B$ to $A$ in
  $\GGG$. Then, as, by hypothesis, there exists no arc between
  $A$ and $B$, by Lemma~\ref{lemma_parent}, $A \indep_{P^*} B | \Pa_{\GGG}(B)$.
  But this is impossible because, whatever the set $\mathbf{Z} \subseteq
  \XXX_O \backslash \{A,B\}$, Trail $\langle A, L, B \rangle$ is active
  given $\mathbf{Z}$ in $\GGG^*$, so that $\ncondindepd{A}{B}{\Pa_{\GGG}(B)}{\GGG^*}$,
  which contradicts $A \indep_{P^*} B | \Pa_{\GGG}(B)$ since $\GGG^*$ is a
  perfect map. So, there must exist an arc between $A$ and $B$ in
  $\GGG$. In the rest of the proof, without loss of generality, assume
  that it contains Arc $A \rightarrow B$. 

  Let $C \in \Pa_{\GGG^*}(A) \cap \XXX_O$. Such a $C$ exists by the
  hypotheses at the beginning of the Proposition. By
  Corollary~\ref{coro_edge}, $\GGG$ contains an arc between $A$ 
  and $C$. As for an arc between $B$ and $C$, assume that there exists none
  in $\GGG$. Note that Trail $\CCC_{CB} = \langle C,A,L,B \rangle$ is
  always active in $\GGG^*$ given any $\mathbf{Z} \subseteq \XXX_O
  \backslash \{B,C\}$ that contains $A$.
  So $B \notindep_{P^*} C | \mathbf{Z}$ since $\GGG^*$ is a
  perfect map. Now, as shown at the beginning of the proof, there exists
  either i)~no directed path from $B$ to $C$ in $\GGG$; or ii)~no
  directed path from $C$ to $B$ in $\GGG$. If Case~i) obtains, by
  Lemma~\ref{lemma_parent}, $C \indep_{P^*} B | \Pa_{\GGG}(B)$, which is
  impossible since $A \in \Pa_{\GGG}(B)$ (according to the preceding
  paragraph), so that Trail $\CCC_{CB}$ is active in $\GGG^*$ given $\mathbf{Z}
  = \Pa_{\GGG}(B)$, a contradiction (since it is equivalent to  $C
  \notindep_{P^*} B | \Pa_{\GGG}(B)$). If Case~ii) obtains (but not
  Case~i)), then the arc added previously between $A$
  and $C$ could not be $C \rightarrow A$ because, since Case~i) does not
  obtain, there exists a directed path from $B$ to $C$, hence also a
  directed path from $A$ to $C$ (since $\GGG$ contains Arc $A \rightarrow
  B$). So Arc $C \rightarrow A$ would create a directed cycle. 
  Hence, $A \in \Pa_{\GGG}(C)$. Now, by Lemma~\ref{lemma_parent},
  $B \indep_{P^*} C | \Pa_{\GGG}(C)$, which is impossible since
  $\mathbf{Z} = \Pa_{\GGG}(C) \supseteq \{A\}$ would make Trail $\CCC_{CB}$
  active in $\GGG^*$, a contradiction. As a consequence, there must
  necessarily exist an arc between $B$ and $C$ in $\GGG$. So, overall,
  $\GGG$ contains clique $(A,B,C)$.
\end{proof}


\begin{proof}{Proposition~\ref{prop_indistinguishable1}}
  $\condindepd{\mathbf{U}}{\mathbf{V}}{\mathbf{W}}{\GGG}$ is equivalent to
  $\condindepd{U}{V}{\mathbf{W}}{\GGG}$ for all $U \in \mathbf{U}$ and $V
  \in \mathbf{V}$. So, we just need to prove that, for any $U,V \in
  \XXX_O$, $U \neq V$, $\condindepd{U}{V}{\mathbf{W}}{\GGG}
  \Longleftrightarrow \condindepd{U}{V}{\mathbf{W}}{\GGG^*}$.
  
  First, note that $\GGG^*$ does not contain Arc $B \rightarrow A$,
  otherwise $\Pa_{\GGG^*}(A)$ would contain $B$. Adding Arc $A \rightarrow B$
  cannot create a directed cycle in $\GGG^*$ because $A$ has only one
  parent, $L$, which cannot be involved in any cycle since it has no parent.  
  Let $\desc_{\GGG}(A)$ denote the set of descendants of $A$ in $\GGG$,
  i.e., $\desc_{\GGG}(A) = \{X:$ there exists a directed path from $A$ to $X\}$.
  Note first that $\desc_{\GGG}(A) = \desc_{\GGG^*}(A) \cup
  \desc_{\GGG^*}(B)$ and, for all $X \in \XXX \backslash \{A,L\}$, we have
  that $\desc_{\GGG}(X) = \desc_{\GGG^*}(X)$.

  \newpage

  Now, consider any simple\footnote{A simple trail is a trail in which no
    node appears more than once. It is well known that, for any active
    (resp.\ blocked) trail between a pair of nodes $(X,Y)$, there also exists a
    simple active (resp.\ blocked) trail between $X$ and $Y$.}
  trail $\CCC^* = \langle X_1=U,\ldots,X_k=V \rangle$
  between $U$ and $V$ in $\GGG^*$. Assume it is blocked in $\GGG^*$. This
  is the case if and only if i)~it
  contains is a convergent connection at some node $T$ such that neither $T$
  nor its descendants are in $\mathbf{W}$; or ii)~it contains a non-convergent
  connection at some node $T \in \mathbf{W}$. 

  Assume Trail $\CCC^*$ does not contain $L$, then it also exists in
  $\GGG$. Case i) cannot occur at $T=A$ because $A$ has only one parent in
  $\GGG^*$ or at $T=L$ because $\CCC^*$ does not contain $L$ by
  hypothesis. Since all the other nodes in $\GGG^*$ have the same set of
  descendants as their counterparts in $\GGG$, the convergent connections
  in $\CCC^*$ have therefore exactly the same status (blocked, active) in
  $\GGG$ and $\GGG^*$. As for case ii), any non-convergent connection in
  $\CCC^*$ is the same in $\GGG$ and $\GGG^*$ and cannot involve $L$ (by
  hypothesis). Hence, they have the same status in $\GGG$ and $\GGG^*$.
  So $\CCC^*$ is blocked or active in both $\GGG$ and $\GGG^*$.

  Now, if $\CCC^*$ contains $L$, it includes the non-convergent connection
  $A \leftarrow L \rightarrow B$. Let $\CCC$ be the trail of $\GGG$
  obtained from $\CCC^*$ by substituting $A \leftarrow L
  \rightarrow B$ by $A \rightarrow B$. For the same reasons as above, all
  the nodes different from $A,L$ have the same connections in $\CCC$ and
  $\CCC^*$ and the same set of descendants. Hence their status (blocked,
  active) are the same in $\CCC$ and $\CCC^*$. Node $A$ cannot have a
  convergent connection in $\CCC^*$ because $\Pa_{\GGG^*}(A) = \{L\}$. In
  $\CCC$, its connection is also non-convergent since its child $B$ in
  $\GGG$ is its neighbor in $\CCC$. So, the connection at $A$ is
  non-convergent and has the same status in both $\CCC$ and $\CCC^*$.
  Node $L$ has a non-convergent connection in
  $\CCC^*$ but, as it is unobserved, it cannot belong to $\mathbf{W}$ and
  cannot block the trail. So, removing it from $\CCC^*$ cannot change the
  status of the trail. Hence, overall, all the trails in $\GGG^*$ can be
  mapped into a trail in $\GGG$ with exactly the same status. 

  Conversely, let $\CCC$ be a simple trail in $\GGG$. If this trail does
  not include Arc $A \rightarrow B$, then it also belongs to $\GGG^*$ and the
  same reasoning as above shows that this trail has the same status in
  $\GGG$ and $\GGG^*$.

  If, on the other hand, $\CCC$ contains Arc $A
  \rightarrow B$, then substituting it by $A \leftarrow L \rightarrow B$
  results in a new trail $\CCC^*$ of $\GGG^*$. As above, all the nodes
  except $A,B,L$ have the same status. In addition, $L$ cannot block trail $\CCC^*$ and
  the connection at $A$ is non-convergent in both $\CCC$ and $\CCC^*$
  (because it has at most one parent, $L$). If the connection at $B$ is
  convergent in $\CCC$, it is also convergent in $\CCC^*$, with its parent $A$
  substituted by $L$. If the connection at $B$ in $\CCC$ is
  non-convergent, this means that it is of the form $A \rightarrow B
  \rightarrow X$, and in $\CCC^*$, its connection is $L \rightarrow B
  \rightarrow X$, also a non-convergent connection. Hence, overall, 
  every trail of $\GGG$ can be mapped into a trail of $\GGG^*$ with the same status.

  So, for all $U \in \mathbf{U}$ and $V \in \mathbf{V}$, we have that
  $\condindepd{U}{V}{\mathbf{W}}{\GGG}
  \Longleftrightarrow \condindepd{U}{V}{\mathbf{W}}{\GGG^*}$.
\end{proof}


\begin{proof}{Proposition~\ref{prop_indistinguishable2}}
  As for Proposition~\ref{prop_indistinguishable1},  we just need to prove
  that, for any $U,V \in \XXX_O$, $U \neq V$, $\condindepd{U}{V}{\mathbf{W}}{\GGG}
  \Longleftrightarrow \condindepd{U}{V}{\mathbf{W}}{\GGG^*}$.

  A simple trail $\CCC^* = \langle X_1=U,\ldots,X_k=V \rangle$
  between $U$ and $V$ is active in $\GGG^*$ if and only if i)~for all the
  nodes $T$ with a convergent connection, either $T$ or some of its
  descendants are in $\mathbf{W}$; and ii)~nodes $T$ with non-convergent
  connections do not belong to $\mathbf{W}$.
  Note that, by definition of $\GGG$, if $\desc_{\GGG}(X)$ denotes the set
  of descendants of Node $X$ in $\GGG$, then $\desc_{\GGG}(X) = \desc_{\GGG^*}(X)$
  for any $X \in \XXX \backslash \{L\}$.
  Let us first show that, if there exists an active trail in $\GGG^*$, then
  there also exists an active trail in $\GGG$.
  
  Let $\CCC^*$ be an active trail of $\GGG^*$ that does not contain $L$,
  then it also exists in $\GGG$ and, since $\desc_{\GGG}(X) =
  \desc_{\GGG^*}(X)$ for all $X \in \XXX \backslash \{L\}$, it is also an
  active trail of $\GGG$.

  Assume now that Trail $\CCC^*$ contains $L$. Node $L$ is equal neither
  to $X_1$ nor to $X_k$ since $U$ and $V$ belong to $\XXX_O$. So, let $i
  \neq 1,k$ be the index such that, in Trail $\CCC^*$, $X_i = L$. By definition of $\GGG^*$, it
  holds that $\{X_{i-1},X_{i+1}\} = \{A,B\}$. Without loss of generality,
  assume below that $X_{i-1}=A$ and $X_{i+1}=B$ (if this is the converse,
  consider the reversed trail $\langle Y_1=V,\ldots,Y_k=U \rangle$, which
  has the same $d$-separation status as $\CCC^*$, i.e., it is an active trail).

  If $X_1 = A$, then $A$ is the first node of Trail $\CCC^*$. Let $\CCC =
  \langle X_1=A, X_3 =B, X_4, \ldots, X_k \rangle$. Then $\CCC$ belongs to both
  $\GGG^*$ and $\GGG$. In addition, the types of connection
  (convergent/non-convergent) of all the nodes
  $X_3,\ldots,X_{k-1}$ are the same in $\CCC^*$ and $\CCC$, and the sets of
  descendants of these nodes are the same in $\GGG^*$ and $\GGG$. Finally, $A$
  and $X_{k}$ cannot belong to $\mathbf{W}$ since they belong to
  $\mathbf{U}$ and $\mathbf{V}$ respectively. Hence, $\CCC$ is an
  active trail in $\GGG$.

  Suppose now that $X_1 \neq A$. Then, Trail $\CCC^*$ contains Node
  $X_{i-2}$, i.e., it contains Subsequence $\langle X_{i-2}, A, L, B
  \rangle$. If $X_{i-2}$ is a child of $A$, then Trail $\CCC = \langle
  X_1, \ldots, X_{i-2}, X_{i-1} = A, X_{i+1} = B, \ldots, X_k \rangle$
  belongs to both $\GGG^*$ and $\GGG$. In addition, all the nodes in $\CCC$
  have the same type of connection (convergent/non-convergent) as in
  $\CCC^*$. Therefore, since $\desc_{\GGG}(X) = \desc_{\GGG^*}(X)$ for all
  the nodes $X \in \CCC$, if $\CCC^*$ is an active trail of $\GGG^*$, then
  $\CCC$ is an active trail of $\GGG$.

  Assume now that $X_{i-2}$ is a parent of $A$. Let
  $\CCC = \langle X_1, \ldots, X_{i-2},  X_{i+1} = B, \ldots, X_k \rangle$.
  $\CCC$ is a trail of $\GGG$. In addition, all of its nodes have the same
  type of connection (convergent/non-convergent) in $\GGG$ as their corresponding
  node of $\CCC^*$ in $\GGG^*$. Therefore, since $\desc_{\GGG}(X) =
  \desc_{\GGG^*}(X)$ for all the nodes $X \in \CCC$, $\CCC$ is an active
  trail of $\GGG$.

  To complete the proof, let us now show that, if there exists an active
  trail $\CCC = \langle X_1=U,\ldots,X_k=V \rangle$ in $\GGG$, then there
  also exists an active trail $\CCC^*$ in $\GGG^*$.

  If $B \not\in \CCC$,
  then $\CCC$ also belong to $\GGG^*$ 
  because the only arcs that belong to $\GGG$ but not to $\GGG^*$ are Arcs
  $Y \rightarrow B$ with $Y \in \Pa_{\GGG}(A)$. As $\desc_{\GGG}(X) =
  \desc_{\GGG^*}(X)$ for all the nodes $X \in \CCC$, $\CCC$ is also an active
  trail in $\GGG^*$. Assume now that $B \in \CCC$. For the same reason as
  above, if none of the neighbors of $B$ in $\CCC$ belong to
  $\Pa_{\GGG}(A)$, then $\CCC$ is a trail of $\GGG^*$ and is also
  active.

  Let $i$ be the index such that $X_i = B$. If some neighbors of
  $B$ belong to $\Pa_{\GGG}(A)$, two cases can obtain: case~1)~exactly one
  neighbor of $B$ belongs to $\Pa_{\GGG}(A)$; and case~2) exactly two
  neighbors of $B$ belongs to $\Pa_{\GGG}(A)$. As for case~1), without loss
  of generality, assume that $X_{i-1} \in \Pa_{\GGG}(A)$ (else, reverse
  Trail $\CCC$). If $A \not\in \mathbf{W}$, then Trail
  $\CCC^* = \langle X_1,\ldots,X_{i-1},A,X_{i}=B,\ldots,X_k \rangle$
  belongs to $\GGG^*$. In addition, the type of connection
  (convergent/non-convergent) of all the nodes in $\CCC^*$ except
  $A$ are the same as their counterpart in $\CCC$. The connection at $A$ is
  non-convergent and $A \not\in \mathbf{W}$, so that $A$ does not block
  $\CCC^*$ in $\GGG^*$. So, overall $\CCC^*$ is an active trail of
  $\GGG^*$. If, now, $A \in \mathbf{W}$, then let 
  $\CCC^* = \langle X_1,\ldots,X_{i-1},A,L,X_{i}=B,\ldots,X_k
  \rangle$. This is a trail of $\GGG^*$ for which the type of connection
  (convergent/non-convergent) of all the nodes in $\CCC^*$ except
  $A$ and $L$ are the same as their counterpart in $\CCC$. The connection
  at $A$ is convergent and $A \in \mathbf{W}$, so $A$ does not block
  $\CCC^*$ in $\GGG^*$. The connection at $L$ is non-convergent and $L
  \not\in \mathbf{W}$ since $L \not\in \XXX_O$. So $L$ does not block
  $\CCC^*$. Consequently, $\CCC^*$ is active in $\GGG^*$.

  Consider now Case~2), i.e., the two neighbors of $B$ in $\CCC$ belong to
  $\Pa_{\GGG}(A)$. This means that $X_{i-1},B,X_{i+1}$ form a convergent
  connection. Since $\CCC$ is active, either $B$ or some of its descendants
  belong to $\mathbf{W}$. But $B$ is a child of $A$ in both $\GGG$ and
  $\GGG^*$. Therefore, either $A$ or some of its descendants
  belong to $\mathbf{W}$. As a consequence, if $\CCC^*$ is the trail
  obtained from $\CCC$ by substituting $B$ by $A$, then the connection at
  $A$ in $\CCC^*$ is convergent and $A$ does not block $\CCC^*$. For all
  the other nodes, the connections are similar in $\CCC$ and $\CCC^*$. Hence,
  $\CCC^*$ is active in $\GGG^*$. This completes the proof.
\end{proof}


\begin{proof}{Proposition~\ref{prop_imap}}
  Lemma~\ref{lemma_parent} considers pairs of nodes $(A,B)$ such that there
  exists no directed path from $B$ to $A$. So, $A$ is not a descendant of
  $B$. In addition, in this lemma, there exists no arc between $A$ and $B$,
  so $A$ is not a parent of $B$. So Lemma~\ref{lemma_parent} states that,
  in $\GGG$, every node is independent of its non-descendants given its
  parents.

  Let $\GGG$ be a DAG maximizing the BIC score. Then it contains no arc $X
  \rightarrow Y$ such that the graph $\GGG'$ resulting from the removal of
  $X \rightarrow Y$ from $\GGG$ can maximize the BIC score. Indeed,
  as $|\DDD| \rightarrow \infty$, Score $S(Y |\Pa_{\GGG}(Y)) \rightarrow
  |\DDD| \times \left[\III_{P^*}(Y;\Pa_{\GGG'}(Y) \cup \{X\}) - 
    \HHH_{P^*}(Y) \right] - \frac{\log |\DDD|}{2} dim(B|\Pa_{\GGG'}(Y) \cup \{X\})$
  (see the proof of Lemma~\ref{lemma_parent}).
  It cannot be the case that $\III_{P^*}(Y;\Pa_{\GGG'}(Y) \cup \{X\}) <
  \III_{P^*}(Y;\Pa_{\GGG'}(Y))$ else $\GGG$ would not maximize the BIC
  score (since the term in $\log |\DDD|$ is infinitely smaller than
  $|\DDD|$). But, for every probability distribution $Q$ and every
  $U,V,\mathbf{Z}$, it always holds that  $\III_{Q}(U;\mathbf{Z} \cup \{V\}) \geq
  \III_{Q}(U;\mathbf{Z})$, with equality only if $U \indep_{Q} V |
  \mathbf{Z}$. So, necessarily, $\III_{P^*}(Y;\Pa_{\GGG'}(Y) \cup \{X\}) =
  \III_{P^*}(Y;\Pa_{\GGG'}(Y))$. As a consequence, we have that:
  \begin{displaymath}
    \renewcommand{\extrarowheight}{3mm}
    \begin{array}{@{}l@{}}
      S(Y |\Pa_{\GGG}(Y)) - S(Y |\Pa_{\GGG'}(Y)) \\
      \hspace*{5mm}\approx
      \displaystyle \frac{\log |\DDD|}{2} [dim(B|\Pa_{\GGG'}(Y) \cup \{X\}) -
      dim(B|\Pa_{\GGG'}(Y))] \\
      \hspace*{5mm}\approx
      \displaystyle \frac{\log |\DDD|}{2} (|\dom{X}|-1) \times (|\dom{Y}|-1)
      \times (|\dom{\Pa_{\GGG'}(Y)}|) > 0.
    \end{array}
  \end{displaymath}
  Hence $\GGG'$ cannot maximize the BIC score. So $\GGG$ is minimal.
  Overall, by Corollary~4, p.~120, of \citep{pear88}, $\GGG$ is a minimal I-map.
\end{proof}


\begin{proof}{Proposition~\ref{BIC_chi2_prop}}
  Let $N_{uv\mathbf{w}}$ denote the number of records in $\DDD$ such that
  $U=u$, $V=v$ and $\mathbf{W}=\mathbf{w}$ and let $N_{u\mathbf{w}} =
  \sum_{v \in \dom{V}} N_{uv\mathbf{w}}$, $N_{v\mathbf{w}} =
  \sum_{u \in \dom{U}} N_{uv\mathbf{w}}$ and $N_{\mathbf{w}} =
  \sum_{u \in \dom{U}} \sum_{v \in \dom{V}} N_{uv\mathbf{w}}$.

  Let $dim(U|V,\mathbf{W})$ and $dim(U|\mathbf{W})$ denote the number of
  free parameters in the conditional probability tables $P(U|V,\mathbf{W})$
  and $P(U|\mathbf{W})$ respectively, i.e., 
  $dim(U|V,\mathbf{W}) = (|\dom{U}|-1) \times |\dom{V}| \times |\dom{\mathbf{W}}|$ and
  $dim(U|\mathbf{W}) = (|\dom{U}|-1) \times |\dom{\mathbf{W}}|$.
  So, we have that:
  \begin{displaymath}
    \Delta_{dim} = \frac{1}{2} \log(|\DDD|) [dim(U|V,\mathbf{W}) -
    dim(U|\mathbf{W})] = \frac{1}{2} \log(|\DDD|) \delta,
  \end{displaymath}
  with $\delta = (|\dom{U}|-1) \times (|\dom{V}|-1) \times |\dom{\mathbf{W}}|$.
  As a consequence, we have that:
  \begin{displaymath}
    \begin{array}{@{}l@{}}
      S(U|V,\mathbf{W}) - S(U|\mathbf{W}) \\
      \hspace*{5mm} = \displaystyle
      \sum_{u,v,\mathbf{w}} N_{uv\mathbf{w}}
      \left[
      \log\left(\frac{N_{uv\mathbf{w}}}{N_{v\mathbf{w}}}\right) -
      \log\left(\frac{N_{u\mathbf{w}}}{N_{\mathbf{w}}}\right)
      \right] - \Delta_{dim} \\
       \hspace*{5mm} = \displaystyle
      \sum_{u,v,\mathbf{w}} N_{uv\mathbf{w}}
      \log\left(\frac{N_{uv\mathbf{w}}N_{\mathbf{w}}}{N_{u\mathbf{w}}N_{v\mathbf{w}}}\right)
      - \Delta_{dim}
    \end{array}
  \end{displaymath}
  So we have that:
  \begin{displaymath}
    \begin{array}{l}
      2 \times \left( S(U|V,\mathbf{W}) - S(U|\mathbf{W}) + \Delta_{dim} \right) \\
      \hspace*{5mm}= \displaystyle 2 \sum_{u,v,\mathbf{w}} N_{uv\mathbf{w}}
      \log\left(\frac{N_{uv\mathbf{w}}N_{\mathbf{w}}}{N_{u\mathbf{w}}N_{v\mathbf{w}}}\right)
      = G2(U,V|\mathbf{W}),
    \end{array}
  \end{displaymath}
  where $G2()$ is the formula used in the classical G-test. It is well-known
  that, when $U$ and $V$ are independent given a set $\mathbf{Z}$, the G2 formula
  follows a $\chi^2$ distribution of $\delta$ degrees of freedom. So, given a risk level $\alpha$,
  $U$ and $V$ are judged independent if the value of $f_{BIC}(U,V|\mathbf{Z})$
  is lower than the critical value $\chi^2_{\delta}(\alpha)$ of the $\chi^2$ distribution.
\end{proof}


\begin{proof}{Proposition~\ref{complex_prop}}
  Let $n$ and $m$ denote the number of nodes and arcs of $\GGG$
  respectively. Let $k$ be the maximum number of parents and children of the
  nodes. Assume that, for conditional independence tests, Algorithm~\ref{algo2}
  is used and that we limit it to sets  $\mathbf{Z}$ such that $|\mathbf{Z}| \leq
  h$. Then Algorithm~\ref{algo2} completes in $O(nh|\DDD|)$ time. Indeed,
  each call to Function $f_{BIC}$ requires parsing the database once, which is
  performed in $O(|\DDD|)$ time. As Step~6 examines all the possible $Y$ nodes,
  its time complexity is $O(n|\DDD|)$. Finally, the {\em while} loop of
  Lines~5--9 is executed at most $h$ times. So, overall, the time
  complexity of Algorithm~\ref{algo2} is $O(nh|\DDD|)$.

  Now, let us determine the complexity of Algorithm~\ref{algo1}.
  In Graph $\GGG$, there exist at most $nk^2$ triangles (for each node,
  triangles can be created by selecting two parents). To check whether one
  is latent (Rule~2), each of its 3 arcs must be examined by Algorithm~\ref{algo2}.
  So Line~2 is completed in $O(n^2k^2h|\DDD|)$ time.
  On Line~2, the information about which pair of nodes is
  independent is cached. So, determining their types on Lines 6 and 8 is
  performed in $O(1)$ time. Determining whether $|\Pa_{\GGG}(X_3)| \geq 3$ can
  also be done in $O(1)$. 
  As for Line~8, there are at most $2k$ nodes $D$ to examine (at most $k$
  parents and $k$ children), each using
  Algorithm~\ref{algo2}. So, the time complexity of Line~8 is
  $O(nkh|\DDD|)$ and, therefore, that of Loop~5--9 is $O(n^2k^3h|\DDD|)$.
  There are at most $nk^2$ triangles in $\GGG$, hence $O(nk^2)$ iterations
  of the {\em for} loop of Lines~10--14. Each instruction on lines 11 to 14
  can be performed in $O(1)$ times, hence the loop of Lines~10--14 can be
  performed in $O(n^2k^3h|\DDD|)$ time. Finally, on Line~15, $\GGG$ is
  transformed into a CPDAG, which can be done in time $O(m\log n)$, see
  \cite{chic95}.

  Overall, the time complexity of Algorithm~\ref{algo1} is therefore
  $O(n^2k^3h|\DDD|+m\log n)$. 
\end{proof}

\section{Additional experiments}

In this section, additional experiments generated similarly to those of Section~\ref{expe_sec}
are performed, highlighting the robustness of our results.
In the first two subsections, we vary the number of latent confounders as well
as their domain size. The experiments highlight the fact that the results
presented in the paper (both recovering the structure and detecting latent
confounders) are not sensitive to these features. 
In the third subsection, we vary the limit on the number of parents required
by CPBayes from 4 to 6. The results show that increasing these numbers 
lead to significant improvements neither in the learning of structures nor
in that of latent confounders.

\subsection{Detecting multiple/non-Boolean latent confounders}

Table~\ref{nblat_lat_tab} reports the learning of the confounders in
datasets generated by the Insurance CBN to which we added 2 to 4 latent
Boolean confounders. As for the results presented in the paper, whatever
the number of latent confounders to be found, Algorithm~1 outperforms MIIC
and FCI w.r.t.\ the wrongly identified confounders and the precision
metrics. FCI outperforms the other algorithms w.r.t.\ the number of
correctly identified latent confounders and the recall metrics. As for the
F1 metrics, Algorithm~1 and MIIC outperform the other algorithms on large
and small datasets respectively.

\begin{table*}[ht]
  \centering
  \scalebox{1}{
    \begin{tabular}{@{}c@{\ }*{17}{c}@{}}
      \hline
      \toprule
      & & \multicolumn{5}{c}{\textbf{Algorithm 1}} & \multicolumn{5}{c}{\textbf{MIIC}} & \multicolumn{5}{c}{\textbf{FCI}}\\
      \cmidrule(l){3-8} \cmidrule(l){9-13} \cmidrule(l){14-18}
      $|\XXX_H|$
      &$|\DDD|$ &ok     &$\neg$ok   &prec.      &recall &F1   &time   &ok    &$\neg$ok &prec.   &recall     &F1         &ok   &$\neg$ok &prec. &recall  &F1\\
      \midrule
      \multirow{5}{*}{2}
      &5000	&0.20	&{\bf0.20}  &{\bf0.50}  &0.10   &0.17	    &0.042  &1.62       &4.04  &0.29	 &0.81	    &{\bf0.42}	&{\bf1.88}	& 8.86	&0.18	&{\bf0.94}  &0.30\\
      &10000	&0.30	&{\bf0.28}  &{\bf0.52}  &0.15   &0.23	    &0.079  &1.48       &3.32  &0.31	 &0.74	    &{\bf0.44}	&{\bf1.96}	& 8.78	&0.18	&{\bf0.98}  &0.31\\
      &20000	&0.62	&{\bf0.30}  &{\bf0.67}  &0.31   &0.42	    &0.172  &1.66       &3.88  &0.30	 &0.83	    &{\bf0.44}	&{\bf1.94}	& 8.44	&0.19	&{\bf0.97}  &0.31\\
      &50000	&1.16	&{\bf0.54}  &{\bf0.68}  &0.58   &{\bf0.63}  &0.373  &1.78       &3.48  &0.34	 &0.89	    &0.49	&{\bf1.90}	& 8.66	&0.18	&{\bf0.95}  &0.30\\
      &100000	&1.32	&{\bf0.72}  &{\bf0.65}  &0.66   &{\bf0.65}  &0.791  &{\bf1.76}  &3.00  &0.37	 &{\bf0.88} &0.52	&{\bf1.76}	& 8.20	&0.18	&{\bf0.88}  &0.29\\
      \midrule
      \multirow{5}{*}{3}
      & 5000	&0.18	&{\bf0.16}  &{\bf0.53}	&0.06	&0.11	    &0.047  &2.04       &5.20  &0.28	 &0.68	    &{\bf0.40}  &{\bf2.62}	& 9.84	&0.21	&{\bf0.87}  &0.34\\
      &10000	&0.48	&{\bf0.20}  &{\bf0.71}	&0.16	&0.26	    &0.088  &2.38       &5.00  &0.32	 &0.79	    &{\bf0.46}  &{\bf2.86}	&10.10	&0.22	&{\bf0.95}  &0.36\\
      &20000	&0.96	&{\bf0.48}  &{\bf0.67}	&0.32	&{\bf0.43}  &0.207  &2.08       &6.22  &0.25	 &0.69	    &0.37	&{\bf2.84}	& 9.90	&0.22	&{\bf0.95}  &0.36\\
      &50000	&1.50	&{\bf0.52}  &{\bf0.74}	&0.50	&{\bf0.60}  &0.469  &2.26       &5.96  &0.27	 &0.75	    &0.40	&{\bf2.78}	& 8.78	&0.24	&{\bf0.93}  &0.38\\
      &100000   &1.74	&{\bf0.76}  &{\bf0.70}	&0.58	&{\bf0.63}  &1.383  &2.28	&5.24  &0.30	 &0.76	    &0.43	&{\bf2.60}	& 8.30	&0.24	&{\bf0.87}  &0.37\\
      \midrule
      \multirow{5}{*}{4}
      & 5000	&0.20	&{\bf0.24}  &{\bf0.45}	&0.05	&0.09	    &0.057  &2.62       &6.66  &0.28	 &0.66	    &{\bf0.39}  &{\bf3.56}	&12.60	&0.22	&{\bf0.89}  &0.35\\
      &10000	&0.40	&{\bf0.32}  &{\bf0.56}	&0.10	&0.17	    &0.104  &2.70       &6.08  &0.31	 &0.68	    &{\bf0.42}  &{\bf3.66}	&11.24	&0.25	&{\bf0.92}  &0.39\\
      &20000	&1.08	&{\bf0.34}  &{\bf0.76}	&0.27	&0.40	    &0.221  &3.04       &6.04  &0.33	 &0.76	    &{\bf0.46}  &{\bf3.70}	&11.04	&0.25	&{\bf0.93}  &0.39\\
      &50000	&1.96	&{\bf0.96}  &{\bf0.67}	&0.49	&{\bf0.57}  &0.588  &2.94       &6.72  &0.30	 &0.74	    &0.43	&{\bf3.68}	&10.20	&0.27	&{\bf0.92}  &0.41\\
      &100000   &2.16	&{\bf1.20}  &{\bf0.64}	&0.54	&{\bf0.59}  &1.748  &2.90	&6.72  &0.30	 &0.73	    &0.43	&{\bf3.38}	& 8.74	&0.28	&{\bf0.85}  &0.42\\
      \bottomrule
    \end{tabular}}
  \caption{\label{nblat_lat_tab}Confounders learnt for Insurance with different numbers of Boolean latent confounders.}
\end{table*}

Table~\ref{dom_lat_tab} reports the learning of the confounders in
datasets generated by the Insurance CBN to which we added two latent
confounders with different domain sizes. Here again, the results are similar to those provided in the paper.

\begin{table*}[ht]
  \centering
  \scalebox{1}{
    \begin{tabular}{@{}c@{\ }*{17}{c}@{}}
      \hline
      \toprule
      & & \multicolumn{5}{c}{\textbf{Algorithm 1}} & \multicolumn{5}{c}{\textbf{MIIC}} & \multicolumn{5}{c}{\textbf{FCI}}\\
      \cmidrule(l){3-8} \cmidrule(l){9-13} \cmidrule(l){14-18}
      $|\dom{L_i}|$
      &$|\DDD|$ &ok  &$\neg$ok  &prec.    &recall &F1         &time   &ok &$\neg$ok &prec.      &recall     &F1         &ok   &$\neg$ok &prec. &recall  &F1\\
      \midrule
      \multirow{5}{*}{2}
      &5000	&0.20	&{\bf0.20}  &{\bf0.50}  &0.10   &0.17	    &0.042  &1.62       &4.04   &0.29  &0.81	   &{\bf0.42}  &{\bf1.88}  & 8.86  &0.18  &{\bf0.94}  &0.30\\
      &10000	&0.30	&{\bf0.28}  &{\bf0.52}  &0.15   &0.23	    &0.079  &1.48       &3.32   &0.31  &0.74	   &{\bf0.44}  &{\bf1.96}  & 8.78  &0.18  &{\bf0.98}  &0.31\\
      &20000	&0.62	&{\bf0.30}  &{\bf0.67}  &0.31   &0.42	    &0.172  &1.66       &3.88   &0.30  &0.83	   &{\bf0.44}  &{\bf1.94}  & 8.44  &0.19  &{\bf0.97}  &0.31\\
      &50000	&1.16	&{\bf0.54}  &{\bf0.68}  &0.58   &{\bf0.63}  &0.373  &1.78       &3.48   &0.34  &0.89	   &0.49       &{\bf1.90}  & 8.66  &0.18  &{\bf0.95}  &0.30\\
      &100000	&1.32	&{\bf0.72}  &{\bf0.65}  &0.66   &{\bf0.65}  &0.791  &{\bf1.76}  &3.00   &0.37  &{\bf0.88}  &0.52       &{\bf1.76}  & 8.20  &0.18  &{\bf0.88}  &0.29\\      
      \midrule
      \multirow{5}{*}{3}
      & 5000	&0.10	&{\bf0.20}  &{\bf0.33}  &0.05   &0.09       &0.048  &1.18       &5.16	&0.19  &0.59	   &{\bf0.28}  &{\bf1.86}  &10.38  &0.15  &{\bf0.93}  &0.26\\
      &10000	&0.26	&{\bf0.28}  &{\bf0.48}  &0.13   &0.20       &0.091  &1.14       &3.48	&0.25  &0.57	   &{\bf0.34}  &{\bf1.74}  & 9.22  &0.16  &{\bf0.87}  &0.27\\
      &20000	&0.62	&{\bf0.48}  &{\bf0.56}  &0.31   &{\bf0.40}  &0.181  &1.50       &4.46	&0.25  &0.75	   &0.38       &{\bf1.68}  & 8.86  &0.16  &{\bf0.84}  &0.27\\
      &50000	&0.88	&{\bf0.62}  &{\bf0.59}  &0.44   &{\bf0.50}  &0.420  &1.48       &4.16	&0.26  &0.74	   &0.39       &{\bf1.68}  & 8.82  &0.16  &{\bf0.84}  &0.27\\
      &100000   &1.10	&{\bf0.78}  &{\bf0.59}	&0.55   &{\bf0.57}  &1.108  &1.32       &4.46	&0.23  &0.66	   &0.34       &{\bf1.54}  & 7.18  &0.18  &{\bf0.77}  &0.29\\
      \midrule
      \multirow{5}{*}{4}   	 	 	 	 	 	 	 	 	 	 	 	 	 	 
      & 5000	&0.20	&{\bf0.24}  &{\bf0.45}  &0.10   &0.16       &0.051  &1.34	&5.46	&0.20  &0.67	   &{\bf0.30}  &{\bf1.86}  &11.32  &0.14  &{\bf0.93}  &0.25\\
      &10000	&0.28	&{\bf0.26}  &{\bf0.52}  &0.14   &0.22       &0.097  &1.42	&4.80	&0.23  &0.71	   &{\bf0.35}  &{\bf1.82}  &10.10  &0.15  &{\bf0.91}  &0.26\\
      &20000	&0.50	&{\bf0.36}  &{\bf0.58}  &0.25   &0.35       &0.197  &1.42	&4.58	&0.24  &0.71	   &{\bf0.36}  &{\bf1.66}  & 9.74  &0.15  &{\bf0.83}  &0.25\\
      &50000	&0.84	&{\bf0.64}  &{\bf0.57}  &0.42   &{\bf0.48}  &0.442  &1.32	&5.50	&0.19  &0.66	   &0.30       &{\bf1.64}  & 8.82  &0.16  &{\bf0.82}  &0.26\\
      &100000   &1.08	&{\bf0.60}  &{\bf0.64}	&0.54   &{\bf0.59}  &1.121  &1.52	&6.50	&0.19  &0.76	   &0.30       &{\bf1.62}  & 8.44  &0.16  &{\bf0.81}  &0.27\\
 \bottomrule
    \end{tabular}}
  \caption{\label{dom_lat_tab}Confounders learnt for Insurance with latent confounders with different domain sizes.}
\end{table*}

\subsection{The quality of the CPDAGs}

Table~\ref{nblat_dag_tab} compares the CPDAGs learnt by Algorithm~1, MIIC and FCI with
those of the CBNs that generated the datasets. These CBNs correspond to Insurance to
which we added 2 to 4 Boolean latent confounders. As for the results
presented in the paper, Algorithm~1 outperforms MIIC and FCI in terms of
the number of arcs/edges learnt correctly as well as in terms of the
incorrect types of edges/arcs (undirected edges (resp.\ arcs) of the generating CBN
learnt as arcs (resp.\ edges)) and in terms of the edges/arcs learnt in
excess (the original CBN contains neither an edge nor an arc for the pairs
of nodes concerned). FCI outperforms the other algorithms in terms of arcs
reversed. MIIC is the best in terms of missed arcs/edges.

\begin{table*}[ht]
  \centering
  \scalebox{1}{
    \begin{tabular}{@{}c@{\quad}cc@{\quad}*{4}{r@{\quad}}c@{\quad}*{4}{r@{\quad}}c@{\quad}*{4}{r@{\quad}}@{}}
      \hline
      \toprule
      & & \multicolumn{5}{c}{\textbf{Algorithm 1}} & \multicolumn{5}{c}{\textbf{MIIC}} & \multicolumn{5}{c}{\textbf{FCI}}\\
      \cmidrule(l){3-7} \cmidrule(l){8-12} \cmidrule(l){13-17}
      $|\XXX_H|$&$|\DDD|$&ok&miss&rev.~&type&xs~~~&ok&miss&rev.~&type&xs~~~&ok&miss&rev.~&type&xs~~~\\
      \midrule
      \multirow{5}{*}{2}
      &5000	&{\bf32.38}  &14.44	  &2.46	  &{\bf6.72}  &{\bf 3.82}  &29.20       &{\bf12.70}  &3.60  &10.50  &12.44  &23.90       &21.60  &{\bf0.46}  &10.04      &18.06\\
      &10000	&{\bf34.68}  &12.22	  &2.38	  &{\bf6.72}  &{\bf 4.06}  &29.86       &{\bf11.70}  &3.52  &10.92  &11.46  &25.76       &18.64  &{\bf0.30}  &11.30      &17.72\\
      &20000	&{\bf38.76}  &{\bf 9.20}  &3.34	  &{\bf4.70}  &{\bf 5.04}  &30.42       &10.00	     &3.74  &11.84  &13.12  &27.98       &16.18  &{\bf0.14}  &11.70	 &17.08\\
      &50000	&{\bf41.08}  &{\bf 7.58}  &4.42	  &{\bf2.92}  &{\bf 6.42}  &31.32       & 8.34	     &4.00  &12.34  &12.94  &29.42       &13.92  &{\bf0.10}  &12.56	 &17.70\\
      &100000	&{\bf40.34}  &{\bf 7.08}  &4.94	  &{\bf3.64}  &{\bf 7.48}  &31.46       & 7.88	     &3.88  &12.78  &13.40  &31.36       &12.64  &{\bf0.12}  &11.88	 &16.96\\
      \midrule
      \multirow{5}{*}{3}
      & 5000	&{\bf33.52}  &15.46	  &1.70	  &{\bf7.32}  &{\bf 3.70}  &31.94	&{\bf12.08}  &5.24  & 8.74  &17.16  &27.34       &21.44  &{\bf1.60}  & 7.62	 &20.08\\
      &10000	&{\bf34.72}  &13.38	  &2.54	  &{\bf7.36}  &{\bf 4.86}  &33.80	&{\bf10.00}  &5.74  & 8.46  &17.38  &29.62       &19.08  &{\bf1.20}  & 8.10	 &20.62\\
      &20000	&{\bf37.84}  &10.08	  &3.36	  &{\bf6.72}  &{\bf 6.88}  &32.64	&{\bf 9.14}  &5.58  &10.64  &21.94  &31.94       &16.08  &{\bf0.84}  & 9.14	 &20.12\\
      &50000	&{\bf40.42}  & 8.54	  &4.28	  &{\bf4.76}  &{\bf 8.36}  &35.26	&{\bf 7.80}  &4.92  &10.02  &23.24  &34.00       &13.18  &{\bf0.72}  &10.10	 &18.20\\
      &100000   &{\bf41.76}  &{\bf7.66}	  &4.54	  &{\bf4.04}  &{\bf10.10}  &35.70	&{\bf 7.66}  &4.12  &10.52  &21.32  &35.84       &11.82  &{\bf0.42}  & 9.92	 &17.74\\
      \midrule
      \multirow{5}{*}{4}
      & 5000	&{\bf33.86}  &16.08	  &1.84	  &8.22	      &{\bf 4.34}  &33.66	&{\bf11.28}  &6.56  & 8.50  &20.42  &29.10       &23.06  &{\bf1.80}  &{\bf6.04}  &25.46\\
      &10000	&34.44       &14.52	  &2.36	  &8.68	      &{\bf 5.72}  &{\bf35.54}	&{\bf 9.96}  &6.36  & 8.14  &21.82  &32.80       &19.30  &{\bf1.54}  &{\bf6.36}  &22.80\\
      &20000	&{\bf38.54}  &11.78	  &3.40	  &{\bf6.28}  &{\bf 7.74}  &36.70	&{\bf 8.40}  &5.70  & 9.20  &22.66  &33.66       &17.00  &{\bf0.64}  & 8.70	 &22.62\\
      &50000	&{\bf40.34}  & 9.50	  &4.68	  &{\bf5.48}  &{\bf11.74}  &36.04	&{\bf 7.98}  &5.98  &10.00  &26.80  &36.60       &14.04  &{\bf0.48}  & 8.88	 &21.30\\
      &100000   &37.84	     & 8.94	  &5.38	  &7.84	      &{\bf13.60}  &37.94	&{\bf 7.34}  &6.96  & 7.76  &29.02  &{\bf39.92}	 &12.28	 &{\bf1.04}  &{\bf6.76}	 &19.04\\
      \bottomrule
    \end{tabular}}
  \caption{\label{nblat_dag_tab}Comparisons of the learnt CPDAGs with those
    of the CBNs (with confounders) that generated the datasets in function
    of the number of the CBN's latent confounders.}
\end{table*}

Table~\ref{dom_dag_tab} reports the results of experiments in which the
datasets were generated from Insurance with two latent confounders. Here,
the domain sizes of these confounders vary from 2 to 4.

\begin{table*}[ht]
  \centering
  \scalebox{1}{
    \begin{tabular}{@{}c@{\quad}cc@{\quad}*{4}{r@{\quad}}c@{\quad}*{4}{r@{\quad}}c@{\quad}*{4}{r@{\quad}}@{}}
      \hline
      \toprule
      & & \multicolumn{5}{c}{\textbf{Algorithm 1}} & \multicolumn{5}{c}{\textbf{MIIC}} & \multicolumn{5}{c}{\textbf{FCI}}\\
      \cmidrule(l){3-7} \cmidrule(l){8-12} \cmidrule(l){13-17}
      $|\dom{L_i}|$&$|\DDD|$&ok&miss&rev.~&type&xs~~~&ok&miss&rev.~&type&xs~~~&ok&miss&rev.~&type&xs~~~\\
      \midrule
      \multirow{5}{*}{2}
      &5000	&{\bf32.38}  &14.44	  &2.46	  &{\bf6.72}  &{\bf3.82}  &29.20  &{\bf12.70}  &3.60  &10.50	&12.44	&23.90	&21.60	&{\bf0.46}  &10.04      &18.06\\
      &10000	&{\bf34.68}  &12.22	  &2.38	  &{\bf6.72}  &{\bf4.06}  &29.86  &{\bf11.70}  &3.52  &10.92	&11.46	&25.76  &18.64	&{\bf0.30}  &11.30      &17.72\\
      &20000	&{\bf38.76}  &{\bf 9.20}  &3.34	  &{\bf4.70}  &{\bf5.04}  &30.42  &10.00       &3.74  &11.84	&13.12	&27.98  &16.18	&{\bf0.14}  &11.70	&17.08\\
      &50000	&{\bf41.08}  &{\bf 7.58}  &4.42	  &{\bf2.92}  &{\bf6.42}  &31.32  & 8.34       &4.00  &12.34	&12.94	&29.42  &13.92	&{\bf0.10}  &12.56	&17.70\\
      &100000	&{\bf40.34}  &{\bf 7.08}  &4.94	  &{\bf3.64}  &{\bf7.48}  &31.46  & 7.88       &3.88  &12.78	&13.40	&31.36  &12.64	&{\bf0.12}  &11.88	&16.96\\
      \midrule
      \multirow{5}{*}{3}
      & 5000	&{\bf34.74}  &13.78	  &1.76	  &{\bf5.72}  &{\bf2.90}  &28.76  &{\bf11.58}  &3.30  &12.36	&15.88	&23.02	&21.40	&{\bf0.64}  &10.94	&21.04\\
      &10000	&{\bf38.00}  &11.86	  &1.96	  &{\bf4.18}  &{\bf3.58}  &29.22  &{\bf11.04}  &3.04  &12.70	&12.42	&25.84	&18.54	&{\bf0.22}  &11.40	&18.90\\
      &20000	&{\bf40.16}  & 9.26	  &3.16	  &{\bf3.42}  &{\bf5.24}  &30.20  &{\bf 9.20}  &3.62  &12.98	&15.58	&27.86	&16.16	&{\bf0.28}  &11.70	&18.36\\
      &50000	&{\bf41.76}  &{\bf 7.72}  &3.92	  &{\bf2.60}  &{\bf6.54}  &30.48  & 8.12       &3.66  &13.74	&16.40	&29.42	&14.24	&{\bf0.22}  &12.12	&18.24\\
      &100000   &{\bf40.84}  &{\bf7.32}   &4.42	  &{\bf3.42}  &{\bf8.04}  &31.68  & 7.42       &3.36  &13.54    &18.24	&31.98	&11.40	&{\bf0.12}  &12.50	&15.52\\
      \midrule
      \multirow{5}{*}{4}
      & 5000	&{\bf34.22}  &13.28	  &1.88	  &{\bf6.62}  &{\bf3.00}  &27.28  &{\bf11.42}  &3.80  &13.50	&16.38	&22.18	&22.06	&{\bf0.68}  &11.08	&22.90\\
      &10000	&{\bf39.72}  &10.86	  &2.04	  &{\bf3.38}  &{\bf3.44}  &30.38  &{\bf 9.70}  &3.64  &12.28	&15.70	&25.20	&18.12	&{\bf0.38}  &12.30	&20.52\\
      &20000	&{\bf39.92}  & 9.06	  &2.94	  &{\bf4.08}  &{\bf5.08}  &29.80  &{\bf 8.44}  &4.00  &13.76	&17.18	&26.92	&16.22	&{\bf0.24}  &12.62	&20.18\\
      &50000	&{\bf42.38}  & 7.86	  &3.88	  &{\bf1.88}  &{\bf7.08}  &30.58  &{\bf 7.78}  &3.70  &13.94	&20.40	&29.52	&13.28	&{\bf0.16}  &13.04	&18.46\\
      &100000   &{\bf42.24}  & 6.74	  &4.26	  &{\bf2.76}  &{\bf7.92}  &30.76  &{\bf 6.52}  &4.64  &14.08	&25.40	&31.16	&11.96	&{\bf0.14}  &12.74	&17.90\\
      \bottomrule
    \end{tabular}}
  \caption{\label{dom_dag_tab}Comparisons of the learnt CPDAGs with those
    of the CBNs (with 2 latent confounders) that generated the datasets, in function
    of the domain sizes of the CBN's latent confounders.}
\end{table*}

\subsection{The impact of the parents number's limit of CPBayes}

In the experiments section of the paper, the first step of Algorithm~1 is
performed by CPBayes \citep{beek-hoff15}. As such, CPBayes takes as input
so-called {\em instances} that are computed from Datasets $\DDD$. 
These contain all the possible nodes' sets that CPBayes will consider as
potential parent sets and, to control the combinatorial explosion, this
requires limiting the number of possible parents of the nodes. 
In the experiments of Section~\ref{expe_sec}, we set this limit to 4. In the literature,
people also fix it to 5 or 6, see, e.g., \cite*{beek-lee17} or \cite*{troes-givr-kats22}.
This makes sense because, in classical Bayesian networks, nodes seldom have
more than 6 parents. In addition, in practical situations, the number of
parents that can be possibly learnt is limited by the size of the dataset
(e.g., to be meaningful, independence tests require contingency tables much
smaller than the dataset size). Tables~\ref{cpbayes_lim1_tab} and \ref{cpbayes_lim2_tab}
show the impact of these limits on the determination of the latent
confounder and on the learnt structure respectively. In these tables
columns ``N:\#'' report the results obtained by limiting CPBayes instances to
have at most \# parents. As can be observed, there is no noticeable
difference increasing the limit from 4 to 6. This is probably due to the
fact that the benchmark Bayesian networks (child, water, insurance) that
generated the datasets have at most 4 parents.

\begin{table*}[ht!]
  \centering
  \scalebox{.95}{
    \begin{tabular}{@{}c@{\ }*{19}{c}@{}}
      \hline
      \toprule
      & & \multicolumn{3}{c}{ok}  &\multicolumn{3}{c}{$\neg$ok}  &\multicolumn{3}{c}{precision}  &\multicolumn{3}{c}{recall}  &\multicolumn{3}{c}{F1} &\multicolumn{3}{c}{time}\\
      \cmidrule(l){3-5} \cmidrule(l){6-8} \cmidrule(l){9-11} \cmidrule(l){12-14} \cmidrule(l){15-17} \cmidrule(l){18-20}
      CBN &$|\DDD|$  & N:4 & N:5 & N:6 & N:4 & N:5 & N:6 & N:4 & N:5 & N:6 & N:4 & N:5 & N:6 & N:4 & N:5 & N:6 & N:4 & N:5 & N:6 \\
      \midrule
      \multirow{5}{*}{child}
      &5000	&0.50	&0.50  &0.50  &0.06  &0.06  &0.06  &0.89  &0.89  &0.89  &0.25  &0.25  &0.25  &0.39  &0.39  &0.39  &0.014  &0.013  &0.016\\
      &10000	&0.76	&0.76  &0.76  &0.18  &0.18  &0.18  &0.81  &0.81  &0.81  &0.38  &0.38  &0.38  &0.52  &0.52  &0.52  &0.028  &0.029  &0.027\\
      &20000	&1.10	&1.10  &1.10  &0.12  &0.12  &0.12  &0.90  &0.90  &0.90  &0.55  &0.55  &0.55  &0.68  &0.68  &0.68  &0.050  &0.068  &0.059\\
      &50000	&1.42	&1.44  &1.44  &0.14  &0.12  &0.12  &0.91  &0.92  &0.92  &0.71  &0.72  &0.72  &0.80  &0.81  &0.81  &0.103  &0.119  &0.130\\
      &100000	&1.44	&1.44  &1.44  &0.14  &0.14  &0.14  &0.91  &0.91  &0.91  &0.72  &0.72  &0.72  &0.80  &0.80  &0.80  &0.190  &0.223  &0.225\\
      \midrule                                                                                                                              
      \multirow{5}{*}{water}                                                                                                                
      &5000	&0.16	&0.16  &0.16  &1.56  &1.56  &1.56  &0.09  &0.09  &0.09  &0.08  &0.08  &0.08  &0.09  &0.09  &0.09  &0.017  &0.022  &0.016\\
      &10000	&0.44	&0.44  &0.44  &0.04  &0.04  &0.04  &0.92  &0.92  &0.92  &0.22  &0.22  &0.22  &0.35  &0.35  &0.35  &0.043  &0.050  &0.050\\
      &20000	&0.80	&0.80  &0.80  &0.22  &0.22  &0.22  &0.78  &0.78  &0.78  &0.40  &0.40  &0.40  &0.53  &0.53  &0.53  &0.104  &0.131  &0.120\\
      &50000	&1.16	&1.16  &1.16  &0.60  &0.60  &0.60  &0.66  &0.66  &0.66  &0.58  &0.58  &0.58  &0.62  &0.62  &0.62  &0.132  &0.157  &0.159\\
      &100000	&1.22	&1.22  &1.22  &1.04  &1.04  &1.04  &0.54  &0.54  &0.54  &0.61  &0.61  &0.61  &0.57  &0.57  &0.57  &0.248  &0.333  &0.318\\
      \midrule                                                                                                                              
      \multirow{5}{*}{\begin{tabular}{@{}c@{}}insu-\\rance\end{tabular}}                                                                    
      &5000	&0.20	&0.20  &0.20  &0.20  &0.20  &0.20  &0.50  &0.50  &0.50  &0.10  &0.10  &0.10  &0.17  &0.17  &0.17  &0.042  &0.044  &0.042\\
      &10000	&0.30	&0.30  &0.30  &0.28  &0.28  &0.28  &0.52  &0.52  &0.52  &0.15  &0.15  &0.15  &0.23  &0.23  &0.23  &0.079  &0.086  &0.090\\
      &20000	&0.62	&0.60  &0.60  &0.30  &0.34  &0.34  &0.67  &0.64  &0.64  &0.31  &0.30  &0.30  &0.42  &0.41  &0.41  &0.172  &0.192  &0.198\\
      &50000	&1.16	&1.14  &1.14  &0.54  &0.42  &0.42  &0.68  &0.73  &0.73  &0.58  &0.57  &0.57  &0.63  &0.64  &0.64  &0.373  &0.429  &0.419\\
      &100000	&1.32	&1.32  &1.32  &0.72  &0.54  &0.54  &0.65  &0.71  &0.71  &0.66  &0.66  &0.66  &0.65  &0.68  &0.68  &0.791  &1.009  &0.994\\
      \bottomrule
  \end{tabular}}
  \caption{\label{cpbayes_lim1_tab}Confounders found for different limits on the number of parents used by CPBayes.}
\end{table*}

\begin{table*}[!ht]
  \centering
  \scalebox{.95}{
    \begin{tabular}{@{}c@{\ }*{16}{c}@{}}
      \hline
      \toprule
      & & \multicolumn{3}{c}{ok} & \multicolumn{3}{c}{miss} & \multicolumn{3}{c}{reversed} & \multicolumn{3}{c}{type} & \multicolumn{3}{c}{excess} \\
      \cmidrule(l){3-5} \cmidrule(l){6-8} \cmidrule(l){9-11} \cmidrule(l){12-14} \cmidrule(l){15-17} 
      CBN & $|\DDD|$ & N:4 & N:5 & N:6 & N:4 & N:5 & N:6 & N:4 & N:5 & N:6 & N:4 & N:5 & N:6 & N:4 & N:5 & N:6 \\
      \midrule
      \multirow{5}{*}{child}
      &5000	&22.46  &22.46  &22.46  & 3.08  & 3.08  & 3.08  &1.02  &1.02  &1.02  &2.44  &2.44  &2.44  &2.54  &2.54  &2.54 \\
      &10000	&23.10  &23.10  &23.10  & 2.52  & 2.52  & 2.52  &1.52  &1.52  &1.52  &1.86  &1.86  &1.86  &3.12  &3.12  &3.12 \\
      &20000	&22.88  &22.88  &22.88  & 1.84  & 1.84  & 1.84  &2.22  &2.22  &2.22  &2.06  &2.06  &2.06  &3.62  &3.64  &3.64 \\
      &50000	&22.98  &22.98  &22.98  & 1.22  & 1.18  & 1.18  &2.84  &2.88  &2.88  &1.96  &1.96  &1.96  &4.34  &4.36  &4.36 \\   
      &100000	&23.20  &23.20  &23.20  & 1.14  & 1.14  & 1.14  &2.88  &2.88  &2.88  &1.78  &1.78  &1.78  &4.50  &4.56  &4.56 \\   
      \midrule                                                                                                              
      \multirow{5}{*}{water}                                                                                                
      &5000	&16.72  &16.72  &16.72  &40.18  &40.18  &40.18  &4.72  &4.72  &4.72  &8.38  &8.38  &8.38  &4.60  &4.60  &4.60 \\
      &10000	&19.00  &19.00  &19.00  &36.98  &36.98  &36.98  &7.58  &7.58  &7.58  &6.44  &6.44  &6.44  &5.96  &5.96  &5.96 \\
      &20000	&21.12	&21.12  &21.12  &36.64  &36.64  &36.64  &6.82  &6.82  &6.82  &5.42  &5.42  &5.42  &8.72  &8.72  &8.72 \\
      &50000	&26.22  &26.22  &26.22  &32.48  &32.48  &32.48  &7.20  &7.20  &7.20  &4.10  &4.10  &4.10  &7.02  &7.02  &7.02 \\
      &100000	&29.94  &29.94  &29.94  &29.42  &29.42  &29.42  &7.10  &7.10  &7.10  &3.54  &3.54  &3.54  &7.06  &7.06  &7.06 \\
      \midrule                                                                                                              
      \multirow{5}{*}{insurance}                                                                                            
      &5000	&32.38  &32.38  &32.38  &14.44  &14.44  &14.44  &2.46  &2.46  &2.46  &6.72  &6.72  &6.72  &3.82  &3.82  &3.82 \\
      &10000	&34.68  &34.68  &34.68  &12.22  &12.22  &12.22  &2.38  &2.38  &2.38  &6.72  &6.72  &6.72  &4.06  &4.06  &4.06 \\
      &20000	&38.76  &38.76  &38.76  & 9.20  & 9.24  & 9.24  &3.34  &3.30  &3.30  &4.70  &4.70  &4.70  &5.04  &5.04  &5.04 \\
      &50000	&41.08  &41.56  &41.56  & 7.58  & 7.60  & 7.60  &4.42  &4.26  &4.26  &2.92  &2.58  &2.58  &6.42  &6.06  &6.06 \\
      &100000	&40.34  &41.06  &41.06  & 7.08  & 7.08  & 7.08  &4.94  &4.76  &4.76  &3.64  &3.10  &3.10  &7.48  &6.90  &6.90 \\
      \bottomrule
    \end{tabular}}
  \caption{\label{cpbayes_lim2_tab}Comparisons of the learnt CPDAGs with those of the CBNs (with confounders) that generated the datasets.}
\end{table*}

\section{Converting a causal model into a Bayesian network}

In this section, we show on an example how to convert a causal model, as
defined in Definition~1 of the paper, into a Bayesian network. For this
purpose, consider the causal model of Figure~\ref{cm_fig1}, where $\XXX =
\{A,B,C,D,E,F\}$. The domain sizes of all the random disturbances $\xi_i$
are equal to $\{1,2,3,4\}$. Those of the variables of $\XXX$ will be
clearly identified from their assigned parameters.

\begin{figure}[htb]
  \centerline{
    \begin{tikzpicture}[
      mynode/.style={circle, draw, font=\scriptsize,
        bottom color = red!5, top color = red!30,
        black, circular drop shadow , text = black, 
        inner sep=0.4mm, minimum size=4.5mm,
        node distance=6mm and 4mm},      
      mynode2/.style={circle, dotted, draw, font=\scriptsize,
        bottom color = BlockBodyColor!50, top color = BlockBodyColor,
        black, circular drop shadow , text = black, 
        inner sep=0.4mm, minimum size=4.5mm,
        node distance=6mm and 4mm},
      myedge/.style={thick, ->,
        >={Stealth[inset=0pt,length=8pt,angle'=28,round]}},
      myedge2/.style={myedge, decorate,decoration={snake,amplitude=1.5pt,pre length=1pt,post length=3.3pt}}
      ]
      \node[mynode] (E) {$E$};
      \node[mynode, above left=  of E] (C) {$C$};
      \node[mynode, above right= of E] (D) {$D$};
      \node[mynode, above left=  of C] (A) {$A$};
      \node[mynode, above right= of C] (B) {$B$};
      \node[mynode, below= of E] (F) {$F$};

      \draw[myedge] (A) -- (C);
      \draw[myedge] (B) -- (C);
      \draw[myedge] (B) -- (D);
      \draw[myedge] (C) -- (E);
      \draw[myedge] (D) -- (E);
      \draw[myedge] (E) -- (F);

      \node[mynode2, above left=  of A] (eA) {$\xi_A$};
      \node[mynode2, above left=  of B] (eB) {$\xi_B$};
      \node[mynode2, left= 7mm of C] (eC) {$\xi_C$};
      \node[mynode2, left= 7mm of E] (eE) {$\xi_E$};
      \node[mynode2, left= 7mm of F] (eF) {$\xi_F$};
      \node[mynode2, above right= of D] (eD) {$\xi_D$};

      \draw[myedge2] (eA) -- (A);
      \draw[myedge2] (eB) -- (B);
      \draw[myedge2] (eC) -- (C);
      \draw[myedge2] (eE) -- (E);
      \draw[myedge2] (eF) -- (F);
      \draw[myedge2] (eD) -- (D);
        
    \end{tikzpicture}
  }
  \caption{\label{cm_fig1}The structure of a causal model.}
\end{figure}
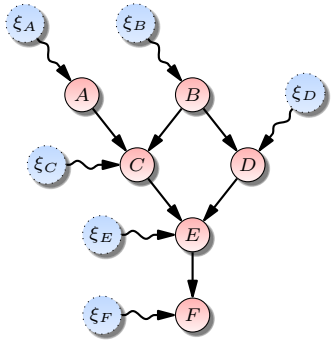

The probability distributions of the random disturbances are defined as
follows:
\begin{center}
  $
  \renewcommand{\extrarowheight}{6pt}
\begin{array}{l}
  P(\xi_A) = \renewcommand{\extrarowheight}{3pt}
  \begin{array}{|*{4}{c|}}
    \hline
    0.2 & 0.1 & 0.3 & 0.4 \\
    \hline
  \end{array}
  \\
  P(\xi_B) = \renewcommand{\extrarowheight}{3pt}
  \begin{array}{|*{4}{c|}}
    \hline
    0.2 & 0.4 & 0.3 & 0.1 \\
    \hline
  \end{array}
  \\
  P(\xi_C) = \renewcommand{\extrarowheight}{3pt}
  \begin{array}{|*{4}{c|}}
    \hline
    0.3 & 0.3 & 0.2 & 0.2 \\
    \hline
  \end{array}
  \\
  P(\xi_D) = \renewcommand{\extrarowheight}{3pt}
  \begin{array}{|*{4}{c|}}
    \hline
    0.5 & 0.3 & 0.1 & 0.1 \\
    \hline
  \end{array}
  \\
  P(\xi_E) = \renewcommand{\extrarowheight}{3pt}
  \begin{array}{|*{4}{c|}}
    \hline
    0.1 & 0.2 & 0.3 & 0.4 \\
    \hline
  \end{array}
  \\
  P(\xi_F) = \renewcommand{\extrarowheight}{3pt}
  \begin{array}{|*{4}{c|}}
    \hline
    0.2 & 0.3 & 0.3 & 0.2 \\
    \hline
  \end{array}
  \\
\end{array}
$
\end{center}

Definition~1 of the paper refers to Definition~2.2.2 from \cite*{pear09}. As
such, the parameters $f_i$ assigned to the variables of $\XXX$ are
deterministic functions (see \cite[p.~68]{pear09}). In our example, the
parameters of the causal model are the following:

\begin{displaymath}
\renewcommand{\extrarowheight}{6pt}
\begin{array}{@{}r@{\ }c@{\ }l@{}}
  f_A(\xi_A) & = &
  \left\{
    \begin{array}{@{\ }l@{\ \ }l}
      a_1 & \mbox{if } \xi_A \leq 2, \\
      a_2 & \mbox{otherwise}      
    \end{array}
  \right.
  \\
  f_B(\xi_B) & = &
  \left\{
    \begin{array}{@{\ }l@{\ \ }l}
      b_1 & \mbox{if } \xi_B \!\mod 2 = 0, \\
      b_2 & \mbox{otherwise}      
    \end{array}
  \right.
  \\
  f_C(A,B,\xi_C) & = &
  \left\{
    \begin{array}{@{\ }l@{\ \ }l}
      A & \mbox{if } \xi_C \leq 2, \\
      B & \mbox{otherwise}      
    \end{array}
  \right.
  \\
  f_D(B,\xi_D) & = &
  \left\{
    \begin{array}{@{\ }l@{\ \ }l}
      d_1 & \mbox{if } B=b_1 \mbox{ and } \xi_D \!\mod 2 = 1, \\
      d_2 & \mbox{if } B=b_2 \mbox{ and } \xi_D = 2, \\
      d_3 & \mbox{otherwise}      
    \end{array}
  \right.
  \\
  f_E(C,D,\xi_E) & = &
  \left\{
    \begin{array}{@{\ }l@{\ \ }l}
      e_1 & \mbox{if } C \in \{a_1,b_1\} \mbox{ and } \xi_E \leq 2, \\
      e_2 & \mbox{if } C \in \{a_2,b_2\} \mbox{ and } D = d_2, \\
      e_3 & \mbox{otherwise}
    \end{array}
  \right.
  \\
  f_F(E,\xi_F) & = &
  \left\{
    \begin{array}{@{\ }l@{\ \ }l}
      f_1 & \mbox{if } \xi_F = 1, \\
      f_2 & \mbox{if } E \in \{e_1,e_3\} \mbox{ and } \xi_F = 2, \\
      f_3 & \mbox{otherwise}      
    \end{array}
  \right.
\end{array}
\end{displaymath}

Deterministic functions like those above can be equivalently represented as
conditional probability tables (CPT) such that: i)~the variables on the
right side of the conditioning bar are those over which the function is defined;
ii)~the variable on the left side of the conditioning bar is the one corresponding
to the codomain of the function; iii)~the values in the CPT are either 1 or
0, depending on whether the value of the variable on the left side of the
conditioning bar corresponds or not to the value of the function given
those of the variables at the right side of the conditioning
bar. Therefore, the above deterministic functions can be represented as:

\begin{minipage}{\linewidth}
\begin{displaymath}
\begin{array}{@{}r@{\ }c@{\ }l@{}}
  P(A | \xi_A) & = &
  \renewcommand{\extrarowheight}{3pt}
  \begin{array}{@{}*{5}{|c}|}
    \hline
    A \backslash \xi_A & 1 & 2 & 3 & 4 \\
    \hline                   
    a_1 & 1 & 1 & 0 & 0 \\
    \hline
    a_2 & 0 & 0 & 1 & 1 \\
    \hline
  \end{array}
  \\
  \rule{0pt}{11mm}P(B | \xi_B) & = & 
  \renewcommand{\extrarowheight}{3pt}
  \begin{array}{@{}*{5}{|c}|}
    \hline
    B \backslash \xi_B & 1 & 2 & 3 & 4 \\
    \hline
    b_1 & 0 & 1 & 0 & 1 \\
    \hline
    b_2 & 1 & 0 & 1 & 0 \\
    \hline
  \end{array}
  \\
  \rule{0pt}{21mm}P(C |A,B,\xi_C) & = &
  \renewcommand{\extrarowheight}{3pt}
  \begin{array}{@{}*{17}{|c}|}
    \cline{2-17}
    \multicolumn{1}{c|}{} & \multicolumn{8}{c|}{a_1} & \multicolumn{8}{c|}{a_2} \\
    \cline{2-17}
    \multicolumn{1}{c|}{} & \multicolumn{4}{c|}{b_1} & \multicolumn{4}{c|}{b_2} & \multicolumn{4}{c|}{b_1} & \multicolumn{4}{c|}{b_2} \\
    \hline
    C \backslash \xi_C & 1 & 2 & 3 & 4 & 1 & 2 & 3 & 4 & 1 & 2 & 3 & 4 & 1 & 2 & 3 & 4 \\
    \hline
    a_1 & 1 & 1 & 0 & 0 & 1 & 1 & 0 & 0 & 0 & 0 & 0 & 0 & 0 & 0 & 0 & 0 \\
    \hline
    a_2 & 0 & 0 & 0 & 0 & 0 & 0 & 0 & 0 & 1 & 1 & 0 & 0 & 1 & 1 & 0 & 0 \\
    \hline
    b_1 & 0 & 0 & 1 & 1 & 0 & 0 & 0 & 0 & 0 & 0 & 1 & 1 & 0 & 0 & 0 & 0 \\
    \hline
    b_2 & 0 & 0 & 0 & 0 & 0 & 0 & 1 & 1 & 0 & 0 & 0 & 0 & 0 & 0 & 1 & 1 \\
    \hline
  \end{array}
  \\
  \rule{0pt}{16mm}P(D | B,\xi_D) & = &
  \renewcommand{\extrarowheight}{3pt}
  \begin{array}{@{}*{9}{|c}|}
    \cline{2-9}
    \multicolumn{1}{c|}{} & \multicolumn{4}{c|}{b_1} & \multicolumn{4}{c|}{b_2} \\
    \hline
    D \backslash \xi_D & 1 & 2 & 3 & 4 & 1 & 2 & 3 & 4  \\
    \hline
    d_1 & 1 & 0 & 1 & 0 & 0 & 0 & 0 & 0 \\
    \hline
    d_2 & 0 & 0 & 0 & 0 & 0 & 1 & 0 & 0 \\
    \hline
    d_3 & 0 & 1 & 0 & 1 & 1 & 0 & 1 & 1 \\
    \hline
  \end{array}
  \\
  P(E|C,D,\xi_E) & = &
  \rule{0pt}{18mm} \renewcommand{\extrarowheight}{3pt}
  \begin{array}{@{}*{25}{|c}|}
    \cline{2-25}
    \multicolumn{1}{c|}{} & \multicolumn{12}{c|}{a_1} & \multicolumn{12}{c|}{a_2} \\
    \cline{2-25}
    \multicolumn{1}{c|}{} & \multicolumn{4}{c|}{d_1} & \multicolumn{4}{c|}{d_2} & \multicolumn{4}{c|}{d_3}
                          & \multicolumn{4}{c|}{d_1} & \multicolumn{4}{c|}{d_2} & \multicolumn{4}{c|}{d_3} \\
    \hline
    E \backslash \xi_E & 1 & 2 & 3 & 4 & 1 & 2 & 3 & 4 & 1 & 2 & 3 & 4 & 1 & 2 & 3 & 4 & 1 & 2 & 3 & 4 & 1 & 2 & 3 & 4 \\
    \hline
    e_1 & 1 & 1 & 0 & 0 & 1 & 1 & 0 & 0 & 1 & 1 & 0 & 0    & 0 & 0 & 0 & 0 & 0 & 0 & 0 & 0 & 0 & 0 & 0 & 0 \\
    \hline
    e_2 & 0 & 0 & 0 & 0 & 0 & 0 & 0 & 0 & 0 & 0 & 0 & 0    & 0 & 0 & 0 & 0 & 1 & 1 & 1 & 1 & 0 & 0 & 0 & 0 \\
    \hline
    e_3 & 0 & 0 & 1 & 1 & 0 & 0 & 1 & 1 & 0 & 0 & 1 & 1    & 1 & 1 & 1 & 1 & 0 & 0 & 0 & 0 & 1 & 1 & 1 & 1 \\
    \hline
  \end{array}
  \\
  \rule{0pt}{18mm} & & 
  \renewcommand{\extrarowheight}{3pt}
  \begin{array}{@{}*{25}{|c}|}
    \cline{2-25}
    \multicolumn{1}{c|}{} & \multicolumn{12}{c|}{b_1} & \multicolumn{12}{c|}{b_2} \\
    \cline{2-25}
    \multicolumn{1}{c|}{} & \multicolumn{4}{c|}{d_1} & \multicolumn{4}{c|}{d_2} & \multicolumn{4}{c|}{d_3}
                          & \multicolumn{4}{c|}{d_1} & \multicolumn{4}{c|}{d_2} & \multicolumn{4}{c|}{d_3} \\
    \hline
    E \backslash \xi_E & 1 & 2 & 3 & 4 & 1 & 2 & 3 & 4 & 1 & 2 & 3 & 4 & 1 & 2 & 3 & 4 & 1 & 2 & 3 & 4 & 1 & 2 & 3 & 4 \\
    \hline
    e_1 & 1 & 1 & 0 & 0 & 1 & 1 & 0 & 0 & 1 & 1 & 0 & 0    & 0 & 0 & 0 & 0 & 0 & 0 & 0 & 0 & 0 & 0 & 0 & 0 \\
    \hline
    e_2 & 0 & 0 & 0 & 0 & 0 & 0 & 0 & 0 & 0 & 0 & 0 & 0    & 0 & 0 & 0 & 0 & 1 & 1 & 1 & 1 & 0 & 0 & 0 & 0 \\
    \hline
    e_3 & 0 & 0 & 1 & 1 & 0 & 0 & 1 & 1 & 0 & 0 & 1 & 1    & 1 & 1 & 1 & 1 & 0 & 0 & 0 & 0 & 1 & 1 & 1 & 1 \\
    \hline
  \end{array}
  \\
  \rule{0pt}{17mm}P(F | E,\xi_F) & = &
  \renewcommand{\extrarowheight}{3pt}
  \begin{array}{@{}*{13}{|c}|}
    \cline{2-13}
    \multicolumn{1}{c|}{} & \multicolumn{4}{c|}{e_1} & \multicolumn{4}{c|}{e_2} & \multicolumn{4}{c|}{e_3} \\
    \hline
    F \backslash \xi_F & 1 & 2 & 3 & 4 & 1 & 2 & 3 & 4 & 1 & 2 & 3 & 4 \\
    \hline
    f_1 & 1 & 0 & 0 & 0 & 1 & 0 & 0 & 0 & 1 & 0 & 0 & 0 \\
    \hline
    f_2 & 0 & 1 & 0 & 0 & 0 & 0 & 0 & 0 & 0 & 1 & 0 & 0 \\
    \hline
    f_3 & 0 & 0 & 1 & 1 & 0 & 1 & 1 & 1 & 0 & 0 & 1 & 1 \\
    \hline
    \end{array}
\end{array}
\end{displaymath}
\end{minipage}

\clearpage

All the above probability distributions, together with the structure of
Figure~\ref{cm_fig2}.a, form the Bayesian network corresponding to the
causal model of Figure~\ref{cm_fig1}. Unfortunately,
this Bayesian network still includes the disturbance nodes whereas the
one that we are looking for is that of Figure~\ref{cm_fig2}.b. Fortunately, it is
easy to remove these disturbance variables: it is sufficient to
marginalize them out from the joint distribution of the Bayesian
network of Figure~\ref{cm_fig2}.a, as shown below.

\begin{figure}[htb]
  \centerline{
    \begin{tikzpicture}[
      mynode/.style={circle, draw, font=\scriptsize,
        bottom color = red!5, top color = red!30,
        black, circular drop shadow , text = black, 
        inner sep=0.4mm, minimum size=4.5mm,
        node distance=6mm and 4mm},      
      mynode2/.style={circle, dotted, draw, font=\scriptsize,
        bottom color = BlockBodyColor!50, top color = BlockBodyColor,
        black, circular drop shadow , text = black, 
        inner sep=0.4mm, minimum size=4.5mm,
        node distance=6mm and 4mm},
      myedge/.style={thick, ->,
        >={Stealth[inset=0pt,length=8pt,angle'=28,round]}},
      myedge2/.style={myedge, decorate,decoration={snake,amplitude=1.5pt,pre length=1pt,post length=3.3pt}}
      ]
      \node[mynode] (E) {$E$};
      \node[mynode, above left=  of E] (C) {$C$};
      \node[mynode, above right= of E] (D) {$D$};
      \node[mynode, above left=  of C] (A) {$A$};
      \node[mynode, above right= of C] (B) {$B$};
      \node[mynode, below= of E] (F) {$F$};

      \draw[myedge] (A) -- (C);
      \draw[myedge] (B) -- (C);
      \draw[myedge] (B) -- (D);
      \draw[myedge] (C) -- (E);
      \draw[myedge] (D) -- (E);
      \draw[myedge] (E) -- (F);

      \node[mynode, above left=  of A] (eA) {$\xi_A$};
      \node[mynode, above left=  of B] (eB) {$\xi_B$};
      \node[mynode, left= 7mm of C] (eC) {$\xi_C$};
      \node[mynode, left= 7mm of E] (eE) {$\xi_E$};
      \node[mynode, left= 7mm of F] (eF) {$\xi_F$};
      \node[mynode, above right= of D] (eD) {$\xi_D$};

      \draw[myedge] (eA) -- (A);
      \draw[myedge] (eB) -- (B);
      \draw[myedge] (eC) -- (C);
      \draw[myedge] (eE) -- (E);
      \draw[myedge] (eF) -- (F);
      \draw[myedge] (eD) -- (D);

      \node[yshift=-16mm, anchor=north] at ($(C)!0.5!(F)$)
      {\begin{tabular}{l@{\ }l}
         a) & Bayesian network with \\
            & disturbance nodes
       \end{tabular}};

      \node[mynode, right=40mm of E] (E2) {$E$};
      \node[mynode, above left=  of E2] (C2) {$C$};
      \node[mynode, above right= of E2] (D2) {$D$};
      \node[mynode, above left=  of C2] (A2) {$A$};
      \node[mynode, above right= of C2] (B2) {$B$};
      \node[mynode, below= of E2] (F2) {$F$};

      \draw[myedge] (A2) -- (C2);
      \draw[myedge] (B2) -- (C2);
      \draw[myedge] (B2) -- (D2);
      \draw[myedge] (C2) -- (E2);
      \draw[myedge] (D2) -- (E2);
      \draw[myedge] (E2) -- (F2);

      \node[yshift=-16mm, anchor=north] at ($(C2)!0.5!(F2)$) {b) the final Bayesian network};
    \end{tikzpicture}
  }
  \caption{\label{cm_fig2}The Bayesian networks corresponding to the causal model.}
\end{figure}
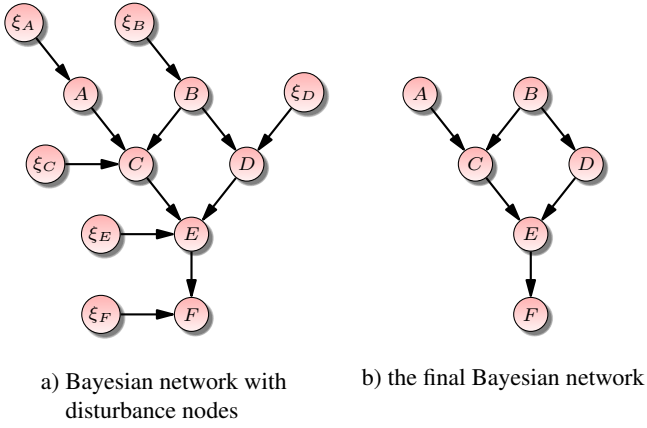

Let $\GGG$ and $\Xi$ denote the graph of Figure~\ref{cm_fig2}.b and Set
$\{\xi_A, \xi_B, \xi_C, \xi_D, \xi_E, \xi_F\}$ respectively.
Then the joint distribution of the Bayesian network of Figure~\ref{cm_fig2}.a is equal to:
\begin{displaymath}
  P(\XXX, \Xi) = \prod_{X \in \XXX} P(\xi_X) \times P(X | \Pa_{\GGG}(X), \xi_X),
\end{displaymath}
and marginalizing out the disturbance variables corresponds to computing:
\begin{eqnarray*}
  P(\XXX) & = & \sum_{\xi \in \Xi} \prod_{X \in \XXX} P(\xi_X) \times
                P(X | \Pa_{\GGG}(X), \xi_X),  \\
          & = & \prod_{X \in \XXX} \sum_{\xi_X} P(\xi_X) \times
                P(X | \Pa_{\GGG}(X), \xi_X) \\
          & = & \prod_{X \in \XXX} P(X | \Pa_{\GGG}(X)), 
\end{eqnarray*}
which corresponds to the decomposition of Figure~\ref{cm_fig2}.b.
Therefore, for each variable $X$ in $\XXX$, we just need to multiply the
CPT of $X$ by the probability distribution of $\xi_X$ and, then,
marginalize out $\xi_X$. This results in the following computations, which
can also be found in the jupyter notebook at
\url{https://pageperso.lis-lab.fr/christophe.gonzales/research/notebooks/ecai2024.ipynb}:

\begin{displaymath}
  \begin{array}{r@{\ }c@{\ }l}
    P(A, \xi_A) & = & P(A | \xi_A) \times P(\xi_A) \\
    & = &
    \renewcommand{\extrarowheight}{3pt}
    \begin{array}{@{}*{5}{|c}|}
      \hline
      A \backslash \xi_A & 1 & 2 & 3 & 4 \\
      \hline                   
      a_1 & 0.2 & 0.1 & 0.0 & 0.0 \\
      \hline
      a_2 & 0.0 & 0.0 & 0.3 & 0.4 \\
      \hline
    \end{array}
  \end{array}
\end{displaymath}

\newpage

Hence, marginalizing out $\xi_A$, we have that:

\begin{displaymath}
  \begin{array}{r@{\ }c@{\ }l}
    P(A) & = & \displaystyle \sum_{\xi_A} P(A, \xi_A) \\
    & = & 
    \renewcommand{\extrarowheight}{3pt}
    \begin{array}{@{}c|c|}
      \cline{2-2}
      a_1 & 0.3 \\
      \cline{2-2}
      a_2 & 0.7 \\
      \cline{2-2}
    \end{array}
  \end{array}
\end{displaymath}

Similarly, we have that:

\begin{displaymath}
  \begin{array}{r@{\ }c@{\ }l}
    P(B) & = & \displaystyle \sum_{\xi_B} P(B | \xi_B) \times P(\xi_B) \\
    & = & \displaystyle \sum_{\xi_B}
    \renewcommand{\extrarowheight}{3pt}
    \begin{array}{@{}*{5}{|c}|}
      \hline
      B \backslash \xi_B & 1 & 2 & 3 & 4 \\
      \hline                   
      b_1 & 0.0 & 0.4 & 0.0 & 0.1 \\
      \hline
      b_2 & 0.2 & 0.0 & 0.3 & 0.0 \\
      \hline
    \end{array} \\
    & = &
    \rule{0pt}{7mm}\renewcommand{\extrarowheight}{3pt}
    \begin{array}{@{}c|c|}
      \cline{2-2}
      b_1 & 0.5 \\
      \cline{2-2}
      b_2 & 0.5 \\
      \cline{2-2}
    \end{array}
  \end{array}
\end{displaymath}

\begin{displaymath}
  \begin{array}{r@{\ }c@{\ }l}
    P(D, \xi_D | B) & = & P(D | B, \xi_D) \times P(\xi_D) \\
    & = &
    \rule{0pt}{16mm}\renewcommand{\extrarowheight}{3pt}
    \begin{array}{@{}*{9}{|c}|}
      \cline{2-9}
      \multicolumn{1}{c|}{} & \multicolumn{4}{c|}{b_1} & \multicolumn{4}{c|}{b_2} \\
      \hline
      D \backslash \xi_D & 1 & 2 & 3 & 4 & 1 & 2 & 3 & 4 \\
      \hline
      d_1 & 0.5 & 0.0 & 0.1 & 0.0 & 0.0 & 0.0 & 0.0 & 0.0 \\
      \hline
      d_2 & 0.0 & 0.0 & 0.0 & 0.0 & 0.0 & 0.3 & 0.0 & 0.0 \\
      \hline
      d_3 & 0.0 & 0.3 & 0.0 & 0.1 & 0.5 & 0.0 & 0.1 & 0.1 \\
      \hline
    \end{array}
  \end{array}
\end{displaymath}

So we have that:

\begin{displaymath}
  \begin{array}{r@{\ }c@{\ }l}
    P(D | B) & = & \displaystyle\sum_{\xi_D} P(D, \xi_D | B) \\
    & = &
    \rule{0pt}{14mm}\renewcommand{\extrarowheight}{3pt}
    \begin{array}{@{}*{3}{|c}|}
      \hline
      D \backslash B & b_1 & b_2 \\
      \hline                   
      d_1 & 0.6 & 0.0 \\
      \hline
      d_2 & 0.0 & 0.3 \\
      \hline
      d_3 & 0.4 & 0.7 \\
      \hline
    \end{array}
  \end{array}
\end{displaymath}

For variables $C$, $E$ and $F$, the tables, which are large,  are provided on the next page.

\newpage

\begin{displaymath}
  \begin{array}{r@{\ }c@{\ }l}
  P(C, \xi_C | A, B) & = & P(C | A, B, \xi_C) \times P(\xi_C) \\
  & = & \rule{0pt}{20mm}
  \renewcommand{\extrarowheight}{3pt}
  \begin{array}{@{}*{17}{|c}|}
    \cline{2-17}
    \multicolumn{1}{c|}{} & \multicolumn{8}{c|}{a_1} & \multicolumn{8}{c|}{a_2} \\
    \cline{2-17}
    \multicolumn{1}{c|}{} & \multicolumn{4}{c|}{b_1} & \multicolumn{4}{c|}{b_2} & \multicolumn{4}{c|}{b_1} & \multicolumn{4}{c|}{b_2} \\
    \hline
    C \backslash \xi_C & 1 & 2 & 3 & 4 & 1 & 2 & 3 & 4 & 1 & 2 & 3 & 4 & 1 & 2 & 3 & 4 \\
    \hline
    a_1 & 0.3 & 0.3 & 0.0 & 0.0 & 0.3 & 0.3 & 0.0 & 0.0 & 0.0 & 0.0 & 0.0 & 0.0 & 0.0 & 0.0 & 0.0 & 0.0 \\
    \hline
    a_2 & 0.0 & 0.0 & 0.0 & 0.0 & 0.0 & 0.0 & 0.0 & 0.0 & 0.3 & 0.3 & 0.0 & 0.0 & 0.3 & 0.3 & 0.0 & 0.0 \\
    \hline
    b_1 & 0.0 & 0.0 & 0.2 & 0.2 & 0.0 & 0.0 & 0.0 & 0.0 & 0.0 & 0.0 & 0.2 & 0.2 & 0.0 & 0.0 & 0.0 & 0.0 \\
    \hline
    b_2 & 0.0 & 0.0 & 0.0 & 0.0 & 0.0 & 0.0 & 0.2 & 0.2 & 0.0 & 0.0 & 0.0 & 0.0 & 0.0 & 0.0 & 0.2 & 0.2 \\
    \hline
  \end{array}
\end{array}
\end{displaymath}

\noindent So we have that:

\begin{displaymath}
  \begin{array}{r@{\ }c@{\ }l}
    P(C|A,B) & = & \displaystyle \sum_{\xi_C} P(C, \xi_C | A, B) \\
    & = &
    \renewcommand{\extrarowheight}{3pt}
    \begin{array}{@{}*{5}{|c}|}
      \cline{2-5}
      \multicolumn{1}{c|}{} & \multicolumn{2}{c|}{a_1} & \multicolumn{2}{c|}{a_2} \\
      \hline
      C \backslash B & b_1 & b_2 & b_1 & b_2 \\
      \hline
      a_1 & 0.6 & 0.6 & 0.0 & 0.0 \\
      \hline
      a_2 & 0.0 & 0.0 & 0.6 & 0.6 \\
      \hline
      b_1 & 0.4 & 0.0 & 0.4 & 0.0 \\
      \hline
      b_2 & 0.0 & 0.4 & 0.0 & 0.4 \\
      \hline
    \end{array}
  \end{array}
\end{displaymath}

\vspace{3mm}

\noindent Similarly, we have that:

\begin{displaymath}
  \begin{array}{r@{\ }c@{\ }l}
    P(E | C, D) & = & \displaystyle \sum_{\xi_E} P(E | C, D, \xi_E) \times P(\xi_E) \\
    & = & \rule{0pt}{15mm}\renewcommand{\extrarowheight}{3pt}
    \begin{array}{@{}*{13}{|c}|}
      \cline{2-13}
      \multicolumn{1}{c|}{} & \multicolumn{3}{c|}{a_1} & \multicolumn{3}{c|}{a_2}  & \multicolumn{3}{c|}{b_1} & \multicolumn{3}{c|}{b_2} \\
      \hline
      E \backslash D & d_1 & d_2 & d_3 & d_1 & d_2 & d_3 & d_1 & d_2 & d_3 & d_1 & d_2 & d_3 \\
      \hline
      e_1 & 0.3 & 0.3 & 0.3 & 0.0 & 0.0 & 0.0 & 0.3 & 0.3 & 0.3 & 0.0 & 0.0 & 0.0 \\
      \hline
      e_2 & 0.0 & 0.0 & 0.0 & 0.0 & 1.0 & 0.0 & 0.0 & 0.0 & 0.0 & 0.0 & 1.0 & 0.0 \\
      \hline
      e_3 & 0.7 & 0.7 & 0.7 & 1.0 & 0.0 & 1.0 & 0.7 & 0.7 & 0.7 & 1.0 & 0.0 & 1.0 \\
      \hline
    \end{array}
  \end{array}
\end{displaymath}

\vspace{3mm}

\noindent Finally, we have that:

\begin{displaymath}
  \begin{array}{r@{\ }c@{\ }l}
     P(F, \xi_F | E) & = & P(F | E, \xi_F) \times P(\xi_F) \\
     & = & \rule{0pt}{15mm}\renewcommand{\extrarowheight}{3pt}
     \begin{array}{@{}*{13}{|c}|}
       \cline{2-13}
       \multicolumn{1}{c|}{} & \multicolumn{4}{c|}{e_1} & \multicolumn{4}{c|}{e_2} & \multicolumn{4}{c|}{e_3} \\
       \hline
       F \backslash \xi_F & 1 & 2 & 3 & 4 & 1 & 2 & 3 & 4& 1 & 2 & 3 & 4 \\
       \hline
       f_1 & 0.2 & 0.0 & 0.0 & 0.0 & 0.2 & 0.0 & 0.0 & 0.0 & 0.2 & 0.0 & 0.0 & 0.0 \\
       \hline
       f_2 & 0.0 & 0.3 & 0.0 & 0.0 & 0.0 & 0.0 & 0.0 & 0.0 & 0.0 & 0.3 & 0.0 & 0.0 \\
       \hline
       f_3 & 0.0 & 0.0 & 0.3 & 0.2 & 0.0 & 0.3 & 0.3 & 0.2 & 0.0 & 0.0 & 0.3 & 0.2 \\
       \hline
     \end{array}
  \end{array}
\end{displaymath}

\noindent So:

\begin{displaymath}
  \begin{array}{r@{\ }c@{\ }l}
    P(F|E) & = & \displaystyle \sum_{\xi_F} P(F, \xi_F | E) \\
    & = &
    \renewcommand{\extrarowheight}{3pt}
    \begin{array}{@{}*{4}{|c}|}
      \hline
      F \backslash E & e_1 & e_2 & e_3\\
      \hline
      f_1 & 0.2 & 0.2 & 0.2 \\
      \hline
      f_2 & 0.3 & 0.0 & 0.3 \\
      \hline
      f_3 & 0.5 & 0.8 & 0.5 \\
      \hline
    \end{array}
  \end{array}
\end{displaymath}

\end{document}